\newif\ifarxiv 
\newif\ifreview 
\newif\iffinal 
\journal{Artificial Intelligence}
\newtheorem{theorem}             {Theorem}
\newtheorem{lemma}      [theorem]{Lemma}
\newtheorem{definition} [theorem]{Definition}
\newcommand{\ab}{\hspace{0.125em}}                        
\newcommand{\ie}{\hbox{i.\ab e.}\xspace}                  
\newcommand{\eg}{\hbox{e.\ab g.}\xspace}                  
\newcommand{\Real}{\mathbb{R}}
\newcommand{\Natural}{\mathbb{N}}
\DeclareMathOperator{\Unif}{Unif}                         
\DeclareMathOperator{\Bin}{Bin}                           
\newcommand*{\bigO}{\mathcal{O}}
\DeclareMathOperator{\poly}{poly}                         
\DeclareMathOperator{\Hamming}{H}                         
\newcommand{\prob}[1]{\Pr\left(#1\right)}                 
\newcommand{\Prob}[1]{\Pr\left(#1\right)}                 
\newcommand{\expect}[1]{\mathrm{E}\left[#1\right]}        
\newcommand{\filtuc}[1]{\mathcal{F}_{#1}}                 
\newcommand{\filt}[1]{\filtuc{#1}}
\newcommand{\RRRMO}{\ensuremath{\textsc{RR}_{\mathrm{MO}}}\xspace}  
\newcommand{\posFit}{\ensuremath{G}\xspace}               
\newcommand{\frontP}{\ensuremath{F}\xspace}               
\newcommand{\lfrontP}{\ensuremath{F'}\xspace}             
\newcommand{\LZ}{\textsc{LZ}\xspace}                      
\newcommand{\TZ}{\textsc{TZ}\xspace}                      
\newcommand{\ones}[1]{|#1|_1}
\newcommand{\zeros}[1]{|#1|_0}
\newcommand{\OM}{\textsc{OneMax}\xspace}                           
\newcommand{\JUMP}{\textsc{Jump}\xspace}                           
\newcommand{\RRRfull}{\textsc{RealRoyalRoad}\xspace}               
\newcommand{\COCZ}{\textsc{COCZ}\xspace}                           
\newcommand{\COCZfull}{\textsc{CountingOnesCountingZeroes}\xspace} 
\newcommand{\OMM}{\textsc{OMM}\xspace}                             
\newcommand{\OMMfull}{\textsc{OneMinMax}\xspace}                   
\newcommand{\LOTZ}{\textsc{LOTZ}\xspace}                           
\newcommand{\LOTZfull}{\textsc{LeadingOnesTrailingZeroes}\xspace}  
\newcommand{\OJZJfull}{\textsc{OneJumpZeroJump}\xspace}            
\newcommand{\cdist}{\ensuremath{\textsc{cDist}}\xspace}   
\newcommand{\muplusga}{($\mu$+$\lambda$)~GA\xspace}       
\newcommand{\uRRRMO}{\ensuremath{\mathrm{u}\textsc{RR}_{\mathrm{MO}}^*}\xspace} 
\newcommand{\uRRRMOx}[2]{\ensuremath{\mathrm{u}\textsc{RR}_{\mathrm{MO}}^{#1,#2}}\xspace} 
\newcommand{\uRRRMOc}{\ensuremath{\mathrm{u}\textsc{RR}_{\mathrm{MO}}}\xspace} 
\newcommand{\permut}[2]{\ensuremath{#1(#2)}\xspace}         
\newcommand{\strl}[1]{\ensuremath{#1_\ell}\xspace}          
\newcommand{\strr}[1]{\ensuremath{#1_r}\xspace}             
\newcommand{\subsetU}{\ensuremath{U}\xspace}                
\newcommand{\subsetP}{\ensuremath{P}\xspace}                %
\newcommand{\subsetC}{\ensuremath{C}\xspace}                
\newcommand{\subsetT}{\ensuremath{T}\xspace}                %
\newcommand{\subsetK}{\ensuremath{K}\xspace}                
\newcommand{\subsetW}{\ensuremath{W}\xspace}                
\newcommand{\subsetL}{\ensuremath{L}\xspace}                
\newcommand{\onesL}[1]{\ensuremath{X_{#1}}\xspace}          
\newcommand{\maxOZL}[1]{\ensuremath{Y_{#1}}\xspace}         
\newcommand{\LO}{\textsc{LO}\xspace}                        
\newcommand{\TOs}{\textsc{TO}\xspace}                       
\newcommand{\todo}[1]{}
\newcommand{\newedit}[1]{#1}
\newcommand{\brandnewedit}[1]{#1}
\newcommand{\andre}[1]{}
\newcommand{\dirk}[1]{}
\newcommand{\cuong}[1]{}
\newcommand{\bahare}[1]{}
\newcommand{\todo}[1]{\textcolor{red}{[TODO: #1]}}
\newcommand{\newedit}[1]{\textcolor{blue}{#1}}
\newcommand{\brandnewedit}[1]{\textcolor{cyan}{#1}}
\newcommand{\andre}[1]{\textcolor{green!50!black}{[(Andre) #1]}}
\newcommand{\dirk}[1]{\textcolor{magenta}{[(Dirk) #1]}}
\newcommand{\cuong}[1]{\textcolor{orange}{[(Cuong) #1]}}
\newcommand{\bahare}[1]{\textcolor{purple}{[(Bahare) #1]}}
\newcommand*\linenomathpatch[1]{
  \cspreto{#1}{\linenomath}%
  \cspreto{#1*}{\linenomath}%
  \csappto{end#1}{\endlinenomath}%
  \csappto{end#1*}{\endlinenomath}%
}
\title{Crossover Can Guarantee Exponential Speed-Ups in Evolutionary Multi-Objective Optimisation}
\author[1]{Duc-Cuong Dang}
\author[1]{Andre Opris}
\author[1]{Dirk Sudholt}
\affil[1]{University of Passau, Passau, Germany}
\date{}
\begin{document}

\ifarxiv
\maketitle
\else
\begin{frontmatter}

\title{Crossover Can Guarantee Exponential Speed-Ups in Evolutionary Multi-Objective Optimisation}
\author[1]{Duc-Cuong~Dang}\ead{duccuong.dang@uni-passau.de}
\author[1]{Andre~Opris}\ead{andre.opris@uni-passau.de}
\author[1]{Dirk~Sudholt\corref{cor1}}\ead{dirk.sudholt@uni-passau.de}
\cortext[cor1]{Corresponding author}
\affiliation[1]{organization={Chair of Algorithms for Intelligent Systems, University of Passau},
    city={Passau},
    country={Germany}}
\fi

\begin{abstract}

Evolutionary algorithms are popular algorithms for multiobjective optimisation (also called Pareto optimisation) as they use a population to store trade-offs
between different objectives. Despite their popularity, the theoretical
foundation of multiobjective evolutionary optimisation (EMO) is still in its
early development.
Fundamental questions such as the benefits of the crossover
operator are
still not fully
understood.
We provide a theoretical analysis of the well-known EMO algorithms GSEMO and
NSGA-II to showcase the possible advantages of crossover: we propose classes of ``royal road'' functions on which these algorithms cover the whole Pareto front in expected polynomial
time if crossover is being used. But when disabling crossover, they
require exponential time in expectation to cover the Pareto front.
The latter even holds for a large class of black-box algorithms using any elitist selection and any unbiased mutation operator. Moreover, even the expected time to create a single Pareto-optimal search point is exponential. We provide two different function classes, one tailored for one-point crossover and another one tailored for uniform crossover, and we show that immune-inspired hypermutations cannot avoid exponential optimisation times. Our work shows the first example of an exponential performance gap through the use of
crossover for the widely used NSGA-II algorithm and contributes to a deeper understanding of its limitations and capabilities.
\end{abstract}

\ifarxiv
\textbf{Keywords: evolutionary computation, runtime analysis, recombination, multi-objective optimisation, unbiased black box algorithms, \newedit{hypermutation}}
\else
\begin{keyword}
evolutionary computation \sep runtime analysis \sep recombination\sep multi-objective optimisation\sep unbiased black box algorithms\newedit{\sep \newedit{hypermutation}}
\end{keyword}

\end{frontmatter}

\ifreview
\linenumbers
\fi

\fi


\section{Introduction}\label{sec:intro}

Many optimisation problems have multiple conflicting objectives and the aim is
to find a set of Pareto-optimal solutions. Evolutionary algorithms (EAs) are
general-purpose optimisers that use principles from natural evolution such as
mutation, crossover (recombination) and selection to evolve a population
(multi-set) of candidate solutions. EAs such as the popular algorithm
NSGA-II~\citep{Deb2002} are well suited for this task as they are able to use
their population to store multiple trade-offs between objectives.
%
However, the theoretical understanding of evolutionary multiobjective
optimisation (EMO) is lagging far behind its success in
practice~\citep{ZhengLuiDoerrAAAI22}. There is little understanding on how the
choice of search operators and parameters affects performance 
in multiobjective settings.

In single-objective evolutionary optimisation, a rigorous theory has emerged
over the past 25 years. It led to a better understanding of the working
principles of EAs via performance guarantees and it inspired the
design of novel EAs with better performance guarantees, e.\,g.\
choosing mutation rates from a heavy-tailed distribution to enable large
changes~\citep{Doerr2017-fastGA}, changing the order of crossover and mutation
and amplifying the probability of improving
mutations~\citep{Doerr2015}, parent selection preferring worse search
points~\citep{CorusLOW21} or adapting mutation rates during the run~\citep{Doerr2019opl}.

In particular, the importance of the crossover operator is not well understood, despite being a
topic of intensive, ongoing research in evolutionary computation and in
population genetics~\citep{paixao_unified_2015}. In single-objective
optimisation there is a body of works on the usefulness of
crossover~\citep[Section~8.4]{Sudholt2018} on illustrative pseudo-Boolean example
problems~\citep{Jansen2005c,Storch2004,Koetzing2011,Dang2017,Sudholt2016,Corus2018a,Doerr2015}
and problems from combinatorial optimisation such as
    shortest paths~\citep{Doerr2012},
    graph colouring problems~\citep{Fischer2005,Sudholt2005} and
    the closest string problem~\citep{Sutton21}.

However, in EMO results are scarce. Understanding and rigorously analysing the
dynamic behaviour of EAs is hard enough in single-objective optimisation. EMO
brings about additional challenges as there is no total order between search
points. Search points may be incomparable due to trade-offs between different
objectives. The most widely used EMO algorithm NSGA-II~\citep{Deb2002} imposes a
total order by using non-dominated sorting (sorting the population according to
ranks based on dominance) and a diversity score called crowding distance to
break ties between equal ranks. Understanding this ranking is non-trivial, and
the first rigorous runtime analyses of NSGA-II were only published at
AAAI~2022~\citep{ZhengLuiDoerrAAAI22}.

\newedit{\subsection{Our Contribution}}

\brandnewedit{In this paper, we demonstrate the possible advantages of
crossover for EMO by presenting examples of $n$-bit pseudo-Boolean functions
on which the use of crossover has a drastic effect on performance. We prove using rigorous runtime analysis that well-known EMO algorithms GSEMO and NSGA-II using crossover can find the whole Pareto front in polynomial expected time, while these and large classes of black-box optimisation algorithms without crossover require exponential expected time.
These functions therefore can be regarded as a ``royal road'' for the success of crossover in multiobjective optimisation, similar to previous results for single-objective optimisation~\citep{Jansen2005c}.
To our knowledge, this is the first proof of
an exponential performance gap for the use of crossover for NSGA-II.
In parallel independent work, \citet{Doerr2023} showed a polynomial gap for the use of
crossover in NSGA-II.
}

\brandnewedit{More specifically, we propose a test function \RRRMO as a ``royal road'' for one-point crossover. Our function is inspired by Jansen and Wegener's royal road functions for single-objective optimisation~\citep{Jansen2005c}. The function contains a fitness valley of exponential size and very poor fitness. In order to locate the Pareto front, black-box algorithms typically have to cross this large fitness valley. This is hard for all unbiased mutation operators (operators treating bit values and all bit positions symmetrically) as a linear number of bits have to be flipped and the probability of choosing the right bits to flip in order to hit the Pareto front is exponentially small. However, the function is designed such that one-point crossover can combine the prefix with the suffix of two non-dominated solutions that represent different trade-offs stored in the population of GSEMO and NSGA-II, respectively. We prove that these algorithms can find the whole Pareto front of \RRRMO in expected time $O(n^4)$.
}

\brandnewedit{Since one-point crossover lacks the symmetry with respect to bit values (neighbouring bits in the bit string have a higher chance to come from the same parent), one may ask whether this positional bias is the reason for the success of algorithms with one-point crossover. To investigate this, we also consider the somatic contiguous hypermutation operator (or hypermutation for short) from artificial immune systems that has an inherent positional bias and is able to mutate a contiguous interval of bits in the bit string. We prove that also this operator is unable to find the whole Pareto front, supporting our conclusion that the ability of one-point crossover to combine information from different parents is crucial for optimising \RRRMO.}

\brandnewedit{
We also design a function \uRRRMO as a ``royal road'' for uniform crossover. As first observed by~\citet{Jansen2005c} for single-objective royal road functions, designing a royal road function for uniform crossover is generally harder than for one-point crossover as uniform crossover can create an exponential number of offspring in the Hamming distance of the two parents. We extend the construction from~\citet{Jansen2005c} towards multiobjective optimisation in such a way that the function makes GSEMO and NSGA-II find the Pareto front in expected time $O(n^3)$, while large classes of mutation-only algorithms require exponential time. This class includes all unbiased mutation operators as well as the hypermutation operator. The function is constructed such that a large fitness valley has to be crossed. GSEMO and NSGA-II with crossover can do so by crossing two non-dominated solutions as parents whose bit values agree in the left half of the string and whose bit values are complementary in the right half of the string. The uniform crossover is likely to create an even balance of ones and zeros in the right half, while keeping all bits in the left half, on which both parents agree, intact. It thus has a good probability of finding a target set of exponential size from which it is easy to cover the Pareto front. On this function unbiased mutation fails to find the target as it must treat all bit positions symmetrically and so is not able to keep half the bits unchanged while also making large changes to the other half of bits. The same holds for the hypermutation operator if the order of bits in the function definition is permuted in such a way that the operator has no way of choosing precisely the bits that should be mutated. Since unbiased mutations and uniform crossover operator independent from the order of bit positions, all other results hold for arbitrary permutations of bits.
}
\newedit{\brandnewedit{These are captured by the function class \uRRRMOc
        and all} our results are summarised in Table~\ref{tab:summary-results}.}

\begin{table}[ht]
\begin{center}
\begin{tabular}{ccccc}
	\toprule
	\textbf{problem}
	& \textbf{algorithm(s)}
	& \textbf{crossover}
	& \textbf{mutation}
	& \textbf{bounds on $\boldsymbol{\expect{T}}$}\\
	\midrule
	\RRRMO & GSEMO, NSGA-II & none & standard bit mutation & $n^{\Omega(n)}$ (Theorem~\ref{thm:gsemo-pc-zero})\\
	\RRRMO & ($\mu$+$\lambda$)~Black box & none & any unbiased & $2^{\Omega(n)}$ (Theorem~\ref{thm:elitist-blackbox-unary-unbiased})\\
	\RRRMO & GSEMO & none & hypermutation & $2^{\Omega(n)}$ (Theorem~\ref{thm:gsemo-pc-zero-hypermutation})\\
	\RRRMO & GSEMO & one-point & standard bit mutation & $\bigO\left(\frac{n^4}{1-p_c} + \frac{n}{p_c}\right)$ (Theorem~\ref{thm:gsemo})\\
	\RRRMO & NSGA-II & one-point & standard bit mutation & $\bigO\left(\frac{\mu n^3}{1-p_c} + \frac{\mu^2}{n p_c}\right)$ (Theorem~\ref{thm:nsga-ii})\\
	\midrule
	\uRRRMOc & GSEMO, NSGA-II & none & standard bit mutation & $n^{\Omega(n)}$ (Theorem~\ref{thm:uRRMO-gsemo-pczero-stdbit})\\
	\uRRRMOc & GSEMO, NSGA-II & none & any unbiased & $2^{\Omega(n)}$ (Theorem~\ref{thm:uRRMO-gsemo-pczero-unbiasedvar})\\
	\uRRRMOc & GSEMO, NSGA-II & none & hypermutation & $2^{\Omega(n)}$ (Theorem~\ref{thm:uRRMO-gsemo-pczero-hypermut})\\
	\uRRRMOc & ($\mu$+$\lambda$)~Black box & none & any unbiased & $2^{\Omega(n)}$ (Theorem~\ref{thm:uRRMO-blackbox-unbiasedvar})\\
	\uRRRMOc & ($\mu$+$\lambda$)~Black box & none & hypermutation & $2^{\Omega(n)}$ (Theorem~\ref{thm:uRRMO-blackbox-hypermut})\\
	\uRRRMOc & GSEMO & uniform & standard bit mutation &  $\bigO\left(\frac{n^3}{p_c(1-p_c)}\right)$ (Theorem~\ref{thm:uRRMO-gsemo})\\
	\uRRRMOc & NSGA-II & uniform & standard bit mutation & $\bigO\left(\frac{\mu n^2}{1-p_c} + \frac{\mu^2 n}{p_c}\right)$ (Theorem~\ref{thm:uRRMO-nsgaii})\\
	\bottomrule
\end{tabular}
\end{center}
\caption{Summary of the results for the $\RRRMO$ function and the $\uRRRMOc$
function class. We assume standard bit mutation with mutation rate $1/n$,
hypermutation with any parameter $r\in(0,1]$ and the crossovers (if used) with
probability $p_c\in (0,1]$ to be applied.
\newedit{The bounds on $\expect{T}$ are in terms of number of fitness evaluations.}
The results in \brandnewedit{lines 1, 2, 4, 5}
first
appeared in~\cite{Dang2023}.}\label{tab:summary-results}
\end{table}



As another technical contribution, we refine and generalise previous arguments from the
analysis of NSGA\nobreakdash-II (Lemma~\ref{lem:nsga-ii-protect-layer}) about
the survival of useful search points from function-specific arguments to general
classes of fitness functions.
Our work may serve as a stepping stone towards analyses
of the benefits of crossover on wider problem classes, in the same way that this
was achieved for single-objective optimisation.

\newedit{A preliminary version of this work appeared at AAAI~2023 (see
\citet{Dang2023}) where only \RRRMO was defined and compared against unbiased mutations. In this extended and improved manuscript we added results on hypermutations and the function \uRRRMO that requires a completely different construction from \RRRMO, along with corresponding runtime analyses for GSEMO, NSGA-II and classes of algorithms without crossover that require exponential time.}

\newedit{\subsection{Related Work}}
In single-objective optimisation, the first proof that using crossover can speed
up EAs was provided by~\citet{Jansen2002} for the function class
$\JUMP_k$, where a fitness valley of size~$k$ has to be crossed.
For $k=\log n$, the performance gap was between polynomial and superpolynomial
times.
These results were refined
in~\citep{Koetzing2011, Dang2017}.

\newedit{Most relevant to our work is the seminal paper by~\citet{Jansen2005c} that showed the first exponential performance gaps for the use of crossover.
The functions were called ``real royal road'' functions as previous attempts at defining ``royal road'' functions were unsuccessful~\citep{Mitchell1992,Forrest1993}.
\Citet{Jansen2005c} defined a royal road function for one-point crossover called \RRRfull, which
encourages EAs to evolve
strings with all 1-bits gathered in a single block, and then one-point crossover can easily assemble the optimal string when choosing parents whose blocks of 1-bits are located at opposite ends of the bit string. A ($\mu$+1)~Genetic Algorithm with one-point crossover optimises \RRRfull in expected time $O(n^4)$, while all mutation-only EAs need exponential time with overwhelming probability.
\Citet{Jansen2005c} also defined a class of royal road functions for uniform crossover that can be solved by a ($\mu$+1)~Genetic Algorithm using uniform crossover in expected time $O(n^3)$ whereas evolutionary algorithms without crossover need exponential time with overwhelming probability.
Solving these royal road functions required crossover and a population of at least linear size. In follow-on work%
, \citet{Storch2004} presented different royal road functions designed such that similar results could be shown for genetic algorithms using crossover
\brandnewedit{and a small population size of only $2$.}
}

Advantages through crossover were also proven
for combinatorial problems:
shortest paths~\citep{Doerr2012}, graph colouring
problems~\citep{Fischer2005,Sudholt2005} and the closest string
problem~\citep{Sutton21}.
Crossover speeds up hill climbing on $\OM(x)$ that simply counts the number of ones
in $x$ by a constant factor~\citep{Sudholt2016,Corus2018a}. A cleverly designed EA
called (1+($\lambda$,$\lambda$))~GA outperforms the
best mutation-only EAs on \OM by a factor of $\bigO(\log n)$~\citep{Doerr2015,Doerr2018}.
Finally, crossover increases robustness on difficult monotone
pseudo-Boolean functions~\citep{Lengler2020a}.

Early rigorous analysis of EMO focused on simple algorithms, like SEMO (flipping a single bit for mutation) and its variant GSEMO (using standard bit mutations as a global search operator),
without crossover.
\citet{Laumanns2004} introduced two biobjective benchmark functions
    \LOTZfull (\LOTZ) and \COCZfull (\COCZ)
to prove linear and sub-linear speed-ups in the expected optimisation
time of two variants of SEMO over a single-individual algorithm called ($1$+$1$)~EMO.
%
%
\citet{Giel2010} gave an example of a
biobjective function on which an exponential performance gap between
SEMO and ($1$+$1$)~EMO can be proven.
\citet{Covantes2020} proposed the use of diversity measures such as crowding distance
in the parent selection for SEMO.
They proved that the use of a power-law
ranking selection to select parents ranked by the crowding distances yields
a linear speed-up in the expected optimisation time for \LOTZ, and for also the
\OMMfull (\OMM) function.
\citet{Doerr2021} introduced the \OJZJfull function, which generalises
$\JUMP_k$ to the multiobjective setting, to show that GSEMO, in contrast to SEMO,
can fully cover the Pareto set, and to further prove that 
the performance of GSEMO can be
improved with the use of heavy-tailed mutation.

NSGA-II \citep{Deb2002} is a practical and hugely popular reference algorithm for EMO,
however its theoretical analysis only succeeded recently.
%
%
\citet{ZhengLuiDoerrAAAI22} conducted the first runtime analysis of NSGA-II without
crossover and proved expected time bounds $\bigO(\mu n^{2})$ and
$\bigO(\mu n \log n)$ to find the whole Pareto set of \LOTZ and \OMM,
resp., if the population size $\mu$ is at least four times the size of the Pareto set.
%
%
The drawback of using a too small population size was further studied in
\citet{Zheng2022}. The authors proposed
an incremental procedure to compute the crowding distances
as an improvement to the standard algorithm.
%
\citet{Qu2022PPSN} showed that
heavy-tailed mutations presented for the single objective settings \citep{Doerr2017-fastGA}
are also highly beneficial for EMO.
\newedit{\Citet{Cerf2023} showed that NSGA-II is able to solve the NP-hard bi-objective minimum spanning tree problem. The upper bound $O(m^2 n w_{\max}\log(nw_{\max}))$ on the expected number of iterations depends on the number of edges, $m$, the number of vertices, $n$, and the largest edge weight, $w_{\max}$.}

Only a few works rigorously prove the advantages of crossover in EMO,
even though experimentally they are noticeable (\eg \citet{Qu2022PPSN}).
\citet{Qian2011} proposed the REMO algorithm that initialises the population
with local optima and uses crossover to quickly fill the Pareto set 
of example functions \LOTZ and \COCZ. This constitutes a speedup of
order~$n$ compared to the SEMO algorithm.
They later extended this work to more general function classes and multiobjective
minimum spanning trees~\citep{Qian2013}.
\citet{Qian_Bian_Feng_2020} compared two variants of
GSEMO
with and without crossover, called POSS and PORSS respectively, 
for the sub-set selection problem and showed that the recombination-based GSEMO
is almost always superior. In particular, they provide an exponential performance
gap for constructed instances of sub-set selection.
\citep{Huang2021b} compared the effectiveness of immune-inspired hypermutation operators against classical mutation operators in runtime analysis of a simple multi-objective evolutionary algorithm. The results on four bi-objective optimization problems imply that the hypermutation operators can always achieve the Pareto fronts in polynomial expected runtime and also have advantage in maintaining balance between exploration and exploitation compared to the classical mutation operators.
\citet{Doerr2022} introduced the (1+($\lambda$,$\lambda$))~GSEMO
algorithm and proved that it optimizes \OMM in expected time
$\bigO(n^2)$, a speedup of order $\log n$ compared to GSEMO.

\citet{Bian2022PPSN}
introduced a new parent
selection strategy named stochastic tournament selection using crowding distance to favour diverse parents as in~\citet{Covantes2020} to improve the expected
running time upper bounds of \LOTZ, \OMM and \COCZ to $\bigO(n^{2})$. Their analysis
relies on crossover to fill the Pareto set quickly. 
However, the work
by~\citet{Covantes2020} already showed that for SEMO on \LOTZ the same
performance guarantee can be obtained,
\brandnewedit{with a similar parent selection mechanism
called power-law ranking},
and that crossover is not required for an $\bigO(n^2)$ bound.

\brandnewedit{Finally, \citet{Doerr2023}, which appeared at the same time as our preliminary work~\citep{Dang2023}, gives the joint first proof of speedups through crossover in NSGA-II. In contrast to this work, they consider a benchmark function \textsc{OneJumpZeroJump} based on the well-known \textsc{Jump} function and show a polynomial speedup through crossover. While their function is conceptually simpler than ours, the resulting speedup is polynomial, whereas ours is exponential. Their work uses uniform crossover and we provide results for both one-point crossover and uniform crossover.}

\section{Preliminaries}\label{sec:prelim} 
Consider maximising a function 
$f(x)=(f_1(x),\dots,f_d(x))$ where $f_i\colon \{0,1\}^n \rightarrow \Natural_0$
for $1\leq i \leq d$.
Given two search points $x, y \in \{0, 1\}^n$,
    $x$ \emph{weakly dominates} $y$, denoted by $x \succeq y$,
    if $f_{i}(x) \geq f_{i}(y)$ for all $1\leq i \leq d$;
    and $x$ \emph{dominates} $y$, denoted $x \succ y$,
    if one inequality is strict.
Each solution that is not dominated by any other in $\{0, 1\}^n$ is called \emph{Pareto-optimal}.
A set of
these solutions
that cover all possible non-dominated fitness values of $f$
is called a \emph{Pareto-optimal set} (or \emph{Pareto set} for short) of $f$.
%
\newedit{Let $\mathcal{A}$ be an algorithm that seeks to optimise $f$ through querying the values of $f(x)$. Such queries are called \emph{fitness evaluations}. We assume $\mathcal{A}$ operates in \emph{iterations} (also referred to as \emph{generations}) and that it maintains a multi-set $P_t$ of solutions called \emph{the population}. By $\mu(t)$ we denote the number of fitness evaluations made in iteration~$t$. The running time of $\mathcal{A}$ on $f$ in
terms of generations is defined as
\[
    T^{\mathrm{gen}}_{\mathcal{A}(f)}:=\min\{t\mid F \subseteq P_t \wedge (F \text{ is a Pareto-optimal set})\},
\]
\ie the first generation $t$ where a Pareto-optimal set $F$ is found in $P_t$,
and the running time in terms of fitness evaluations is
$T_{\mathcal{A}(f)}:=\sum_{i=0}^{T^{\mathrm{gen}}_{\mathcal{A}(f)}}\mu(i)$.
For algorithms that make randomised decisions like EAs, we are often interested
in the expected running time $\expect{T_{\mathcal{A}(f)}}$. For a class $\mathbb{F}$
of multi-objective functions, the expected running time on $\mathbb{F}$ is considered as
in the worst case performance:
\[
    \expect{T_{\mathcal{A}(\mathbb{F})}} = \max_{f\in\mathbb{F}} \expect{T_{\mathcal{A}(f)}}.
\]}%
\newedit{For a search point $x \in \{0,1\}^n$, $x_i$ denotes the $i$-th bit of
    $x$ for $i \in \{1,\dots,n\}$. Occasionally, we also use the bit indexing in a
    circular sense, \eg $x_k$ can be also the $(k \bmod n)$-th bit of $x$ and we identify $x_0$ with $x_n$.
Let $\ones{x}$ be the number of ones in $x$, similarly $\zeros{x}$ is the number
of zeroes. Furthermore, we use $\LO(x)$, $\TOs(x)$, $\LZ(x)$ and  $\TZ(x)$
to denote, respectively, the numbers of leading ones, of trailing ones, of leading
zeroes and  of trailing zeros of $x$. For example, $x=111001011011$ has $x_4=0$,
$\ones{x}=8$, $\zeros{x}=4$, $\LO(x)=3$, $\TOs(x)=2$, $\LZ(x)=0$ and
$\TZ(x)=0$.
For two strings $x,y$ of the same length, the Hamming distance between them
is denoted $\Hamming(x,y)$.
For strings (or bit values) $x,y,z,\dots$, their concatenation into a single string
is denoted $(x,y,z,\dots)$.
Let $\sigma\colon[n]\rightarrow[n]$ be any permutation of $[n]$ (\ie $\sigma$ is a bijection),
then for any string $x \in \{0,1\}^n$ the output of permutating the bits of $x$
according to $\sigma$ is denoted $\permut{\sigma}{x}:=(x_{\sigma(1)},\dots,x_{\sigma(n)})$.}
\newedit{Let $\vec{1}$ be the vector of all ones, \ie $\vec{1}:=(1,1,\dots)$.
For any natural number $n$, $[n]$ denotes the set of natural numbers $\{1,\dots,n\}$.
The natural logarithm is denoted $\ln{n}$ and that of base $2$ is denoted $\log{n}$.
The symbol $\oplus$ denotes the XOR (exclusive OR) operator.}


\newedit{\subsection{Analytical Tools}}

\newedit{The well-known method of typical runs (see \cite{Wegener2002} Section~11)
will be used for our analysis of the algorithms, and we will detail this method
when it comes to the proofs. We also use drift analysis \citep{Jun2004}, which has become
a standard tool in runtime analysis~\citep{Lengler2020}.}

\newedit{
\begin{theorem}[Additive drift theorem \citep{Jun2004}]
\label{thm:additive-drift}
Let $(X_t)_{t\geq0}$ be a sequence of non-negative random variables with a
finite state space $S\subseteq \Real^+$ such that $0\in S$. Define
$T := \inf\{t \geq 0 \mid X_{t} = 0\}$.
\begin{enumerate}
\item[(i)] If $\exists \delta>0$ such that
$\forall s\in S\setminus\{0\}, \forall t\geq 0\colon
    \expect{X_{t}-X_{t+1}\mid X_{t}=s}\geq\delta$
then $\expect{T \mid X_0}\leq \frac{X_0}{\delta}$.
\item[(ii)] If $\exists\delta>0$ such that
$\forall s\in S\setminus\{0\}, \forall t\geq 0\colon
    \expect{X_{t}-X_{t+1}\mid X_{t}=s}\leq\delta$
then $\expect{T \mid X_0}\geq \frac{X_0}{\delta}$.
\end{enumerate}
\end{theorem}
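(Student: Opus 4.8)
The plan is to convert the drift hypothesis into a (super- or sub-)martingale statement for a potential that augments $X_t$ by a ``time credit'', and then to extract the bound on $\E{T}$ by a stopping argument combined with a limit that exploits the finiteness of the state space~$S$. Let $\filt{t}$ be the natural filtration of $X_0,\dots,X_t$; then $T$ is a stopping time, since $\{T=t\}=\{X_0\ne 0,\dots,X_{t-1}\ne 0,X_t=0\}\in\filt{t}$. I would work with the stopped process $\tilde X_t:=X_{t\wedge T}$ and the clock $\tau_t:=t\wedge T$, noting the elementary identities $\tau_{t+1}-\tau_t=\indc{T>t}$ and $\tilde X_{t+1}-\tilde X_t=(X_{t+1}-X_t)\indc{T>t}$ (both differences vanish once the process has reached~$0$). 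On the event $\{T>t\}$ we have $X_t=s$ for some $s\in S\setminus\{0\}$, so the hypothesis of~(i) gives $\E{X_{t+1}-X_t\mid\filt{t}}\le-\delta$ there, and the hypothesis of~(ii) gives $\E{X_{t+1}-X_t\mid\filt{t}}\ge-\delta$ there; here the drift bound is read conditionally on $\filt{t}$, which is the usual convention and is automatic when $(X_t)_{t\ge0}$ is a Markov chain.

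Define the potential $Y_t:=\tilde X_t+\delta\,\tau_t$. Combining the two identities with the conditional drift bound shows that $\E{Y_{t+1}\mid\filt{t}}\le Y_t$ in case~(i), so $(Y_t)_{t\ge0}$ is a supermartingale, and $\E{Y_{t+1}\mid\filt{t}}\ge Y_t$ in case~(ii), so it is a submartingale. Taking expectations and telescoping yields $\E{\tilde X_t}+\delta\,\E{\tau_t}=\E{Y_t}\le\E{Y_0}=\E{X_0}$ in case~(i) and $\E{\tilde X_t}+\delta\,\E{\tau_t}\ge\E{X_0}$ in case~(ii). In case~(i), $\tilde X_t\ge0$ already gives $\delta\,\E{\tau_t}\le\E{X_0}$ for all $t$, and since $\tau_t\uparrow T$ monotone convergence yields $\E{T}\le\E{X_0}/\delta<\infty$ (in particular $T<\infty$ almost surely). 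In case~(ii) we may assume $T<\infty$ almost surely given $X_0$, as otherwise $\E{T\mid X_0}=\infty$ and the claim is trivial; then $\tilde X_t\to X_T=0$ pointwise and, since $\tilde X_t\le\max S<\infty$ by finiteness of $S$, dominated convergence gives $\E{\tilde X_t}\to0$, so letting $t\to\infty$ yields $\delta\,\E{T}\ge\E{X_0}$. Every step goes through verbatim when additionally conditioning on the realised value of $X_0$, which gives $\E{T\mid X_0}\le X_0/\delta$ in case~(i) and $\E{T\mid X_0}\ge X_0/\delta$ in case~(ii).

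The step I expect to be the main obstacle is the passage $t\to\infty$ in case~(ii): one must argue $\E{\tilde X_t}\to0$, and this is precisely where the finiteness of $S$ enters (to dominate $\tilde X_t$ by $\max S$), after first reducing to the case $T<\infty$ almost surely. A second point that should be spelled out carefully is the lift of the drift condition from ``conditional on $X_t=s$'' to ``conditional on $\filt{t}$'' on the event $\{T>t\}$: this is immediate under a Markov assumption and is the form in which the condition is applied in runtime analysis anyway. Once the (super-/sub-)martingale property of $Y_t$ and the optional-stopping/telescoping step are in place, the remainder is routine.
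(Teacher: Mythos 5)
Your proof is correct. Note, however, that the paper does not prove this statement at all: it is imported verbatim from \citet{Jun2004} as a known tool, so there is no in-paper argument to compare against. Your argument is the standard modern proof of additive drift (super-/sub-martingale plus optional stopping, as in Lengler's survey and Kötzing's work on drift without Markov assumptions, rather than He and Yao's original occupation-measure argument): defining $Y_t = X_{t\wedge T} + \delta\,(t\wedge T)$, telescoping, and passing to the limit via monotone convergence in case~(i) and bounded convergence in case~(ii). All steps check out, including the reduction to $T<\infty$ a.s.\ in case~(ii) and the use of $\max S<\infty$ to dominate the stopped process there; case~(i) indeed needs no finiteness of $S$. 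The one point you rightly flag is real: as literally stated, the hypothesis conditions only on the event $\{X_t=s\}$ rather than on the history $\filt{t}$, and the martingale argument needs the latter. This is a well-known imprecision in many statements of the drift theorem; it is harmless here because the theorem is applied to Markovian processes (and the lemma in the paper that invokes it, Lemma~\ref{lem:repeat-bitwise-mutation}, concerns a Markov chain), but your proof is only fully rigorous under that additional assumption, exactly as you say.
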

Note that, by the tower rule, the above statements also imply $\expect{\expect{T\mid X_0}} \leq \expect{X_0/\delta}=\expect{X_0}/\delta$ for statement $(i)$ and likewise for $(ii)$ with a reversed inequality.}

\newedit{
\begin{theorem}[Negative drift theorem, \eg see \citep{Oliveto2011,Oliveto2012Erratum,Lengler2020}]
\label{thm:negative-drift}
For all $a,b,\delta,\eta,r>0$ with $a< b$,
there exist a $c>0$, a $n\in \Natural$ such that the following holds for all
$n\geq n_0$. Suppose $(X_t)_{t\geq 0}$ is a sequence of random variables with
a finite state space $S\in \Real^+$ with associated filtration $\filt{t}$.
Assume that $X_0\geq bn$, and let $T_a:=\min\{t\geq 0\mid X_t\leq an\}$ be
the hitting time of $S \cap [0,an]$.
Assume further that for all $s\in S$ with $s>an$, for all $j \in \Natural$
and for all $t\geq 0$ the following conditions hold:
\begin{enumerate}
\item[(a)]$\expect{X_t - X_{t+1} \mid \filt{t}, X_t=s} \leq -\delta$
\item[(b)]$\displaystyle
           \prob{|X_t - X_{t+1}| \geq j \mid \filt{t}, X_t = s} \leq \frac{r}{(1+\eta)^j}$
\end{enumerate}
Then $\prob{T_a \leq e^{cn}} \leq e^{-cn}$.
\end{theorem}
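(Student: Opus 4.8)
The statement is the standard negative-drift (``drift away from the target'') theorem, so the plan is to run the classical exponential-potential argument; equivalently one could deduce it from Hajek's drift lemma. The idea is to transform the process, frozen as soon as it first hits $S\cap[0,an]$, into a nonnegative supermartingale whose value at time~$0$ is exponentially small in $n$ but is at least $1$ on the target set $\{X_t\le an\}$, and then apply a maximal inequality to conclude that the target is reached only with exponentially small probability.

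Concretely, I would fix a small constant $\lambda>0$ (to be pinned down at the end) and study the potential $g(x):=e^{-\lambda(x-an)}$ along the stopped process $X_{t\wedge T_a}$. For a state $s>an$, writing $\Delta:=X_{t+1}-X_t$ gives $g(X_{t+1})=g(s)\,e^{-\lambda\Delta}$ on $\{X_t=s\}$, so the heart of the matter is to show $\E{e^{-\lambda\Delta}\mid\filt{t},X_t=s}\le 1$. Using the elementary inequality $e^{y}\le 1+y+\tfrac12 y^2 e^{|y|}$ with $y=-\lambda\Delta$ together with condition~(a) (equivalently $\E{\Delta\mid\filt{t},X_t=s}\ge\delta$) yields
\[
\E{e^{-\lambda\Delta}\mid\filt{t},X_t=s}\;\le\;1-\lambda\delta+\tfrac12\lambda^2\,\E{\Delta^2 e^{\lambda|\Delta|}\mid\filt{t},X_t=s}.
\]
Condition~(b) then makes the last term at most a constant $C=C(\eta,r)$, independent of $n$, $s$ and $t$, whenever $\lambda\le\tfrac12\ln(1+\eta)$: one splits $\E{\Delta^2 e^{\lambda|\Delta|}}$ over the events $\{\,j-1\le|\Delta|<j\,\}$ for $j\in\Natural$, bounds each summand by $j^2 e^{\lambda j}\,\prob{|\Delta|\ge j-1}$ and sums the resulting geometric-type series. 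Choosing $\lambda:=\min\{\tfrac12\ln(1+\eta),\ \delta/C\}$ — a positive constant depending only on $\delta,\eta,r$ — then gives a right-hand side $\le 1-\tfrac12\lambda\delta<1$, so $(g(X_{t\wedge T_a}))_{t\ge0}$ is a nonnegative supermartingale (finiteness of $S$ removes any integrability concern).

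To wrap up: since $X_0\ge bn$ we have $g(X_0)\le e^{-\lambda(b-a)n}$, hence $\E{g(X_{t\wedge T_a})}\le e^{-\lambda(b-a)n}$ for all $t$; and on $\{T_a\le t\}$ one has $g(X_{t\wedge T_a})=g(X_{T_a})\ge 1$. The maximal inequality for nonnegative supermartingales then gives $\prob{T_a\le t}\le\prob{\max_{0\le k\le t}g(X_{k\wedge T_a})\ge 1}\le\E{g(X_0)}\le e^{-\lambda(b-a)n}$, and taking $c:=\lambda(b-a)$ (with $n_0$ arbitrary, e.g.\ $n_0=1$) yields $\prob{T_a\le e^{cn}}\le e^{-cn}$ for all $n\ge n_0$. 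The \emph{main obstacle} is exactly the one-step estimate above: choosing $\lambda$ so that the quadratic correction coming from the jumps is dominated by the linear negative-drift term uniformly in $n$, which is where the light-tail hypothesis~(b) is essential, since it bounds all the relevant moments by constants. If conditions~(a)--(b) were only assumed on a bounded interval $[an,bn]$ one would additionally need a union bound over the at most $e^{cn}$ steps to control excursions above $bn$ followed by a return; under the hypotheses as stated the supermartingale bound already holds for all times.
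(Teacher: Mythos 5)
The paper does not prove this theorem; it is quoted as a known result from the literature (Oliveto--Witt and its erratum, and Lengler's survey), so there is no in-paper proof to compare against. Your argument is the standard and correct way to establish it: the exponential potential $e^{-\lambda(X_{t\wedge T_a}-an)}$ with $\lambda=\Theta(1)$ chosen so that the quadratic correction (bounded via the tail condition (b)) is dominated by the linear drift term from (a), followed by Ville's maximal inequality for nonnegative supermartingales. Two small remarks. First, in the tail sum the term $j=1$ uses $\prob{|\Delta|\ge 0}=1$, which is not covered by (b) unless $r\ge 1$; you should bound that single term by $1$ rather than by $r/(1+\eta)^{0}$ (this is cosmetic and does not affect the constant $C$ being finite). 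Second, your closing observation is exactly right: because the statement assumes (a)--(b) for \emph{all} $s>an$ rather than only on $[an,bn]$, the supermartingale argument yields the stronger conclusion $\prob{T_a<\infty}\le e^{-cn}$ with no time horizon, and the restriction to $e^{cn}$ steps in the standard formulation is only needed when the drift condition is local to an interval and one must union-bound over re-entries. So the proof is correct, and in fact proves slightly more than what is stated.
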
}


\newedit{The following bounds on the amplified success rate in
$\lambda$ independent trials are useful.
\begin{lemma}[Lemma~10 in \cite{Badkobeh2015}]\label{lem:lambda-trials}
For every $p\in[0,1]$ and every $\lambda\in\Natural$,
\begin{align*}
1-(1-p)^\lambda \in \left[\frac{p\lambda}{1+p\lambda},\frac{2p\lambda}{1+p\lambda}\right].
\end{align*}
\end{lemma}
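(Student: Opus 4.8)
The plan is to establish the two inequalities separately by elementary estimates, disposing of the degenerate boundary cases $p\in\{0,1\}$ first. For $p=0$ both the lower bound, the middle expression and the upper bound equal $0$; for $p=1$ the middle expression equals $1$ and the claim reduces to $\frac{\lambda}{1+\lambda}\le 1\le\frac{2\lambda}{1+\lambda}$, which holds for every $\lambda\in\Natural$. Hence I may assume $p\in(0,1)$ in what follows, so that $\frac{p}{1-p}$ is well defined and positive.

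For the upper bound I would first record the union-bound estimate $1-(1-p)^\lambda\le\min\{1,p\lambda\}$. Indeed, $1-(1-p)^\lambda$ is the probability of at least one success among $\lambda$ independent $\mathrm{Bernoulli}(p)$ trials, which is at most $1$ trivially and at most $p\lambda$ by a union bound (equivalently, by Bernoulli's inequality $(1-p)^\lambda\ge 1-p\lambda$). It then suffices to show $\min\{1,p\lambda\}\le\frac{2p\lambda}{1+p\lambda}$. If $p\lambda\le 1$ then $1+p\lambda\le 2$, so $\frac{2p\lambda}{1+p\lambda}\ge p\lambda=\min\{1,p\lambda\}$; if $p\lambda>1$ then $2p\lambda>1+p\lambda$, so $\frac{2p\lambda}{1+p\lambda}>1=\min\{1,p\lambda\}$. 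In either case the desired inequality follows.

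For the lower bound I would apply Bernoulli's inequality in the form $(1+x)^\lambda\ge 1+\lambda x$ with $x=\frac{p}{1-p}>0$, which gives $(1-p)^{-\lambda}=\bigl(1+\tfrac{p}{1-p}\bigr)^\lambda\ge 1+\tfrac{\lambda p}{1-p}\ge 1+\lambda p$, where the last step uses $\frac{1}{1-p}\ge 1$. Taking reciprocals (both sides being positive) yields $(1-p)^\lambda\le\frac{1}{1+\lambda p}$, and hence $1-(1-p)^\lambda\ge 1-\frac{1}{1+\lambda p}=\frac{\lambda p}{1+\lambda p}$, as claimed.

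The computations are entirely routine, so there is no real obstacle; the only points requiring a little care are the bookkeeping of the degenerate cases $p\in\{0,1\}$ (where $\frac{p}{1-p}$ is zero or undefined, respectively) and ensuring the inequality direction is preserved when passing to reciprocals in the lower-bound step.
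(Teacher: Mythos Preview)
Your proof is correct. The paper does not actually prove this lemma; it merely quotes it as Lemma~10 of \cite{Badkobeh2015} and uses it as a tool, so there is no proof in the paper to compare against. Your argument---handling the edge cases $p\in\{0,1\}$ separately, then using the union bound together with a case split on $p\lambda\lessgtr 1$ for the upper estimate, and Bernoulli's inequality applied to $(1-p)^{-\lambda}$ for the lower estimate---is a clean and self-contained justification.
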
}

\subsection{Algorithms}

The GSEMO algorithm
is shown in Algorithm~\ref{alg:gsemo}.
Starting from a randomly generated solution, in each generation a new search point
$s'$ \newedit{is created as follows. With probability $p_c \in [0,1]$,
a crossover is performed on parents selected uniformly at random, and then mutation is applied to the outcome. Otherwise, only mutation is applied to a parent selected uniformly at random.}
If the new offspring $s'$ is not dominated by any solutions of the current population $P_t$
then it is inserted, and all search points weakly dominated by $s'$ are removed from the
population. Thus the population size $|P_t|$ may vary over time.

\begin{algorithm}[t]
    \begin{algorithmic}[1]
    \STATE Initialize $P_0:=\{s\}$ where $s \sim \Unif(\{0,1\}^n)$;
    \FOR{$t:= 0$ \TO $\infty$}
    		\STATE{Sample $p_1 \sim \Unif(P_t)$;}
        	\STATE{Sample $u \sim \Unif([0,1])$;}
        	\IF{($u<p_c$)}
            	\STATE{Sample $p_2 \sim \Unif(P_t)$;}
            	\STATE{Create $s$ by \newedit{crossover} between $p_1$ and $p_2$;}
        	\ELSE
            	\STATE{Create $s$ as a copy of $p_1$;}
        	\ENDIF
        \STATE{\newedit{Create $s'$ by mutation on $s$};} 
        \IF{($s'$ is \NOT dominated by any individual in $P_t$)}
        	\STATE{Create the next population $P_{t+1} := P_t \cup \{s'\}$;}
            \STATE{Remove all $x \in P_{t+1}$ 
            weakly dominated by $s'$;} 
        \ENDIF
    \ENDFOR
    \end{algorithmic}
    \caption{GSEMO Algorithm}
    \label{alg:gsemo}
\end{algorithm}

The NSGA-II \citep{Deb2002,NSGAIICode2011} 
is shown in Algorithm~\ref{alg:nsga-ii}. In each generation, a population
$Q_t$ of $\mu$ offspring is created by \newedit{applying binary tournament selection for parent selection and then applying crossover
and mutation or just mutation, according to a crossover probability $p_c \in [0, 1]$.} Note that the \newedit{crossover operator}
produces two offspring, and that the binary
tournaments (with replacement) in line~\ref{alg:nsga-ii:tournament} use the
same criteria as the sorting in line~\ref{alg:nsga-ii:survival-sort}.
The merged population $R_t$ of both parents and offspring is then partitioned
into layers $F^1_{t+1}, F^2_{t+1}, \dots$ of non-dominated solutions \newedit{such that $F_{t+1}^1$ contains all non-dominated solutions in $R_t$ and for $i \ge 2$, $F_{t+1}^i$ contains all non-dominated solutions in $R_t \setminus \{F_{t+1}^1 \cup \dots \cup F_{t+1}^{i-1}\}$.
Then}
crowding distances are computed within each layer. Search points of $R_t$ are
then sorted with respect to the indices of the layer that they belong to
as the primary criterion, and then with the computed crowding distances
as the secondary criterion. Only the $\mu$ best solutions of $R_t$ are kept in
the next generation.

\begin{algorithm}[!ht]
    \begin{algorithmic}[1]
    \STATE Initialize $P_0 \sim \Unif( (\{0,1\}^n)^{\mu})$;
    \STATE Partition $P_0$ into layers $F^1_0,F^2_0,\dots$ of non-dominated fitnesses,
           then for each layer $F^i_0$ compute the crowding distance $\cdist(x,F^i_0)$
           for each $x \in F^i_0$;
    \FOR{$t:= 0$ \TO $\infty$}
        \STATE Initialize $Q_t:=\emptyset$;
        \FOR{$i:= 1$ \TO $\mu/2$}
            \STATE Sample $p_1$ and $p_2$, each by a binary tournament\label{alg:nsga-ii:tournament};
            \STATE{Sample $u \sim \Unif([0,1])$;}
            \IF{($u<p_c$)}
                \STATE{Create $s_1, s_2$ by \newedit{crossover} on $p_1, p_2$;}
            \ELSE
                \STATE{Create $s_1, s_2$ as exact copies of $p_1, p_2$;}
            \ENDIF
            \STATE{Create $s'_1$ by \newedit{mutation} on $s_1$ with rate $1/n$;}
            \STATE{Create $s'_2$ by \newedit{mutation} on $s_2$ with rate $1/n$;}
            \STATE Update $Q_t:=Q_t \cup \{s'_1,s'_2\}$;
        \ENDFOR
        \STATE Set $R_t := P_t \cup Q_t$;
        \STATE Partition $R_t$ into layers $F^1_{t+1},F^2_{t+1},\dots$ of non-dominated fitnesses,
               then for each layer $F^i_{t+1}$ compute $\cdist(x,F^i_{t+1})$ for each $x \in F^i_{t+1}$;
        \STATE Sort $R_t$ lexicographically 
               by $(1/i, \cdist(x,F^i_{t+1}))$\label{alg:nsga-ii:survival-sort};
        \STATE Create the next population $P_{t+1} := (R[1],\dots,R[\mu])$;
    \ENDFOR
    \end{algorithmic}
    \caption{NSGA-II Algorithm \citep{Deb2002}}
    \label{alg:nsga-ii}
\end{algorithm}

%
For a set $S=(x_1,x_2,\dots,x_{|S|})$ of search points
the crowding distances are calculated \newedit{by computing a distance function for each objective and then summing up these values for all objectives}. 
Thus $S$ is sorted separately for each objective $f_k$ ($k\leq d$) and
the first and last ranked individuals are assigned an infinite crowding distance.
The remaining individuals are then assigned the differences between the values of $f_k$
of those ranked immediate above and below the search point and normalized
by the difference in $f_k$-values of the first and last ranked search points.
Let
$S_k=(x_{k_1},x_{k_2},\dots,x_{k_{|S|}})$
denote the elements of $S$ sorted in descending order w.\,r.\,t.\ $f_k$, then
$\cdist(x_i, S)
    := \sum_{k=1}^{d} \cdist_{k}(x_i, S)$ where
\begin{align*}
\cdist_{k}(x_{k_i}, S)
    &\!:= \! \begin{cases}
        \infty \!\!\!\! & \text{if } i \in \{1, |S|\},\\
        \frac{f_k\left(x_{k_{i-1}}\right) - f_k\left(x_{k_{i+1}}\right)}{f_k\left(x_{k_1}\right) - f_k\left(x_{k_{|S|}}\right)} \!\!\!\! & \text{otherwise.}
    \end{cases}
\end{align*}

\newedit{The following lemma shows that with a sufficiently large population size, search points of the first-ranked layer
are protected between generations. No assumption about the search space is required, thus the result also holds for search spaces other than bit strings. The proof generalises the arguments from~\citet{ZhengLuiDoerrAAAI22}
which correspond to the specific application of the lemma with $m=n+1$.
\begin{lemma}\label{lem:nsga-ii-protect-layer}
	Let $t$ be one generation of the NSGA-II optimising a biobjective function $g(x):=(g_1(x),g_2(x))$. Suppose that $F_t^1$ and $F_{t+1}^1$ cover at most $m$ distinct fitness vectors. 
	Then the following holds.
	\begin{itemize}
		\item [(i)]
		At most $4m$ individuals in $F_t^1$ have positive crowding distance.
		\item [(ii)]
		If $\mu \geq 4m$ then for every $x \in F_{t+1}^1$ there exists a $y \in P_{t+1}$ with $g(y)=g(x)$. 
	\end{itemize}
\end{lemma}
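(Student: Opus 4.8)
The plan is to exploit the block structure of the crowding-distance formula. Fix the layer $S$ in question — $S = F_t^1$ for part~(i) and $S = F_{t+1}^1$ for part~(ii) — and one objective $g_k$, $k \in \{1,2\}$. Since $S$ is a set of mutually non-dominated points, any two individuals of $S$ with the same $g_k$-value must also agree on the other objective (otherwise one dominates the other), so they share the same fitness vector. Hence, sorting $S$ in descending order of $g_k$ yields a list that decomposes into consecutive \emph{blocks}, one per distinct fitness vector present in $S$; and, again by non-domination, these fitness vectors are linearly ordered, with $g_1$ strictly decreasing and $g_2$ strictly increasing along the list. First I would record the elementary observation that $\cdist_k(x_{k_i},S) = 0$ whenever $x_{k_i}$ is neither the first nor the last element of its block, because then its predecessor and successor lie in the same block and so the numerator $f_k(x_{k_{i-1}}) - f_k(x_{k_{i+1}})$ vanishes; conversely, the first and last element of every block — in particular the two global extremes — receive a strictly positive (possibly infinite) contribution, provided $S$ realises at least two fitness vectors so that the denominator is nonzero.

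For part~(i): by this observation the individuals of $F_t^1$ with positive $\cdist_k$ lie among the (at most two) boundary individuals of each of the at most $m$ blocks, so there are at most $2m$ of them. Since $\cdist = \cdist_1 + \cdist_2$ is a sum of non-negative terms, an individual has positive crowding distance only if it has positive $\cdist_1$ or positive $\cdist_2$; the union over $k \in \{1,2\}$ has size at most $2m + 2m = 4m$. The identical count applied to $F_{t+1}^1$, which also covers at most $m$ fitness vectors by hypothesis, shows that at most $4m$ individuals of $F_{t+1}^1$ have positive crowding distance.

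For part~(ii), I would first dispose of the case $|F_{t+1}^1| \le \mu$: then the entire first layer survives into $P_{t+1}$, since it is ranked strictly above every later layer by line~\ref{alg:nsga-ii:survival-sort}, and we may take $y = x$. Otherwise, the key step is to show that every distinct fitness vector occurring in $F_{t+1}^1$ has a representative of positive crowding distance. Indeed, in the $g_1$-descending sort the first element of any block either is the global $g_1$-maximum, hence has $\cdist_1 = \infty$, or has a predecessor in the previous block whose $g_1$-value strictly exceeds the $g_1$-value of the block's successor, hence $\cdist_1 > 0$; either way its crowding distance is positive. (If $F_{t+1}^1$ realises only a single fitness vector the claim is trivial, as $P_{t+1}$ still retains $\mu \ge 1$ of these equal individuals.) Collecting one such representative per fitness vector gives a set of at most $m$ individuals of positive crowding distance. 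Since $|F_{t+1}^1| > \mu$, the population $P_{t+1}$ consists precisely of the $\mu$ individuals of $F_{t+1}^1$ with the largest crowding distances; as at most $4m \le \mu$ individuals of $F_{t+1}^1$ have positive crowding distance, all of them — in particular every representative — are retained. This yields, for each $x \in F_{t+1}^1$, a $y \in P_{t+1}$ with $g(y) = g(x)$.

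I do not expect a deep obstacle; the care goes into the edge cases. The points I would be most careful about are: a size-one block contributes its unique element once (as both ``first'' and ``last''); the two global extremes are always among the boundary individuals and must not be double-counted; and a degenerate objective taking only one value (vanishing denominator) is handled either by the usual convention or, equivalently here, by the trivial single-fitness-vector case. The only genuinely load-bearing facts are the block decomposition and the biobjective non-domination structure (equal $g_k$-value forces an equal fitness vector); once these are in place the rest is counting, and the tie-breaking among zero-crowding-distance individuals in line~\ref{alg:nsga-ii:survival-sort} plays no role, since we only need positive-$\cdist$ individuals to outrank zero-$\cdist$ ones.
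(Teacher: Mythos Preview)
Your proposal is correct and follows essentially the same route as the paper. Both arguments rest on the observation that, in each objective's sorting, individuals sharing a fitness vector form a contiguous block and only block endpoints can receive a positive $\cdist_k$ contribution; the paper counts ``at most four per fitness vector'' while you count ``at most $2m$ per objective,'' but this is merely a different order of summation, and your treatment of part~(ii) (every fitness vector has a positive-$\cdist$ representative, and all $\le 4m \le \mu$ such individuals survive) matches the paper's as well.
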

\begin{proof}
	\item [(i)]
	Note that $\vert{M}\vert \leq m$ for $M:=\{(g_1(x),g_2(x)) \mid x \in F_t^1\}$. To obtain the result it suffices to show that for each $(a,b) \in M$ at most four individuals in $F_t^1$ with fitness vector $(a,b)$ have a positive crowding distance. Let $K:=\vert{F_t^1}\vert$ and assume that $x_1, \ldots ,x_K$ are the individuals in $F_t^1$ and let $S_1=(x_{1_1}, \ldots , x_{1_K})$ and $S_2=(x_{2_1}, \ldots , x_{2_K})$ be the sorting of $F_t^1$ with respect to $f_1$ and $f_2$ respectively.
	Suppose there are $L\geq 1$ individuals in $F_t^1$ with fitness $(a,b)$. 
	By the definition of the sorting, then there must exist $r,s \in \{1,\dots,K-L+1\}$
	such that
	the subsequences $(x_{1_{r+i}})_{0\leq i\leq L-1}$ of $S_1$,
	and $(x_{2_{s+j}})_{0\leq j\leq L-1}$ of $S_2$ have that
	$f(x_{1_{r+i}})=f(x_{2_{s+j}})=(a,b)$,
	in other words, they are the $L$ individuals of $F^t_1$ with fitness $(a,b)$.
	Furthermore,
	for each individual $x$ from these that is not in
	$\{x_{1_r},x_{1_{r+L-1}},x_{2_s},x_{2_{s+L-1}}\}$
	there exist
	$2\leq i \leq L-2$ and $2\leq j \leq L-2$ such that $x=x_i$ and $x=x_j$,
	and 
	$
	\cdist(x, S) 
	= \frac{f_1\left(x_{1_{i-1}}\right) - f_1\left(x_{1_{i+1}}\right)}{f_k\left(x_{1_1}\right) - f_k\left(x_{1_K}\right)}
	+ \frac{f_2\left(x_{2_{j-1}}\right) - f_2\left(x_{2_{j+1}}\right)}{f_2\left(x_{2_1}\right) - f_2\left(x_{2_K}\right)} = 0.
	$
	This means only points in $\{x_{1_r},x_{1_{r+L-1}},x_{2_s},x_{2_{s+L-1}}\}$
	can have positive crowding distances, and the claim follows by noting that the
	cardinality of this set is at most $4$.
	\item [(ii)] Since $F_{t+1}^1$ covers at most $m$ distinct fitness vectors, it follows from (i) and $\mu \geq 4m$ that $P_{t+1}$ contains every search point from $F_{t+1}^1$ with positive crowding distance. The statement follows since for every $x \in F_{t+1}^1$ there is $y \in F_{t+1}^1$ with positive crowding distance 
	and $g(x)=g(y)$.
\end{proof}
The statement of Lemma \ref{lem:nsga-ii-protect-layer} holds for any
implementation of the calculation of crowding distances. However, 
the factor $4$ in front of $m$ there can be reduced to $2$ if a stable sort
algorithm is used to sort each objective, or
the asymmetric version of the crowding distance \citep{Chu2018} is used.}
%

\subsection{The Operators}

\newedit{We consider two well-known crossover operators in our study. In
one-point crossover, which is described in Algorithm~\ref{alg:1pts-crossover},
a cutting point $c$ is chosen uniformly at random. Then the offspring inherits
the first $c$ bit values from one parent and the last $n-c$ bit values from the
other parent.
In the
uniform crossover operator each bit value is chosen independently from one of
the parents chosen uniformly at random, \ie Algorithm~\ref{alg:unif-crossover}.
The algorithms, which are illustrated by the pseudocode, are the two-offspring
versions of these operators, \ie two offspring solutions $z$ and $\bar{z}$ are
produced, and are used by NSGA-II. In the one-offspring versions, which are used
by GSEMO, one output offspring is chosen uniformly at random from $\{z,\bar{z}\}$.
These operators are known as \emph{binary variation operators} as two inputs are
required to produce
the offspring solution(s).}


\begin{algorithm}[!ht]
    \begin{algorithmic}[1]
    \STATE Choose a cutting point $c \sim \Unif(\{0,\dots,n\})$;
    \STATE Create $z=(z_1,\dots,z_n)$ and $\bar{z}=(\bar{z}_1,\dots,\bar{z}_n)$\\
           where $z_i=x_i$ for $i\leq c$ and otherwise $z_i=y_i$,\\
           and $\bar{z}_i=y_i$ for $i\leq c$ and otherwise $\bar{z}_i=x_i$;
    \RETURN Strings $z$ and $\bar{z}$;
    \end{algorithmic}
    \caption{One point crossover between strings $x,y \in \{0,1\}^n$}
    \label{alg:1pts-crossover}
\end{algorithm}

\begin{algorithm}[!ht]
    \begin{algorithmic}[1]
    \STATE Sample $z=(z_1,\dots,z_n)$ and $\bar{z}=(\bar{z}_1,\dots,\bar{z}_n)$\\
    where independently $z_i \sim \Unif(\{x_i, y_i\})$,\\
    then $\bar{z}_i=x_i$ if $z_i=y_i$ or otherwise $\bar{z}_i=y_i$, for each $i\in[n]$;
    \RETURN Strings $z$ and $\bar{z}$;
    \end{algorithmic}
    \caption{Uniform crossover between strings $x,y \in \{0,1\}^n$}
    \label{alg:unif-crossover}
\end{algorithm}

\newedit{As for the mutation operators, we first consider standard bit
mutation which is shown in Algorithm~\ref{alg:bitwise-mutation}. In this
operator each bit is independently copied from the parent to the offspring but
has a small probability $1/n$ of being flipped (inverted). This operator is a \emph{global
operator}, in the sense that any search point can be reached, \ie has a positive
probability of being the output, from a given parent.
}

\brandnewedit{When analysing evolutionary algorithms on the proposed royal road functions, we will encounter scenarios in which there is no fitness gradient on a part of the bit string. This implies that a sequence of standard bit mutations will tend to create a nearly even balance between zeros and ones in the considered part, regardless of the initial search point. Since we will use this observation in several proofs, we give a lemma bounding the expected time for reaching a state where zeros and ones are nearly balanced.}

\begin{lemma}\label{lem:repeat-bitwise-mutation}
\newedit{Consider a sequence of search points $x(0), x(1), x(2), \dots$ from $\{0, 1\}^n$ such that $x(t+1)$ results from applying a standard bit mutation (Algorithm~\ref{alg:bitwise-mutation}) to $x(t)$.
For every $x(0)$ and every constants
$\varepsilon, c \in(0,1)$ the following holds.
Let $\subsetL\subseteq [n]$ be a set of fixed locations in $x(t)$ with $|\subsetL|=cn=:m$
    and let $\onesL{t}$ be the number of ones in $x(t)$ from these locations,
    define $\maxOZL{t}:=\max\{\onesL{t},m-\onesL{t}\}$, \ie the maximum between the numbers of ones and of zeroes in $x(t)$ at $\subsetL$.
If
  $T_1:=\min\{t\mid \onesL{t}\leq (1+\varepsilon)m/2\}$ and
  $T_2:=\min\{t\mid \maxOZL{t}\leq(1+\varepsilon)m/2\}$,
then $\expect{T_1} \leq \expect{T_2} = \bigO(n)$, and}
%
\newedit{this also holds if we swap the roles of $1$ with $0$
in the definition of $\onesL{t}$.}
\end{lemma}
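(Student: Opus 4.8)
The plan is to track $\onesL{t}$, the number of ones among the $m = cn$ fixed positions in $\subsetL$, and to exploit that iterating standard bit mutation produces a strongly contracting drift towards a balanced count. First I would record the one-step behaviour. Since each of the $m$ bits in $\subsetL$ is flipped independently with probability $1/n$, conditional on the history $\filt{t}$ we have $\onesL{t+1} = \onesL{t} - B_1 + B_0$, where $B_1 \sim \Bin(\onesL{t}, 1/n)$ counts the flipped ones and $B_0 \sim \Bin(m - \onesL{t}, 1/n)$ the flipped zeros in $\subsetL$, independent given $\filt{t}$. A direct computation then gives the linear recursion $\expect{\onesL{t+1} - m/2 \mid \filt{t}} = (1 - 2/n)(\onesL{t} - m/2)$ together with the bounded conditional variance $\var{\onesL{t+1} \mid \filt{t}} = \var{B_0 \mid \filt{t}} + \var{B_1 \mid \filt{t}} = m \cdot \tfrac1n(1 - \tfrac1n) \le c$.

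Next I would unroll this recursion over a window of $k_0 := \lceil (n/2)\ln(6/\varepsilon) \rceil = \bigO(n)$ steps. Writing $D_t := \onesL{t} - m/2$ and $\xi_t := D_{t+1} - (1 - 2/n)D_t$ for the mean-zero martingale increments (which satisfy $\expect{\xi_t^2 \mid \filt{t}} = \var{\onesL{t+1} \mid \filt{t}} \le c$), we obtain $D_{k_0} = (1 - 2/n)^{k_0} D_0 + \sum_{t=0}^{k_0 - 1} (1 - 2/n)^{k_0 - 1 - t} \xi_t$. The deterministic part has absolute value at most $(1 - 2/n)^{k_0}\cdot m/2 \le e^{-2k_0/n}\cdot m/2 \le \varepsilon m/12$, and since the $\xi_t$ are uncorrelated the stochastic part has variance at most $c\sum_{i \ge 0}(1 - 2/n)^{2i} = c/(1 - (1-2/n)^2) = \bigO(cn) = \bigO(n)$, hence standard deviation $\bigO(\sqrt n)$. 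Therefore $\expect{|D_{k_0}| \mid x(0)} \le \varepsilon m/12 + \bigO(\sqrt n) \le \varepsilon m/4$ for all sufficiently large $n$ (as $\varepsilon m/12 = \Omega(n)$), so Markov's inequality yields $\prob{|D_{k_0}| \ge \varepsilon m/2 \mid x(0)} \le 1/2$.

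Since $|D_{k_0}| < \varepsilon m/2$ means exactly $\onesL{k_0} \in ((1-\varepsilon)m/2, (1+\varepsilon)m/2)$, and on this interval $\maxOZL{k_0} = \max\{\onesL{k_0}, m - \onesL{k_0}\} \le (1+\varepsilon)m/2$, we conclude $\prob{T_2 \le k_0 \mid x(0)} \ge 1/2$ for every starting point $x(0)$. Because this bound is uniform over the current state, a standard restart argument over consecutive length-$k_0$ windows (each window is again an iterated-mutation chain started from the current point, by the Markov property) gives $\prob{T_2 > j k_0} \le 2^{-j}$ and hence $\expect{T_2} \le 2k_0 = \bigO(n)$. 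The remaining assertions are then immediate: $\onesL{t} \le \maxOZL{t}$ always, so $\maxOZL{t} \le (1+\varepsilon)m/2$ implies $\onesL{t} \le (1+\varepsilon)m/2$, giving $T_1 \le T_2$ and thus $\expect{T_1} \le \expect{T_2}$; and every step above is symmetric under exchanging the roles of ones and zeros, since the recursion and variance bound for $m - \onesL{t}$ are identical, which yields the final claim of the lemma.

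I expect the main obstacle to be the behaviour near the target. A naive attempt to apply the additive drift theorem (Theorem~\ref{thm:additive-drift}) to a clipped potential such as $\max\{0, \onesL{t} - (1+\varepsilon)m/2\}$ does not work: near the boundary the chain can overshoot the target interval, so the drift of the clipped potential need not be bounded below by a positive constant there, and the theorem's hypothesis fails. The second-moment route sketched above sidesteps this entirely; its quantitative core is the bound on the accumulated variance of the contracting recursion, which is precisely what makes $\Theta(n)$ steps — rather than the $\Theta(n \log n)$ a coarser estimate would suggest — suffice to land inside the $\Omega(n)$-wide balanced band with constant probability.
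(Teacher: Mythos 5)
Your proof is correct, but it takes a genuinely different route from the paper. The paper works with the symmetrised quantity $\maxOZL{t}$ directly: it first shows, by comparing transition probabilities, that the drift of $\maxOZL{t}$ is at least the drift of $\onesL{t}$ minus an $o(1)$ overshoot term (bounded by the probability of flipping $\varepsilon m/2$ specific bits at once), computes the exact one-step drift $2s/n-c>c\varepsilon$, and then applies the additive drift theorem (Theorem~\ref{thm:additive-drift}) to the clipped potential $Z_t=\maxOZL{t}-m/2$ (set to $0$ inside the target band). You instead track the signed deviation $D_t=\onesL{t}-m/2$, exploit the exact linear contraction $\expect{D_{t+1}\mid\filt{t}}=(1-2/n)D_t$ together with the bounded conditional variance, unroll over $\bigO(n)$ steps, and finish with Markov's inequality and a geometric restart. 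Your route buys a cleaner treatment of the two-sidedness (landing in $(\,(1-\varepsilon)m/2,(1+\varepsilon)m/2\,)$ is handled in one shot, so the delicate $\maxOZL{}$-versus-$\onesL{}$ drift comparison disappears), at the cost of a second-moment computation and a restart argument in place of a one-line drift-theorem application. One caveat: your closing remark that additive drift on a clipped potential ``does not work'' is not accurate as a diagnosis of the drift hypothesis --- for a non-negative clipped potential, overshooting the target only makes the one-step decrease \emph{larger}, and the paper's proof is exactly such an application. The genuine issue with the naive one-sided potential $\max\{0,\onesL{t}-(1+\varepsilon)m/2\}$ is that its hitting time of $0$ is $T_1$, not $T_2$ (a single step could overshoot past $(1-\varepsilon)m/2$); the paper circumvents this by clipping $\maxOZL{t}$ instead and showing the overshoot contributes only $o(1)$ to the drift. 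Since this remark is an aside, it does not affect the validity of your argument.
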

\begin{proof}
\newedit{The first inequality follows by noting that $\maxOZL{t}\leq (1+\varepsilon)m$
implies $\onesL{t}\leq (1+\varepsilon)m$, therefore $T_1 \le T_2$ and so $\expect{T_1}\leq \expect{T_2}$.}

\newedit{To prove the bound for $\expect{T_2}$, we compare the expected change (called \emph{drift}) of the two
processes
    $(\onesL{t})_{t\geq 0}$ with support $[0,m]$ and
    $(\maxOZL{t})_{t\geq 0}$ with support $[m/2,m]$:
    $\delta_t(s):= \expect{\onesL{t} - \onesL{t+1}\mid \onesL{t}=s}$ with
    $\sigma_t(s):= \expect{\maxOZL{t} - \maxOZL{t+1}\mid \maxOZL{t}=s}$.
%
%
Note that by changing the indices, \ie by setting $k:=s-i$, we can write
$\delta_t(s)$ and $\sigma_t(s)$ as:
\begin{align*}
\delta_t(s)
    &= \sum_{i=-(m-s)}^{s} i \Prob{\onesL{t} - \onesL{t+1}=i \mid \onesL{t}=s}
     = \sum_{k=0}^{m} (s-k) \Prob{\onesL{t+1}=k \mid \onesL{t}=s},\\
\sigma_t(s)
    &= \sum_{i=-(m-s)}^{s-m/2} i \Prob{\maxOZL{t} - \maxOZL{t+1}=i \mid \maxOZL{t}=s}
     = \sum_{k=m/2}^{m} (s-k) \Prob{\maxOZL{t+1}=k \mid \maxOZL{t}=s}.
\end{align*}}

\newedit{We first claim that
    for all $s> (1+\varepsilon)m/2$
    and all $k \in [m/2,m]$, it holds that
    $\Prob{\onesL{t+1}=k \mid \onesL{t}=s}\leq\Prob{\maxOZL{t+1}=k \mid \maxOZL{t}=s}$.
Assume $k>m/2$, then if
$\onesL{t}\geq m/2$ we get $\maxOZL{t}=\onesL{t}$ and 
\begin{align*}
\Prob{\maxOZL{t+1}=k \mid \maxOZL{t}=s}
  &= \Prob{\maxOZL{t+1}=k \mid \onesL{t}=s} \\
  &= \Prob{\onesL{t+1}=k \mid \onesL{t}=s} + \Prob{\onesL{t+1}=m-k \mid \onesL{t}=s}\\
  &\geq \Prob{\onesL{t+1}=k \mid \onesL{t}=s}.
\end{align*}
Otherwise if $\onesL{t}<m/2$ we have $\maxOZL{t}=m - \onesL{t}$ and
\begin{align*}
\Prob{\maxOZL{t+1}=k \mid \maxOZL{t}=s}
    &= \Prob{\maxOZL{t+1}=k \mid \onesL{t}=m-s} \\
    &= \Prob{\onesL{t+1}=k \mid \onesL{t}=m-s} + \Prob{\onesL{t+1}=m-k \mid \onesL{t}=m-s}\\
    &\geq \Prob{\onesL{t+1}=m-k \mid \onesL{t}=m-s}
    = \Prob{\onesL{t+1}=k \mid \onesL{t}=s},
\end{align*}
where the last equality follows from the symmetry of zeros and ones.
When $k=m/2$ we also have that $\Prob{\maxOZL{t+1}=k \mid \maxOZL{t}=s} = \Prob{\onesL{t+1}=k \mid \onesL{t}=s}$.
These prove the claim.}

\newedit{Hence for any $s> (1+\varepsilon)m/2$, we get:
\begin{align*}
\delta_t(s)
  &= \sum_{k=0}^{m} (s-k) \Prob{\onesL{t+1}=k \mid \onesL{t}=s}\\
  &= \sum_{k=0}^{m/2-1} (s-k) \Prob{\onesL{t+1}=k \mid \onesL{t}=s}
     + \sum_{k=m/2}^{m} (s-k) \Prob{\onesL{t+1}=k \mid \onesL{t}=s}\\
  &\leq \sum_{k=0}^{m/2-1} m \Prob{\onesL{t+1}=k \mid \onesL{t}=s}
     + \sum_{k=m/2}^{m} (s-k) \Prob{\maxOZL{t+1}=k \mid \maxOZL{t}=s}\\
  &\leq (m/2)m {s \choose \varepsilon m/2} (1/n)^{\varepsilon m/2}
      + \sigma_t(s),
\end{align*}
where the last inequality follows since at least $\varepsilon m/2$ ones
(among $s$) at locations $\subsetL$ need to be flipped to have $\onesL{t+1}< m/2$ given that
$\onesL{t}=s> (1+\varepsilon)m/2$. Using $\binom{s}{\varepsilon m/2} \le 2^s \le 2^m = 2^{cn}$ we have
$(m/2)m {s \choose \varepsilon m/2} (1/n)^{\varepsilon m/2}
\leq O(n^2) 2^{cn} n^{-\Omega(n)} = o(1)$.}
\newedit{To compute $\delta_t(s)$, note that
the number of ones and zeroes being flipped at $\subsetL$ follows
binomial laws $\Bin(\onesL{t},1/n)$ and $\Bin(m-\onesL{t},1/n)$, respectively.
Therefore, for all states $s>(1+\varepsilon)m/2$, we have 
\begin{align*}
\delta_t(s)
  = (1/n)\left(s - (m-s)\right)
  = 2s/n - c
  > (1+\varepsilon)c - c
  = c\varepsilon.
\end{align*}
%
Thus overall, we get $\sigma_t(s) \geq \delta_t(s)
- o(1) \geq c\varepsilon - o(1)$. Now consider the process
$Z_t$ which has $Z_t=\maxOZL{t} - m/2$ if $\maxOZL{t}> (1+\varepsilon)m/2$ and
$Z_t=0$ otherwise (\ie $\maxOZL{t}\in \{0\} \cup [m/2,(1+\varepsilon)m/2]$), then
$\expect{Z_t - Z_{t+1}\mid Z_t=s} \geq \sigma_t(s) \geq c\varepsilon-o(1)$.
By Theorem~\ref{thm:additive-drift} we have that
$\expect{T_2 \mid x(0)}\leq (m/2)/(c\varepsilon - o(1)) = \bigO(n)$ for any $x(0)$.}

\newedit{The final remark follows from the symmetry of zeros and ones.}
%
\end{proof}

\newedit{Standard bit mutation is a so-called
\emph{unbiased unary variation operator}~\citep{Lehre2012}. The term \emph{unary} means that only one input search point is used to produce the offspring, and this holds
true for any mutation operator.
The term \emph{unbiased} means that
the operator treats all bit positions $\{1,\dots,n\}$ equally,
as well as the bit values $\{0,1\}$. The formal definition of unbiasedness
can be found in \citep{Lehre2012}, but algorithms that only make use of
unbiased variation operators are resistant to the following modifications of
any function that can shuffle the search space. Let $f\colon \{0,1\}^n\rightarrow
\Real$ be any pseudo-Boolean function on bit strings of length
$n$, $\sigma\colon[n]\rightarrow[n]$ be any permutation of $[n]$
and a fixed $z\in \{0,1\}^n$, then an algorithm that
only make use of unbiased variation operators behaves the same on $f(x)$ as on
$g(x):=f(\permut{\sigma}{x} \oplus z)$~\citep{Lehre2012}. Furthermore, it is easy
to see that this statement extends to the multi-objective setting, \ie for any
$f\colon \{0,1\}^n\rightarrow \Real^d$.
Owing to Lemma~1 in
\citet{Doerr2020}, every unary unbiased variation operator
on $\{0,1\}^n$ can be modelled as a two-step process:
first choose a Hamming radius $r$ from some distribution, then return a search
point on the Hamming sphere $S_r(x):=\{y \in \{0,1\}^n\mid H(x,y)=r\}$ chosen
uniformly at random. This is summarised in Algorithm~\ref{alg:unary-unbiased-op}.}

\begin{algorithm}[!ht]
    \begin{algorithmic}[1]
    \STATE Sample $z=(z_1,\dots,z_n)$ where independently
    for each $i\in[n]\colon$
        \[
        z_i = \begin{cases}
               x_i & \text { with prob. } 1-1/n,\\
               1-x_i & \text{ with prob. } 1/n
               \end{cases}
        \]
    \RETURN String $z$;
    \end{algorithmic}
    \caption{Standard bit mutation of a string $x \in \{0,1\}^n$}
    \label{alg:bitwise-mutation}
\end{algorithm}

\begin{algorithm}[!ht]
    \begin{algorithmic}[1]
    \STATE Sample $r$ from some distribution on $\{0, \dots, n\}$;
    \STATE Sample string $z\sim\Unif(S_r(x))$ where $S_r(x):=\{y \in \{0,1\}^n\mid H(x,y)=r\}$;
    \RETURN String $z$;
    \end{algorithmic}
    \caption{Unary unbiased variation operator applied on string $x \in \{0,1\}^n$}
    \label{alg:unary-unbiased-op}
\end{algorithm}

\newedit{It has been shown that some
mutation operators can imitate the one-point crossover operator, such as the
somatic contiguous hypermutation operator. \brandnewedit{This operator has been introduced by~\citet{KelseyTimmis2003} and we consider the refined operator presented as CHM$_3$ in~\citet{Zarges2011}.} We therefore
include this operator (referred simply as \emph{hypermutation}) in our study, and
allow the algorithms to use it in place of standard bit mutation.
The hypermutation operator is described in Algorithm~\ref{alg:hypermutation}. It is parametrised by a parameter $r\in(0,1]$ and the input string is copied to the
ouput, except for a contiguous region of the string (in a circular sense),
starting from a random location $p$ and with a random length $\ell$. In this region bit
values are flipped independently with probability $r$.
\brandnewedit{In the special case of two complementary parents and $r=1$, a hypermutation produces the same distribution of offspring as a two-point crossover (where two cutting points are chosen and substrings are chosen from alternating parents). If, additionally, one end of the interval coincides with one end of the bit string, this is equivalent to a one-point crossover.}
}\dirk{Added this text after Andre's talk.}

\newedit{
    Even though every bit has the same probability $r/2$ of being flipped
by this operator (see Lemma~4 in \citep{Zarges2011}), hypermutation is
not an unbiased variation operator. This is due to the sense of consecutiveness
induced by the operator: conditioned on flipping a bit position $i$,
there is a higher chance for the bits next to $i$ both to the left and to the right
to be flipped, compared to the other positions.}

\begin{algorithm}[!ht]
    \begin{algorithmic}[1]
    \STATE Choose a location $c \sim \Unif(\{1,\dots,n\})$;
    \STATE Choose a length $\ell \sim \Unif(\{0,1,\dots,n\})$;
    \IF{$\ell=0$}
        \STATE Create $z$ as an exact copy of $x$;
    \ELSE
        \STATE Sample $z=(z_1,\dots,z_n)$ where $z_i=x_i$ for $i \notin \{c,\dots,c+\ell-1\bmod n\}$
               and otherwise
               \[
                z_i = \begin{cases}
                                x_i & \text { with prob. } 1-r,\\
                              1-x_i & \text{ with prob. } r
                              \end{cases}
                \]
    \ENDIF
    \RETURN String $z$;
    \end{algorithmic}
    \caption{Somatic contiguous hypermutation with parameter $r \in (0,1]$ of string $x \in \{0,1\}^n$}
    \label{alg:hypermutation}
\end{algorithm}

\subsection{The General Framework of Elitist Black-box Algorithms}

\newedit{In the following, we aim to show that crossover incorporated in GSEMO and NSGA-II leads to efficient runtimes on our proposed royal road functions, while algorithms without crossover fail badly.
To this end, and to derive negative results that are as general as possible, we use a general model for elitist black-box algorithms, similar to the one introduced by~\citet{Doerr2017}.}


\begin{algorithm}[tb]
    \begin{algorithmic}[1]
    \STATE Initialize $P_0 \sim \Unif( (\{0,1\}^n)^{\mu})$;
    \STATE Query the ranking $\rho(P_0,f)$ induced by $f$;
    \FOR{$t:= 0$ \TO $\infty$}
        \STATE Choose a probability distribution $D_t(P_t, \rho(P_t,f))$ on
               $\{\{0,1\}^{n}\}^{\lambda}$ which only depends on its two arguments;
        \STATE Sample $Q_t$ from $D_t$,
               and set $R_t := P_t \cup Q_t$;
        \STATE Query the ranking $\rho(R_t,f)$ induced by $f$;
        \STATE Sort $R_t$ according to $\rho(R_t,f)$;
        \STATE Set $P_{t+1} := (R[1],\dots,R[\mu])$;
    \ENDFOR
    \end{algorithmic}
    \caption{Elitist ($\mu$+$\lambda$) black-box algorithm.}
    \label{alg:muplus-blackbox}
\end{algorithm}

In this model, shown in Algorithm~\ref{alg:muplus-blackbox}, adapted from~\citet{Doerr2017}, the algorithm keeps $\mu$ \emph{best} solutions,
according to the
ranking function $\rho(P,f)$, it has seen so-far and can only
sample new offspring solutions 
based on
these elitist solutions. The ranking
function is deterministic and provides a ranking of all search points seen so far.

Note that NSGA-II can be seen as a \muplusga black-box algorithm with $\lambda=\mu$. A ranking function is obtained by imposing a total order on
the objective space, \ie by sorting the population into layers of non-dominated
fitnesses and further ordering search points within a layer using the crowding
distance measure, as explained before. Hence, NSGA-II fits in the elitist black-box model
of Algorithm~\ref{alg:muplus-blackbox}.

\section{A Multi-Objective Royal Road Function for One-Point Crossover}
\label{sec:rrrmo} 

\newedit{Now we introduce a multi-objective royal road function designed to show the benefits of one-point crossover. The design of this function is deliberately simple as we believe that this best illustrates
the necessity of the one-point crossover operator.
Our construction is inspired by the ``real roal road'' function for one-point crossover in single-objective optimisation by~\citet{Jansen2005c}. However, while their function contains a single global optimum at $1^n$, our function features a set of Pareto-optimal search points at a linear Hamming distance from~$1^n$.}

\newedit{
The idea behind the
design of
our royal road
function is to encourage EMO 
algorithms to evolve a specific number of ones in a search point~$x$, denoted as
$\ones{x}$, and then to evolve a prefix and a suffix of zeros.
%
%
This is achieved by introducing two conflicting objectives that involve maximising the number of leading zeros and trailing zeros, respectively, in particular for all search points with $3n/5$ ones.
There is an additional set of high-fitness search points with
$4n/5$ ones that can be created easily using one-point crossover whereas common
mutation operators require exponential time for the same task.}

\begin{definition}\label{def:rrrmo}
For all $n \in \mathbb{N}$ divisible by~$5$ the bi-objective function
$\RRRMO \colon \{0, 1\}^n \to \mathbb{N}_0^2$ is defined as follows.
Let
$\frontP := \{x \mid \ones{x} = 4n/5 \wedge \LZ(x) + \TZ(x) = n/5\}$
and
$\posFit := \{x \mid \ones{x} \le 3n/5\} \cup \frontP$, then
\begin{align*}
& \RRRMO(x) :=
   \begin{cases}
         (n\ones{x} + \TZ(x),\  n\ones{x} + \LZ(x)) \!\!\! & \text{if } x \in \posFit\\
        (0, 0) & \text{if } x \notin \posFit.
    \end{cases}
\end{align*}
\end{definition}
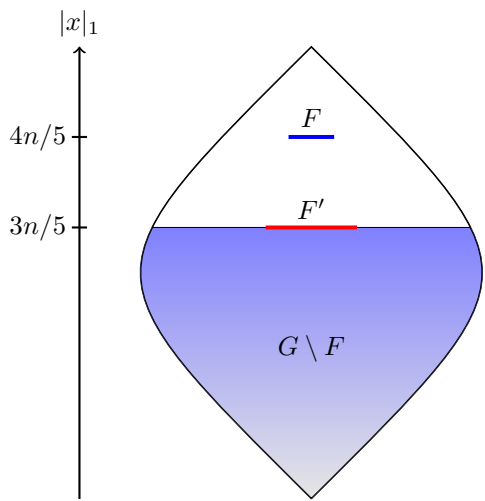
\begin{figure} 
\begin{center}
\begin{tikzpicture}
	\draw[->,thick] (-0.05,0) -- (-0.05,6) node[above] {$|x|_1$};
	\draw[thick] (-0.15,4.8) -- (0.05,4.8);
	\node at (-0.6,4.8) {$4n/5$};
	\draw[thick] (-0.15,3.6) -- (0.05,3.6);
	\node at (-0.6,3.6) {$3n/5$};
	
	\clip[draw]  (3,0) .. controls (0,3) .. (3,6) .. controls (6,3) .. (3,0);
    \draw[top color=blue!50,bottom color=black!10] (0.0,0.0) rectangle (6.0,3.6);
	\draw[thick] (3,0) .. controls (0,3) .. (3,6) .. controls (6,3) .. (3,0);
	
	\draw[color=blue,line width=1.5] (2.7,4.8) -- (3.3,4.8);
	\node at (3,5.05) {$\frontP$};
	
	\draw[color=red,line width=1.5] (2.4,3.6) -- (3.6,3.6);
	\node at (3,3.85) {$\lfrontP$};
	
	\node at (3,2) {$\posFit \setminus \frontP$};
\end{tikzpicture}
\end{center}
\caption{A sketch of the function \RRRMO. Here the Boolean
hypercube $\{0,1\}^n$ is illustrated where the $y$ axis shows search points with the same number of
ones. The sketch illustrates the Pareto front~$F$, the set $F'$ and $G \setminus F$.}
\label{fig:rrmo-function}
\end{figure}

Note that all $x \in \posFit$ strictly dominate all $y \notin \posFit$ as $f(0^n) = (n, n)$
and for $x \in \posFit \setminus \{0^n\}$ both objective values are at least
$n\ones{x} \ge n$.
Algorithms initialising their population uniformly at random will typically
start with search points having at most $3n/5$ ones, that is, only search points
in $\posFit$ that fall into the first case of Definition~\ref{def:rrrmo}. Then the
function gives a strong fitness signal to increase the number of ones. In fact,
every search point $x \in \posFit$ dominates all search points $y$ with $\ones{y} <
\ones{x}$. Every search point $x \in \frontP$ dominates all search points $y \notin
\frontP$.
Comparing two solutions $x, y \in \posFit$ with $\ones{x} = \ones{y}$, $x$ weakly
dominates $y$ if $\TZ(x) \ge \TZ(y)$ and $\LZ(x) \ge
\LZ(y)$; it strongly dominates $y$ if one of these inequalities is
strict. Thus, the set
\begin{align*}
\frontP
    &:= \{0^i 1^{4n/5} 0^{n/5-i} \mid 0 \le i \le n/5\}
\end{align*}
where all zeros contribute to either $\TZ(x)$ or $\LZ(x)$ is the
Pareto-optimal set for \RRRMO and all search points in
\begin{align*}
\lfrontP
    &:= \{0^i 1^{3n/5} 0^{2n/5-i} \mid 0 \le i \le 2n/5\}
\end{align*}
dominate all search points $y \notin \lfrontP \cup \frontP$.
\newedit{Figure~\ref{fig:rrmo-function} illustrates the structure of the search space
	that we have described.}

\newedit{By the design of the function, a solution in $\posFit$ with a larger number of ones always dominates those in $\posFit$ with less number of ones. Therefore, a set $S$ of non-dominated solutions in $G$ only contains solutions with the same number of ones.}
The following lemma bounds the number of non-dominated solutions
\newedit{in $S$ more precisely.}

\begin{lemma}\label{lem:size-non-dom-set}
If $S$ is a set of non-dominated solutions \newedit{in $\posFit$} of $\RRRMO$ \newedit{(i.\;e.\ for $x,y \in S$ \brandnewedit{with $x \neq y$} we neither have $x \succeq y$ nor $y \succeq x$ with respect to $\RRRMO$)} \newedit{with
$k$ ones} 
then $|S| \leq n-k+1$.
\end{lemma}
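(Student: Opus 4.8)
The plan is to use the observation made just before the lemma---that every element of $S$ has the same number of ones, here $k$---together with the explicit form of $\RRRMO$ on such search points. For $x \in \posFit$ with $\ones{x}=k$ we have $\RRRMO(x)=(nk+\TZ(x),\,nk+\LZ(x))$, so for two such search points $x,y$ the relation $x \succeq y$ is equivalent to $\TZ(x)\ge\TZ(y)$ and $\LZ(x)\ge\LZ(y)$. Hence the non-domination hypothesis says exactly that the pairs $(\TZ(x),\LZ(x))$ for $x\in S$ form an antichain in $\Natural_0^2$ under the componentwise order; in particular these pairs are pairwise distinct, since two equal pairs would witness mutual weak domination of two distinct members of $S$.

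Next I would dispose of the degenerate cases. If $\posFit$ contains no search point with exactly $k$ ones (which happens unless $k\le 3n/5$ or $k=4n/5$), then $S=\emptyset$ and the bound is vacuous; if $k=0$ then $S\subseteq\{0^n\}$ and $|S|\le 1\le n+1$. For the remaining values of $k$ every $x\in S$ has $k\ge 1$ ones, hence at most $n-k$ zeros, so $\TZ(x)\in\{0,1,\dots,n-k\}$. The key step is to show that the map $x\mapsto \TZ(x)$ is injective on $S$: if $x\ne y$ in $S$ had $\TZ(x)=\TZ(y)$, then since $x,y$ are incomparable we cannot have $\LZ(x)=\LZ(y)$ (that would give $\RRRMO(x)=\RRRMO(y)$, hence $x\succeq y$), so without loss of generality $\LZ(x)>\LZ(y)$; but then both objective values of $x$ are at least the corresponding ones of $y$, i.e.\ $x\succeq y$, contradicting non-domination. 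As $\{0,1,\dots,n-k\}$ has size $n-k+1$, injectivity yields $|S|\le n-k+1$.

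I do not anticipate a genuine obstacle: this is the standard argument of projecting an antichain in the plane onto one coordinate, specialised here to the coordinates $\TZ$ and $\LZ$. The only points requiring a little care are (i) using weak rather than strict domination, so that two distinct search points with identical fitness cannot both lie in a non-dominated set, and (ii) checking that the single bound $|S|\le n-k+1$ also covers the boundary cases $k=0$ and $k=4n/5$ as well as the ranges of $k$ for which $\posFit$ has no search point with $k$ ones---all of which are immediate.
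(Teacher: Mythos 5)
Your proof is correct and follows essentially the same route as the paper: both arguments observe that all elements of $S$ share the value $k$ of ones, show that two distinct non-dominated points cannot share the same $\TZ$-value (equivalently, the same first objective value $kn+\TZ(x)$) since one would then weakly dominate the other, and conclude from the range $\{0,\dots,n-k\}$ of $\TZ$ that $|S|\le n-k+1$. The extra care you take with the degenerate cases ($k=0$, values of $k$ not realised in $\posFit$) matches the paper's brief treatment of $k=0$ and is harmless.
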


\begin{proof}
For $k=0$ there is only one search point, thus
we assume $k \ge 1$. Consider $x \in S$ with $f(x) = (kn + i, kn + j)$. If there
is a search point $y \in S$, $y \neq x$, with $f(y) = (kn + i, kn + j')$ then
$y$ weakly dominates $x$ if $j' \ge j$ and otherwise $x$ weakly dominates $y$.
Thus, for every value~$i$ there can only be one search point in $S$ with an
$f_1$-value of $kn + i$. Since the range of \TZ is
\newedit{$0, \ldots , n-k$}, the claim follows.
\end{proof}

%

\subsection{Hardness \newedit{of \RRRMO} for EMOs 
         without Crossover}\label{sec:lower-bound-pc-zeross} 


We first show that disabling crossover by setting $p_c=0$ makes GSEMO and NSGA-II
highly inefficient on \RRRMO. Without crossover, both algorithms require exponential
time even to discover a first Pareto-optimal search point.

\begin{theorem}\label{thm:gsemo-pc-zero}\label{thm:nsga-ii-pc-zero}
\newedit{The following algorithms}
require at least $n^{\Omega(n)}$ evaluations in expectation to find any Pareto-optimal search point for
\RRRMO:
\begin{itemize}
\item GSEMO (Algorithm~\ref{alg:gsemo}) with $p_c=0$ \newedit{and standard bit mutation},
\item NSGA-II (Algorithm~\ref{alg:nsga-ii}) with $p_c=0$, $\mu\in\poly(n)$ \newedit{and standard bit mutation}.
\end{itemize}
\end{theorem}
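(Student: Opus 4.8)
The plan is to show that both GSEMO and NSGA-II without crossover can only make progress on \RRRMO through standard bit mutation, and that reaching any Pareto-optimal search point requires, at some step, a mutation that flips $\Omega(n)$ specific bits, which has probability $n^{-\Omega(n)}$ per generation. First I would analyse the structure of the search space. After initialisation, the population with overwhelming probability consists only of search points with at most $3n/5$ ones (by a Chernoff bound), hence only search points in $\posFit$ of the first type. Mutation can increase $\ones{x}$, but by elitism and the domination structure the population can never ``lose'' a search point with the maximum number of ones reached so far unless a new one with at least as many ones is created. I would argue that, to create a Pareto-optimal search point (an element of $\frontP$, which has $4n/5$ ones and all zeros at the ends), the algorithm must at some point create a search point in $\frontP$ directly by mutation from a parent. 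The key observation is that any parent $x$ in the population satisfies either $\ones{x} \le 3n/5$ or $x \in \frontP$; but before the first element of $\frontP$ is created, every parent has $\ones{x} \le 3n/5$, so the Hamming distance from any such parent to any element of $\frontP$ is at least $4n/5 - 3n/5 = n/5 = \Omega(n)$ (just counting the change in the number of ones).

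The main step is then a standard ``needle jump'' argument: the probability that a single standard bit mutation flips at least $n/5$ bits is at most $\binom{n}{n/5}(1/n)^{n/5} \le (en/(n/5))^{n/5} (1/n)^{n/5} = (5e/n)^{n/5} = n^{-\Omega(n)}$. Since a run that finds a Pareto-optimal point must have at least one generation in which such a jump occurs, and each generation offers at most $\mu = \poly(n)$ (for NSGA-II) or $1$ (for GSEMO) mutation trials, a union bound over the first, say, $n^{cn}$ generations shows that with probability $1 - n^{-\Omega(n)}$ no Pareto-optimal point is found in that time. This yields an expected number of generations of $n^{\Omega(n)}$, and hence $n^{\Omega(n)}$ fitness evaluations (the extra polynomial factor $\mu$ per generation for NSGA-II is absorbed).

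One subtlety I would need to handle carefully is ruling out the possibility that the algorithm reaches $\frontP$ via a sequence of intermediate populations that somehow ``shortcut'' the gap — but this cannot happen, because the only search points with fitness better than $(3n/5 \cdot n + \text{something})$ that are not dominated by the bulk are precisely those in $\frontP$ itself (search points with $3n/5 < \ones{x} < 4n/5$, or with $\ones{x} = 4n/5$ but $\LZ + \TZ < n/5$, all fall outside $\posFit$ and get fitness $(0,0)$, so they are dominated by everything in $\posFit$ and can never enter an elitist population once a nonzero search point is present). So no search point with $3n/5 < \ones{x} \le 4n/5$ other than those in $\frontP$ is ever accepted; thus the first accepted search point with more than $3n/5$ ones must lie in $\frontP$, and it is created by mutation from a parent with at most $3n/5$ ones. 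I also need the easy fact that the initial population avoids $\frontP$ and the zero-fitness region with overwhelming probability (again Chernoff, using $\mu = \poly(n)$), so that the ``first jump'' genuinely occurs during the run rather than at initialisation.

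The hard part, such as it is, is purely bookkeeping: phrasing the elitism/domination argument so it applies uniformly to GSEMO (variable-size population, strict-domination-based pruning) and to NSGA-II (fixed-size population, non-dominated sorting plus crowding distance) without redoing the analysis twice. For GSEMO this is immediate from the acceptance rule; for NSGA-II I would invoke that any search point in $\posFit \setminus \frontP$ with a positive number of ones dominates all zero-fitness points and is itself dominated by any point with more ones, so once the population is nonzero everywhere, the first rank $F^1$ consists of non-dominated points all having the same (current maximum) number of ones, and a jump to $\frontP$ from there still requires flipping $\Omega(n)$ bits in one mutation. Then the negative-drift-free, direct union-bound argument closes both cases simultaneously.
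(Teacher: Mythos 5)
Your proposal is correct and follows essentially the same route as the paper's proof: Chernoff bounds to place the initial population in $\posFit$ with at most $3n/5$ ones, the observation that elitism then confines the population to $\posFit$ (so the first accepted point with more than $3n/5$ ones must already lie in $\frontP$, at Hamming distance at least $n/5$ from any parent), and an $n^{-\Omega(n)}$ bound on the per-trial probability of such a mutation jump. The only cosmetic difference is that you bound the probability that standard bit mutation flips at least $n/5$ bits at all via $\binom{n}{n/5}n^{-n/5}$, whereas the paper fixes a target $y\in\frontP$, uses the probability $n^{-n/5}$ of flipping the specific required bits, and takes a union bound over the $n/5+1$ points of $\frontP$; both yield the claimed $n^{\Omega(n)}$ bound.
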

\begin{proof}
\newedit{We first show the result for GSEMO. By}
classical Chernoff bounds the probability of initialising the algorithm with
a search point of at most $3n/5$ ones, \ie a search point in $\posFit\setminus\frontP$,
is $1-2^{-\Omega(n)}$.
We assume in the following that this has happened and note that then the
algorithm will never accept a search point $s'$ with fitness $(0, 0)$,
\ie $s' \notin \posFit$.
%
%
Furthermore, because $p_c=0$ the algorithm can only rely
on the standard bit mutation operator to generate a search point on $\frontP$.
Fix a search point $y \in F$, then for each search point $x \in \posFit \setminus \frontP$ the Hamming distance to $y$ is at least $H(x, y) \ge \ones{y} - \ones{x} \ge n/5$.
Therefore, flipping $n/5$ specific
bits is required to create $y$ as a mutant of~$x$, 
and this
occurs with probability $n^{-n/5}$. Taking a union bound over all
$y \in \frontP$, the probability of creating any search point in $\frontP$ is at most
$|\frontP| \cdot n^{-n/5} = O(n^{-n/5 + 1})$.
\newedit{Thus, the expected time for this to happen is stochastically dominated by the expectation of a geometric random variable with parameter $O(n^{-n/5+1})$, which is $\Omega(n^{n/5-1})$.} By the law of total probability, the expected number of
evaluations required for this phase is at least
$
(1-2^{-\Omega(n)}) \cdot \Omega(n^{n/5-1})
    = n^{\Omega(n)}$.

\newedit{The proof for NSGA-II follows closely the above arguments, except for
the initialisation.
%
The probability of initialising the whole population of NSGA-II with
$\mu$ search points of at most $3n/5$ ones is at least
$1-\mu\cdot 2^{-\Omega(n)}=1-o(1)$
by a Chernoff and a union bound, and given $\mu \in \poly(n)$.
%
If this occurs, then afterwards the algorithm will never accept a search point
with fitness $(0,0)$ during the
survival
selection. Therefore, flipping $n/5$ $0$s to $1$s by mutation is still
required to create the first Pareto-optimal solution, and overall the expected
number of 
fitness evaluations
is still at least
    $(1 - o(1))(1 + n^{\Omega(n)})=n^{\Omega(n)}$.
}
\end{proof}

Furthermore, we prove that \emph{every} algorithm from the general framework of Algorithm~\ref{alg:muplus-blackbox}
also requires exponential optimisation time in expectation to create
a first Pareto-optimal point of \RRRMO if only unary unbiased variation
operators are used. This larger generality comes at the expense of a slightly weaker lower bound where the base of the exponential function is~2 instead of~$n$.

\begin{theorem}\label{thm:elitist-blackbox-unary-unbiased}
Every black-box algorithm that fits the model of Algorithm~\ref{alg:muplus-blackbox}
with $\mu=\poly(n)$ and only uses unary unbiased variation operators \newedit{(Algorithm~\ref{alg:unary-unbiased-op})}
for choosing the distribution $D_t$ requires at least
    $2^{\Omega(n)}/\lambda$ generations, or $2^{\Omega(n)}$ fitness evaluations, in expectation
to find any Pareto-optimal
point
of \RRRMO.
\end{theorem}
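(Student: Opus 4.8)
The plan is to combine the structural hardness already established in Theorem~\ref{thm:gsemo-pc-zero} with the symmetry of unbiased operators. The key observation is that unbiased unary variation operators must treat all $2^n$ bit strings and the two bit values symmetrically; by the characterisation in Algorithm~\ref{alg:unary-unbiased-op}, such an operator first picks a Hamming radius $r$ from some distribution and then a \emph{uniformly random} point on the Hamming sphere $S_r(x)$. So from any parent $x$, the offspring is uniform on a sphere around $x$. I would show that, as long as no Pareto-optimal point has been found, the population $P_t$ consists of search points in $\posFit \setminus \frontP$ (plus possibly points of fitness $(0,0)$, which cannot be in the elitist population once a point of $\posFit$ is present), and hence every parent has at most $3n/5$ ones. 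Since $\frontP$ lies at Hamming distance at least $n/5$ from every such parent, creating a Pareto-optimal offspring requires the operator to hit the set $\frontP$ from a sphere of radius $r \ge n/5$ around a point with $\le 3n/5$ ones.

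The heart of the argument is a counting/anti-concentration step: for any fixed $x$ with $\ones{x} \le 3n/5$ and any radius $r$, the probability that a uniformly random $y \in S_r(x)$ lands in $\frontP$ is exponentially small. First, to land in $\frontP$ one needs $\ones{y} = 4n/5$, which forces $r \ge n/5$ and, more precisely, the number of $0 \to 1$ flips minus $1 \to 0$ flips to equal $4n/5 - \ones{x} \ge n/5$. Conditioned on the multiset of flips having the right "net" count, $y$ is still uniform over a huge collection of sphere points, of which only $|\frontP| = n/5 + 1$ are in $\frontP$; a crude bound $\binom{n}{r} \ge \binom{n}{n/5} = 2^{\Omega(n)}$ on the sphere size versus the polynomial-sized target $\frontP$ gives a per-query success probability of $2^{-\Omega(n)}$, after summing over the (at most $n+1$) possible radii and over the $\le \mu = \poly(n)$ parents and $\lambda$ offspring slots. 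A union bound over the $\mu$ initial population members (Chernoff: each has $\le 3n/5$ ones w.\,h.\,p., since $3n/5 > n/2$) handles initialisation, exactly as in Theorem~\ref{thm:gsemo-pc-zero}.

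Putting it together: conditioned on a typical initialisation, in each generation at most $\lambda$ offspring are sampled, each Pareto-optimal with probability $2^{-\Omega(n)}$, so by a union bound the probability of producing \emph{any} Pareto-optimal point in a given generation is at most $\lambda \cdot 2^{-\Omega(n)}$. Hence the number of generations until success stochastically dominates a geometric random variable with that parameter, giving expectation $2^{\Omega(n)}/\lambda$ generations; since each generation costs $\lambda$ fitness evaluations this is $2^{\Omega(n)}$ evaluations. Multiplying by the $1 - o(1)$ probability of a good initialisation preserves the bound.

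\textbf{Main obstacle.} The delicate point is the anti-concentration estimate bounding $\prob{y \in \frontP \mid y \sim \Unif(S_r(x))}$ uniformly over all radii $r$ and all parents $x$ with $\ones{x} \le 3n/5$. One has to be careful that for radii $r$ close to $n/5$ the sphere is not small enough to spoil the bound — but since hitting $\frontP$ requires $\ones{y} = 4n/5$, the relevant quantity is really the number of ways to distribute $0\to1$ and $1\to0$ flips, and even the minimal case $r = n/5$ (all flips on zero-bits, of which there are $\ge 2n/5$) gives $\binom{\ge 2n/5}{n/5} = 2^{\Omega(n)}$ choices against a target of polynomial size. Making this counting clean for \emph{every} $r$ simultaneously, rather than just the extreme case, is the one place that needs genuine care; everything else mirrors the proof of Theorem~\ref{thm:gsemo-pc-zero} and the standard elitist black-box framework.
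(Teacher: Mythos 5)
There is a genuine gap in the counting step, and it traces back to an invariant on the population that is too weak. You only establish that, before the first Pareto optimum is found, every parent lies in $\posFit \setminus \frontP$ and hence has at most $3n/5$ ones. This gives the lower bound $H(x,y) \ge n/5$ for $y \in \frontP$, but no upper bound, and your anti-concentration estimate breaks down precisely at large radii: the bound $\binom{n}{r} \ge \binom{n}{n/5}$ is false for $r > 4n/5$, and for $r$ close to $n$ the sphere $S_r(x)$ is polynomially small or even a single point. Concretely, $x = 1^i 0^{4n/5} 1^{n/5-i}$ lies in $\posFit \setminus \frontP$ (it has only $n/5$ ones), yet its complement $0^i 1^{4n/5} 0^{n/5-i}$ is Pareto-optimal; a unary unbiased operator that puts all its mass on radius $r=n$ would create a Pareto optimum from this parent with probability~$1$. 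Your refined count of flip distributions does not rescue this either, since with $a = n-k$ zero-flips and $b = k$ one-flips there is exactly one choice. You flag the wrong extreme as the dangerous one (radii near $n/5$ rather than near $n$).

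The paper closes exactly this hole by working with the subset $G' := \{x \mid 2n/5 \le \ones{x} \le 3n/5\}$: Chernoff bounds place the whole initial population in $G'$ with probability $1 - \mu \cdot 2^{-\Omega(n)}$, and elitism then confines the population to $G' \cup \frontP$ forever, since any point with fewer than $2n/5$ ones is dominated by everything in $G'$. The lower bound $\ones{x} \ge 2n/5$ forces at least $n/5$ positions where $x$ and any $y \in \frontP$ both carry a one, hence $H(x,y) \le 4n/5$, and the two-sided bound $n/5 \le H(x,y) \le 4n/5$ guarantees $\binom{n}{H(x,y)} \ge \binom{n}{n/5} \ge 5^{n/5}$ for \emph{every} admissible radius. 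With that fact in hand, your argument (per-query probability at most $(n/5+1)\cdot 5^{-n/5}$, union bound over the $\lambda$ offspring per generation, geometric waiting time) goes through and is essentially the paper's proof. So the overall strategy is right, but as written the proposal is missing the one structural observation that makes the key estimate valid uniformly in $r$.
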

\begin{proof}
Let $G' := \{x \mid 2n/5 \le \ones{x} \le 3n/5\}$ be a subset of~$G$.
Using Chernoff bounds and a union bound over $\mu=\poly(n)$ initial search points, the probability of initialising the first $\mu$ search points in $G'$ is $1 - \mu \cdot 2^{-\Omega(n)}$. Then, owing to elitism the algorithm will only accept
points in $G' \cup F$. 
%

For all search points $x \in G' \setminus F$ and all $y \in F$ we have $H(x, y) \ge \ones{y} - \ones{x} \ge n/5$. Moreover, since $\ones{x} \ge 2n/5$ and $\ones{y} = 4n/5$ there are at least $n/5$ bit positions $i$ in which $x_i = y_i = 1$.
Together,
$n/5 \le H(x, y) \le 4n/5$. \newedit{Let $\mathrm{op}$ be a unary unbiased variation
operator according to Algorithm~\ref{alg:unary-unbiased-op},
then} even when the radius $r$ is chosen as $r := H(x, y)$, the probability that
$\mathrm{op}(x)$
creates $y$ is
$1/\binom{n}{H(x, y)} \le 1/\binom{n}{n/5}
    \leq \frac{(n/5)^{n/5}}{n^{n/5}}
    = 5^{-n/5}$.
Taking a union bound over all search points $y \in F$, the probability of creating any Pareto-optimal search point is at most $(n/5 + 1) \cdot 5^{-n/5} := p$.
The expected number of evaluations until a Pareto-optimal search point is found is thus at least $1/p = 2^{\Omega(n)}$. The expected number of generations follows since every generation makes $\lambda$ evaluations.
\end{proof}


\subsection{Can Hypermutation Help?}

\newedit{As mentioned earlier, hypermutation can
simulate the effect of $1$-point crossover, hence it can speed up the optimisation
process in single-objective optimisation~\citep{Zarges2011}. Both operators are able to flip a contiguous interval of bits, if one-point crossover is applied to a suitable selection of parents. Recall that hypermutation is a unary, but not unbiased operator, and thus evolutionary algorithms using hypermutation are not covered by the general lower bound from Theorem~\ref{thm:elitist-blackbox-unary-unbiased}.
Thus it is an
interesting question whether hypermutation can optimise \RRRMO efficiently or not.
We show that, when replacing standard bit mutations with hypermutations in GSEMO, the answer is negative.
The intuition for this failure is that the parameter $r$ of the operator has
to be set substantially differently between the two tasks: finding the first
Pareto-optimal solution, and covering the whole Pareto front afterwards.}

\newedit{\begin{theorem}\label{thm:gsemo-pc-zero-hypermutation}
GSEMO (Algorithm~\ref{alg:gsemo}) with $p_c=0$, and \newedit{using hypermutation}
with any parameter $r>0$ \newedit{as the only mutation operator} requires at
least $2^{\Omega(n)}$ evaluations in expectation to
optimise \RRRMO.
\end{theorem}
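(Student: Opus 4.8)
The plan is as follows. I would first condition on the (overwhelmingly likely) event that the initial search point lies in $G':=\{x:2n/5\le\ones{x}\le3n/5\}$; by Chernoff bounds this fails only with probability $2^{-\Omega(n)}$, so it suffices to lower‑bound $\expect{T}$ given this event. As observed before Lemma~\ref{lem:size-non-dom-set}, the GSEMO population at any time consists of search points with one common number of ones, and this number never decreases (a point with strictly fewer ones, or with fitness $(0,0)$, is dominated by the whole population and hence rejected). Consequently the population remains inside $G'$ until the first point of $\frontP$ is accepted; a point of $\frontP$ has $4n/5$ ones and therefore strictly dominates the entire population at that moment, so the population collapses to this single point and thereafter stays a subset of $\frontP$ that can only grow by ``internal'' hypermutations turning one Pareto‑optimal point into another. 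The second ingredient is an elementary estimate for a single hypermutation (Algorithm~\ref{alg:hypermutation}): for $x\neq z$, writing $d:=\Hamming(x,z)$ and letting $w$ be the length of the shortest cyclic interval containing the $d$ positions on which $x$ and $z$ disagree, the probability that a hypermutation with parameter $r$ maps $x$ to $z$ is at most $r^{\,d-1}(1-r)^{\,w-d}/(n+1)$ — condition on the chosen interval (it must have length $\ge w$), observe that inside it the $d$ disagreeing bits must flip and the remaining bits must not, bound the number of admissible cut points by $n$, and sum the resulting geometric series in the interval length.

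For $r\le 1/2$ this already suffices. Every $x\in G'$ satisfies $\Hamming(x,y)\ge\ones{y}-\ones{x}\ge n/5$ for all $y\in\frontP$, so by the estimate above (with $w\ge d$) and a union bound over the $n/5+1$ points of $\frontP$, a hypermutation of any current population member creates a point of $\frontP$ with probability at most $(n/5+1)r^{\,n/5-1}/(n+1)\le 2^{-n/5+1}$. Hence within $2^{\,n/10}$ generations no Pareto‑optimal point appears except with probability $2^{-\Omega(n)}$, and $\expect{T}=2^{\Omega(n)}$.

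For $r>1/2$ the point is that flipping many bits is no longer improbable, so one cannot show that the first Pareto‑optimal point is hard to reach; instead I would show that covering the \emph{whole} front is hard, via the fixed ``central'' target $y^\ast:=0^{i^\ast}1^{4n/5}0^{\,n/5-i^\ast}\in\frontP$ with $i^\ast:=\lfloor n/10\rfloor$. The crucial geometric observation is that two front points $0^i1^{4n/5}0^{\,n/5-i}$ and $0^j1^{4n/5}0^{\,n/5-j}$ disagree on exactly two blocks of $|i-j|$ positions, separated cyclically by gaps of sizes $4n/5-|i-j|$ and $n/5-|i-j|$; hence any cyclic interval containing both blocks also contains at least $n/5-|i-j|$ agreeing positions, and by the hypermutation estimate the probability of turning one into the other in a single step is at most $(1-r)^{\,n/5-|i-j|}/(n+1)\le 2^{-(n/5-|i-j|)}/(n+1)$. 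Since $|i-i^\ast|\le n/10+1$ for every $i\in\{0,\dots,n/5\}$, every internal move that creates $y^\ast$ has probability $2^{-\Omega(n)}$, and (summing the geometric series over all targets, which is dominated by its largest term) every internal move starting from the singleton population $\{y^\ast\}$ has probability $2^{-\Omega(n)}$ as well. Now distinguish how $y^\ast$ first enters the population. If $y^\ast$ is the very first point of $\frontP$ created, the population collapses to $\{y^\ast\}$ and, by the last estimate, remains there for $2^{\Omega(n)}$ further generations with probability $1-2^{-\Omega(n)}$, so the other $n/5$ front points are not added in time. Otherwise $y^\ast$ is created by an internal move from another front point, an event of probability $2^{-\Omega(n)}$ per generation. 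In both cases $\frontP$ is not covered within $2^{\,n/20}$ generations except with probability $2^{-\Omega(n)}$, so $\expect{T}=2^{\Omega(n)}$; since each GSEMO generation costs one fitness evaluation, this proves the theorem.

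The step I expect to be most delicate is exactly this $r\approx 1$ regime: the usual ``a linear number of bits must be flipped'' lower‑bound argument breaks down, and one must instead exploit that a hypermutation with large $r$ flips essentially a single \emph{contiguous} block and therefore cannot flip two far‑apart bits while keeping the $\Omega(n)$ bits between them intact — which is precisely what would be needed to move along the Pareto front — combined with the fact that GSEMO's population on $\frontP$ starts from, and largely stays at, a single point. The remaining ingredients (the per‑step hypermutation bound and the Chernoff/union‑bound bookkeeping) are routine.
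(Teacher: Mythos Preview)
Your proposal is correct and follows essentially the same route as the paper's proof: you split on $r\le 1/2$ versus $r>1/2$, handle the first case by the ``$\Omega(n)$ bits must flip'' argument, and for the second case focus on the central Pareto point $0^{\lfloor n/10\rfloor}1^{4n/5}0^{\lceil n/10\rceil}$ and the two subcases (it is the first front point found, or it is not). Your geometric observation that any cyclic interval covering both disagreement blocks between two front points must contain at least $n/5-|i-j|$ agreeing positions is exactly the content of the paper's Lemma~\ref{lem:gsemo-pc-zero-hypermutation-prep}, which states equivalently that $\ell\ge n/5+\Hamming(x,y)/2$ is necessary. The only cosmetic differences are that you package the per-step hypermutation estimate as a general bound $r^{d-1}(1-r)^{w-d}/(n+1)$ up front (which is correct, via summing the geometric series in $\ell$ for each starting point $c$), and that you condition on initialisation in $G'=\{x:2n/5\le\ones{x}\le 3n/5\}$ rather than $G\setminus F$; neither changes the argument.
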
}

\newedit{In order to prove Theorem~\ref{thm:gsemo-pc-zero-hypermutation}, we first show the following lemma on the parameter $\ell$ of
hypermutation when converting between Pareto-optimal solutions of \RRRMO.}

\newedit{\begin{lemma}\label{lem:gsemo-pc-zero-hypermutation-prep}
For any two distinct Pareto-optimal solutions $x$ and $y$ of $\RRRMO$,
in order to create $y$ from $x$ by hypermutation (Algorithm~\ref{alg:hypermutation})
it is necessary that the parameter $\ell$ is at least $n/5+\Hamming(x,y)/2$.
\end{lemma}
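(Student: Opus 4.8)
The plan is to reduce the statement to an elementary combinatorial fact about covering a set of bit positions by a contiguous circular window. First I would write the two solutions explicitly: since the Pareto set of \RRRMO is $\{0^i 1^{4n/5} 0^{n/5-i}\mid 0\le i\le n/5\}$, there are indices which, without loss of generality, we may call $i<j$, with $x = 0^i 1^{4n/5} 0^{n/5-i}$ and $y = 0^j 1^{4n/5} 0^{n/5-j}$. Comparing the strings position by position, the set $D := \{k\in[n]\mid x_k\neq y_k\}$ of differing positions is the disjoint union of two intervals, namely $\{i+1,\dots,j\}$ (ones in $x$, zeros in $y$) and $\{i+4n/5+1,\dots,j+4n/5\}$ (zeros in $x$, ones in $y$), each of size $j-i$; in particular $\Hamming(x,y)=2(j-i)$.

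Next I would invoke the definition of hypermutation (Algorithm~\ref{alg:hypermutation}): the offspring can equal $y$ only if every bit in $D$ is flipped, while bits outside the chosen window $W:=\{c,\dots,c+\ell-1\bmod n\}$ are never touched. Hence $D\subseteq W$ is necessary, so $\ell=|W|$ is at least the length of the shortest circular interval containing $D$. The two blocks of $D$ split the remaining $n-2(j-i)$ positions into two circular ``gaps'': the gap running through the block of ones common to $x$ and $y$ (positions $j+1,\dots,i+4n/5$) has size $4n/5-(j-i)$, and the gap running through the common zeros and wrapping around the two ends of the string has size $n/5-(j-i)$; both are non-negative since $j-i\le n/5$.

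Then the key observation is that the complement $[n]\setminus W$ of any circular window containing $D$ is itself a circular interval disjoint from $D$, hence contained in one of the two gaps. To make $W$ as short as possible one lets the complement fill the \emph{larger} of the two gaps, which is the one through the common ones, of size $4n/5-(j-i)$ (it exceeds $n/5-(j-i)$ by $3n/5$). This forces
\[
\ell = |W| \ge n - \bigl(4n/5 - (j-i)\bigr) = n/5 + (j-i) = n/5 + \Hamming(x,y)/2,
\]
which is the claim. I would also remark that the degenerate cases $i=0$, $j=n/5$, or $j-i=n/5$ (where one of the zero-blocks or one of the gaps vanishes and $D$ collapses to a single interval) are covered by exactly the same computation.

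The argument is essentially bookkeeping; the only point that needs care is handling the circular structure correctly — in particular identifying the two gaps between the blocks of $D$ and verifying that it is always the gap through the common ones that is the larger one. This is precisely what prevents $\ell$ from being as small as $\Hamming(x,y)$ or the mere diameter of $D$, and instead pins it down to $n/5+\Hamming(x,y)/2$.
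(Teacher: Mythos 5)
Your proof is correct and is in substance the same as the paper's: both rest on the observation that the hypermutation window is a circular interval of $\ell$ positions that must contain both blocks of differing bits, so at best it excludes the larger of the two gaps (the one running through the $4n/5$ common ones), giving $\ell \ge n - \bigl(4n/5 - (j-i)\bigr) = n/5 + \Hamming(x,y)/2$. The paper packages this as a case analysis on the starting position $c$, whereas you use the slightly cleaner fact that the complement of a circular interval is itself a circular interval and must lie inside a single gap; your handling of the degenerate cases is also fine.
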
}
\begin{proof}
\newedit{Because $x$ and $y$ are Pareto optimal and distinct, we can write them as
$x=0^{a}1^{4n/5}0^{n/5-a}$ and $y=0^{b}1^{4n/5}0^{n/5-b}$ for some integers
$a,b\in[0,n/5]$, $a\neq b$.
\brandnewedit{We first consider the case $a<b$, which is} illustrated in Figure~\ref{fig:create-another-pareto-2}.}

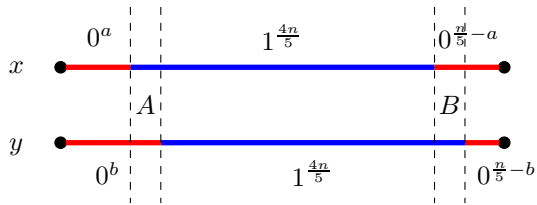
\begin{figure}[h] 
\begin{center}
	\begin{tikzpicture}
		\draw[*-*] (0,2) -- (6,2);
		\node at (-0.5,2) {$x$};
		\draw[line width=2,color=blue] (1,2) -- (5,2);
		\node at (3,2.4)   {$1^{\frac{4n}{5}}$};
		\draw[line width=2,color=red] (0.15,2) -- (1,2);
		\node at (0.6,2.4) {$0^{a}$};
		\draw[line width=2,color=red] (5,2) -- (5.85,2);
		\node at (5.45,2.4) {$0^{\frac{n}{5}-a}$};
		
		\draw[*-*] (0,1) -- (6,1);
		\node at (-0.5,0.95) {$y$};
		\draw[line width=2,color=blue] (1.4,1) -- (5.4,1);
		\node at (3.4,0.6)   {$1^{\frac{4n}{5}}$};
		\draw[line width=2,color=red] (0.15,1) -- (1.4,1);
		\node at (0.7,0.6) {$0^{b}$};
		\draw[line width=2,color=red] (5.4,1) -- (5.85,1);
		\node at (5.95,0.6) {$0^{\frac{n}{5}-b}$};
		
		\draw[dashed] (1,2.8) -- (1,0.2);
		\draw[dashed] (1.4,2.8) -- (1.4,0.2);
		\node at (1.2,1.5) {$A$};
		
		\draw[dashed] (5,2.8) -- (5,0.2);
		\draw[dashed] (5.4,2.8) -- (5.4,0.2);
		\node at (5.2,1.5) {$B$};
	\end{tikzpicture}
\end{center}
\caption{Creating another Pareto-optimal solution $y$ from $x$. Blocks of $1$s are
depicted in blue while those of $0$s are red.}\label{fig:create-another-pareto-2}
\end{figure}

\newedit{
Let $A$ and $B$ be the sets of positions where $x$ and $y$ differ on the left and
on the right, respectively, as shown in the figure. It is necessary that these $A$,
$B$ positions are covered entirely between $c$ and $c+\ell-1\bmod n$ in order
to convert $x$ to $y$ by the operator. Note that $|A|=|B|=b-a=\Hamming(x,y)/2\leq n/5$.
Consider the following choices for the parameter $c$ of the algorithm.
%
%
If $c$ is between the end of $A$ and the beginning of $B$, \ie $b<c\leq 4n/5+a$,
then a wrap-around is required to fully \brandnewedit{cover $A$ and particularly
the end of $A$ has to be reached, therefore}
$\ell\geq b+(n-c)\geq b+(n-(4n/5+a))=n/5+(b-a)$.
%
For the other choices of $c$, \ie $c\leq b$ ($c$ is before the end of $A$) or
$c>4n/5+a$ ($c$ is after the beginning of $B$), note that at least the whole block
$1^{4n/5}$ of $y$ has to be covered by the $\ell$ positions, and so
$\ell \geq 4n/5>n/5+n/5\geq n/5+(b-a)$.
}

\newedit{The result for $a>b$ follows by noting that creating $x$ from $y$ or
creating $y$ from $x$ require exactly the same set of bits (defined by
$\Hamming(x,y)$) to be flipped, so the necessary condition on $\ell$ \brandnewedit{is the same}.
}
\end{proof}


With Lemma~\ref{lem:gsemo-pc-zero-hypermutation-prep} in place, we now prove Theorem~\ref{thm:gsemo-pc-zero-hypermutation}.
\begin{proof}[Proof of Theorem~\ref{thm:gsemo-pc-zero-hypermutation}]
\newedit{By classical Chernoff bounds, the probability of initialising the algorithm with
a search point belonging to $\posFit\setminus\frontP$, is $1-2^{-\Omega(n)}$.
Assuming this occurs, then we consider two following cases.}

\newedit{\textbf{Case of $r\leq 1/2$}: In this case, we only need to focus on the expected
time to create the first individual in $F$ from those in $G\setminus F$.
Fix a search point $y \in F$ and assume hypermutation is applied to a search point $x \in G \setminus F$. As $H(x, y) \ge n/5$, there is a set of $n/5$ bits that must all be flipped in order to create~$y$. Even if all these bits are part of the interval chosen by the hypermutation, the probability of flipping all these bits is at most $r^{n/5}$.
%
%
Thus, by a union
bound on the $n/5+1$ solutions of $F$, the probability of creating any search point in $F$ is at most
$(n/5+1)r^{n/5} \leq (n/5+1)2^{-n/5} = 2^{-\Omega(n)}$. Thus in expectation, $2^{\Omega(n)}$
generations are required to create the first individual in $F$.}

\newedit{\textbf{Case of $r> 1/2$}: Under this setting, it suffices to focus on the expected
time to find a Pareto-optimal set starting from the first Pareto-optimal solution. We
consider the solution $x=0^{n/10}1^{4n/5}0^{n/10} \in F$
and
distinguish two subcases:
}

%
%
%

\newedit{If $x$ is the first Pareto-optimal solution found by the algorithm,
the population only contains $x$ and we consider the expected time to create a second Pareto-optimal solution $y\neq x$.
It follows from Lemma~\ref{lem:gsemo-pc-zero-hypermutation-prep} that
at least $n/5+\Hamming(x,y)/2$ bits are considered in $x$ for modification to
create $y$, however creating $y$ only requires modifying $\Hamming(x,y)$ bits.
Note that $\Hamming(x, y) \le n/5$ as in $y$ the block of ones is shifted by at most $n/10$ compared to~$x$, and every shift by one position increases the Hamming distance by~2.
Therefore there are at least $(n/5+\Hamming(x,y)/2)-\Hamming(x,y)
=n/5-\Hamming(x,y)/2\geq n/5-(n/5)/2=n/10$ bits that are considered for modification
but should not be changed to create $y$. By a union bound over the $n/5$ solutions
in $F \setminus\{x\}$ that can be created, the probability of creating a second
Pareto-optimal solution is at most $(n/5)(1-r)^{n/10}<(n/5)2^{-n/10}=2^{-\Omega(n)}$,
and the expected waiting time for this is at least $2^{\Omega(n)}$.}

\newedit{Otherwise, if $x$ is not the first Pareto-optimal solution found by the algorithm:
In this subcase, $x$ still needs to be created from the others in order to find a Pareto-optimal set.
By exactly the same argument as in the previous subcase, we note that in order to create $x$ from
any Pareto-optimal solution $y\neq x$, there are at least $n/10$ bits that are considered for
modification but should not be flipped. The probability for hypermutation to achieve this is at
most $2^{-\Omega(n)}$ and the expected waiting time
is at least $2^{\Omega(n)}$.}
%

\newedit{Combining the two cases gives a lower bound on the expected
number of generations or of fitness evaluations of $(1-2^{-\Omega(n)})2^{\Omega(n)} = 2^{\Omega(n)}$
to find a Pareto-optimal set of \RRRMO.}
\end{proof}

\newedit{The second case in the proof uses one specific Pareto-optimal solution
in which the block of $4n/5$ ones is exactly centered on the bit string and argues
that generating that solution is difficult. Nevertheless, the argument also holds
for any Pareto-optimal solution with the block of ones sufficiently centered,
\ie at any sublinear distance, and this means that any sublinear number of
points on the Pareto set can be missed and not only one.
The result makes explicit use of the fact that GSEMO can only produce one offspring
in every generation and all dominated solutions are removed. These properties do not hold
for NSGA-II hence it remains an open question whether a similar result can be shown
for NSGA-II. Finally, we also note that mixing the hypermutation with standard bit mutation,
\eg see \citep{Corus2016}, can make GSEMO efficient again, however this is out of the
scope of our study.}

\subsection{Use of \newedit{One-Point Crossover} Implies Expected Polynomial \newedit{Optimization} Time
        \newedit{on \RRRMO}}\label{sec:upper-bound}

While $\RRRMO$ is hard for many EMO algorithms without crossover, now we show for GSEMO and for NSGA-II that they both succeed in finding the whole Pareto set of $\RRRMO$ in expected polynomial time.

\subsubsection{Analysis of GSEMO}\label{sec:gsemo}

We start with GSEMO as the algorithm is conceptually simpler.
\begin{theorem}\label{thm:gsemo}
GSEMO (Algorithm~\ref{alg:gsemo})
with one-point crossover, standard bit mutation and $p_c \in (0,1)$ requires at most 
    $\bigO\left(\frac{n^4}{1-p_c} + \frac{n}{p_c}\right)$
    fitness evaluations in expectation 
    to find the whole Pareto-optimal set 
    of \RRRMO.
\end{theorem}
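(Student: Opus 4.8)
The plan is to trace a typical run of GSEMO through four phases and bound the expected number of iterations (equivalently, up to a constant, the number of fitness evaluations, as GSEMO creates one offspring per iteration) spent in each. Throughout, the population only ever contains points with a common number of ones -- a point with more ones dominating every point with fewer ones -- before a Pareto-optimal point is found, so $|P_t| = \bigO(n)$ by Lemma~\ref{lem:size-non-dom-set}. In \emph{Phase~1} we reach a point with $\ones{x} = 3n/5$: initialisation gives at most $3n/5$ ones with probability $1 - 2^{-\Omega(n)}$ (otherwise the number of ones has negative drift on $(0,0)$-points and drops below $3n/5$ within $\bigO(n)$ iterations, Theorem~\ref{thm:additive-drift}), and while the maximal count $k$ stays below $3n/5$ a standard bit mutation flipping a single zero to one -- probability $\Omega(1)$ since $n-k = \Theta(n)$ -- creates a point dominating the whole population, so $k$ increases; crossover cannot interfere, since selecting the same parent twice merely copies it. Hence Phase~1 costs $\bigO(n^2)$ iterations in expectation, independently of $p_c$. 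In \emph{Phase~2} we assemble all of $F'$: from a point with $3n/5$ ones, a $2$-bit mutation that turns the first one into a zero and flips an interior zero to a one strictly increases $\LZ + \TZ$ (the new point dominating its parent), so after at most $2n/5$ such steps a point of $F'$ enters the population; since all points of $F'$ are mutually incomparable they are never removed, and at most $2n/5$ further ``shift'' steps, each a $2$-bit mutation extending the currently covered contiguous range of $F'$, bring all of $F'$ into the population, after which all remaining $3n/5$-ones points (each dominated by the member of $F'$ with the same number of leading zeros) have been removed, so $P_t = F'$ with $|P_t| = 2n/5 + 1$. Each of the $\bigO(n)$ mutation steps succeeds with probability $\Omega((1-p_c)/n^3)$ (pick the right parent out of $\bigO(n)$, do no crossover, flip the two correct bits), so Phase~2 costs $\bigO(n^4/(1-p_c))$ iterations in expectation.

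The decisive step is \emph{Phase~3}. Starting from $P_t = F' = \{0^j 1^{3n/5} 0^{2n/5-j} \mid 0 \le j \le 2n/5\}$, the key observation is that for every $i$ with $0 \le i \le n/5$ and \emph{every} cutting point $c \in \{i+n/5, \dots, i+3n/5\}$, the one-point crossover taking the prefix from $x_i := 0^i 1^{3n/5} 0^{2n/5-i}$ and the suffix from $x_{i+n/5} := 0^{i+n/5} 1^{3n/5} 0^{n/5-i}$ produces exactly the Pareto-optimal point $0^i 1^{4n/5} 0^{n/5-i}$: the prefix contributes ones in positions $i+1, \dots, c$, the suffix contributes ones in positions $c+1, \dots, i+4n/5$, and together they form the required contiguous block of $4n/5$ ones, flanked by $0^i$ and $0^{n/5-i}$. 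Hence, in a single iteration, there are $\Theta(n)$ admissible ordered parent pairs (with the matching choice of offspring) and, for each, $\Theta(n)$ good cutting points out of the $n+1$ possibilities; since $|P_t| = \Theta(n)$ and mutation leaves such an offspring intact with probability $\Omega(1)$, the probability of creating some Pareto-optimal point in one iteration is $\Omega(n \cdot |P_t|^{-2} \cdot p_c) = \Omega(p_c/n)$, so Phase~3 costs $\bigO(n/p_c)$ iterations in expectation. This is precisely the step that exploits crossover and produces the $n/p_c$ term; counting only $\Theta(1)$ good cutting points per pair would give merely the weaker $\bigO(n^2/p_c)$. (If an earlier crossover success creates a Pareto-optimal point before $F'$ is complete, we skip straight to the last phase; this only shortens the run.)

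Finally, in \emph{Phase~4} we cover all of $F$. Once a Pareto-optimal point is in the population, every accepted point is either another point of $F$ or is dominated by one, so $P_t \subseteq F$ and $|P_t| \le n/5 + 1$; moreover, one-point crossover of two points of $F$ never produces a \emph{new} point of $F$ (it reproduces a parent or leaves $G$), so, exactly as in Phase~2, we extend the covered contiguous range of $F$ by $2$-bit mutations, each of the at most $n/5$ extension steps succeeding with probability $\Omega((1-p_c)/n^3)$, for $\bigO(n^4/(1-p_c))$ iterations in expectation. Summing the four phases yields $\bigO(n^4/(1-p_c) + n/p_c)$. The hard part will be the Phase~3 estimate -- recognising that the family of good (parent pair, cutting point) combinations has size $\Theta(n^2)$, which is what turns a naive $\bigO(n^2/p_c)$ bound into the tight $\bigO(n/p_c)$ -- together with the bookkeeping needed to confirm that the population collapses first onto $F'$ and then onto $F$, and that atypical or out-of-order events only shorten the run.
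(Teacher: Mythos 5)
Your proposal is correct and follows essentially the same route as the paper's proof: a typical-runs decomposition in which standard bit mutations (omitting crossover) drive the population up to $3n/5$ ones and then spread it over $\lfrontP$, a one-point crossover of the $\Theta(n)$ parent pairs $\left(0^i1^{3n/5}0^{2n/5-i},\,0^{i+n/5}1^{3n/5}0^{n/5-i}\right)$ with $\Theta(n)$ admissible cutting points yields the $\bigO(n/p_c)$ jump onto $\frontP$, and two-bit shift mutations cover $\frontP$. The only (harmless) deviations are cosmetic: you merge the paper's six phases into four and avoid the $(1-p_c)$ factor in the early phase by noting that crossover with two identical parents is a no-op.
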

\begin{proof}
We use the well-known method of typical runs~\cite[Section~11]{Wegener2002} and divide a run into several phases that reflect ``typical'' search dynamics. Each phase has a defined goal and we provide upper bounds on the expected time to achieve these goals. When a phase ends, the next phase starts; however, phases may be skipped if the goal of a later phase is achieved before the phase starts.
%

\textbf{Phase 1: Create a search point in $G$.}

By a Chernoff bound
the probability that the initial search point is not in $G$ is
at most $2^{-\Omega(n)}$ as it is necessary to create a search point with more than $3n/5$ ones.
Since all search points not in $G$ have the same fitness vector $(0, 0)$, while no search point in $G$ is found, the population always consists of the latest search point and crossover, if executed, has no effect.
\newedit{The process is then a repeated application of standard bit mutation, and by Lemma~\ref{lem:repeat-bitwise-mutation}
with $c=1$, $\varepsilon=(3/5-1/2)(2/c)=1/5$,
the expected time to reach a search point with at most $3/5n$ ones is $\bigO(n)$.}
Consequently,
the expected number of generations for finding a search point in $G$ is
$1+2^{-\Omega(n)} \cdot \newedit{\bigO(n)} = 1 + o(1)$.

\textbf{Phase 2: Create a search point with $3n/5$ ones.}

Once an individual in $G$ is found, every individual in 
$P_t$ always has the same
number $i$ of ones
because otherwise those with the highest number of ones will dominate
and remove the others.
We now compute the expected time for $P_t$ to contain individuals with
exactly $3n/5$ ones using a fitness-level argument. 
Note that creating a search point with
a higher number of ones always removes the previous population and advances
the process.
Suppose that
$P_t$ contains individuals
with $i\in \{0,\dots,3n/5-1\}$ ones, then
the number of ones can be increased by
selecting an arbitrary individual as a parent, choosing not to apply crossover, and during the mutation flipping exactly one zero bit while
keeping the other bits unchanged. The probability of this event is at least
$
(1-p_c) \cdot \frac{n-i}{n} \cdot \left(1-\frac{1}{n} \right)^{n-1}
    \geq \frac{(1-p_c)(n-i)}{en} 
$.
Thus, summing up expected waiting times of all levels $i$ 
gives a bound of
\begin{align*}
       \frac{en}{1-p_c} \sum_{i=0}^{3n/5-1} \frac{1}{n-i} 
     = \bigO\left(\frac{n}{1-p_c}\right).
\end{align*}

\textbf{Phase 3: Create the first search point in $\lfrontP$.}

To make progress towards $\lfrontP$, it suffices to first select an individual $x$ with a maximum value of $\LZ(x) + \TZ(x)$, denoted by~$2n/5 - i$, and to increase this sum while maintaining $3n/5$ ones.
By Lemma~\ref{lem:size-non-dom-set}, the probability of selecting $x$ as parent is at least $1/|P_t|\geq 1/n$. If the algorithm then
omits crossover, either flips the first 1-bit or the last 1-bit and flips one of the $i$ 0-bits that do not contribute to $\LZ(x) + \TZ(x)$, the fitness is increased.
The probability for this event is at least
$
\frac{1}{n}
\cdot (1-p_c)
\cdot \frac{1}{n}
\cdot \frac{i}{n}
\cdot \left(1-\frac{1}{n} \right)^{n-2}
    \geq \frac{i(1-p_c)}{en^3},
$
and 
the expected number of
generations to complete this phase,
by summing up the expected waiting times
over all
$i$, is at most 
\begin{align*}
    \sum_{i=1}^{2n/5} \frac{en^3}{i(1-p_c)}
     = \frac{en^3}{1-p_c} \sum_{i=1}^{2n/5} \frac{1}{i} 
     = \bigO \left(\frac{n^3\log n}{1-p_c} \right).
\end{align*}

\textbf{Phase 4: Cover $\lfrontP$ entirely.}

Suppose $\lfrontP$ is not completely covered, then
there must exist a missing individual $z$ on $\lfrontP \setminus P_t$
next to a $y \in P_t \cap \lfrontP$,
    \ie $|\TZ(z)-\TZ(y)|=1$
        and $\TZ(z)-\TZ(y)=\LZ(y)-\LZ(z)$.
Individual $z$ can be generated from $y$ by omitting
crossover and flipping
a one at one extreme of the consecutive block of ones to a zero
and a zero at the other extreme to a one while keeping the other bits
unchanged.
Since the parent is chosen uniformly at random,
the probability
of that event is
$
\frac{1-p_c}{n^3}
\cdot \left(1-\frac{1}{n}\right)^{n-2}
    \geq \frac{1-p_c}{en^3}
$.
As $2n/5$ such steps suffice to cover $\lfrontP$, the expected number of generations is
at most
\[
\frac{en^3}{1-p_c}
\cdot \frac{2n}{5}
    = \bigO \left(\frac{n^4}{1-p_c} \right).
\]

\textbf{Phase 5: Create the first search point in $\frontP$.} 

Starting from a population $P_t = F'$, thus $|P_t|= 2n/5+1$,
the first search point on $\frontP$ can
be created by crossover as follows.
If the algorithm picks parents $p_1=0^i 1^{3n/5} 0^{2n/5-i}$ with $1 \le i \le n/5$ and $p_2 = 0^{i+n/5} 1^{3n/5} 0^{n/5-i}$ and any cutting point $\ell \in [i + n/5, i + 3n/5]$, the result of the one-point crossover contains a single block of $4n/5$ ones and thus belongs to~$F$. The same applies to the final offspring if the mutation following crossover does not flip any bit.
The probability for selecting $p_1$ and $p_2$ is
$
    \frac{n/5}{(|P_t|)^2} = \frac{n/5}{(2n/5 + 1)^2} = \Omega(1/n)$.
%
So the probability of creating an offspring in $F$ is at least $p_c \cdot \frac{2n/5}{n+1}
\cdot \Omega(1/n) \cdot (1-1/n)^n =\Omega(p_c/n)$ and the expected number of generations for this to happen is $\bigO(\frac{n}{p_c})$.

\textbf{Phase 6: Cover $\frontP$ entirely.}

The creation of the first search point on $\frontP$ removes
all the individuals on $\lfrontP$. We rely on 2-bit-flip
mutation steps to cover $\frontP$ similarly to the arguments in Phase 4.
A minor difference is that the number of missing points like $z$
is now $n/5$ since $|\frontP|=n/5+1$. Nevertheless, the asymptotic number
of generations to fully cover $\frontP$ is still $\bigO(\frac{n^4}{1-p_c})$.
%

Summing up the time bounds
of all phases gives a
bound of $\bigO\left(\frac{n^4}{1-p_c}+\frac{n}{p_c}\right)$
on the expected number of generations (or evaluations)
for GSEMO to
find the
Pareto set.
\end{proof}

\subsubsection{Analysis of NSGA-II}\label{sec:nsga-ii}

We now turn to the analysis of NSGA-II. The search dynamics of NSGA-II are more complex than those of GSEMO due to the non-dominated sorting and the use of crowding distance. Furthermore, the uniform parent selection of GSEMO is replaced with a binary tournament selection, complicating the analysis. Unlike for GSEMO, it is not always guaranteed that all non-dominated solutions survive to the next generation, especially if the population size is chosen too small.

\begin{theorem}\label{thm:nsga-ii}
NSGA-II~(Algorithm~\ref{alg:nsga-ii}) with $1$-point crossover, standard bit mutation, $p_c \in (0,1)$ and $\mu \geq 2n+5$
finds the whole Pareto set of \RRRMO in expected
    $\bigO\left(\frac{\mu}{n p_c} + \frac{n^3}{1-p_c} \right)$ generations and
    $\bigO\left(\frac{\mu^2}{n p_c} + \frac{\mu n^3}{1-p_c} \right)$ fitness evaluations.
\end{theorem}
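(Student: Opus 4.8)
The plan is to run the method of typical runs and follow exactly the six-phase decomposition of the GSEMO analysis (proof of Theorem~\ref{thm:gsemo}), replacing the uniform single-offspring dynamics by binary tournament selection and the creation of $\mu$ offspring per generation. Three observations make this adaptation work. First, once the population contains some search point of $\posFit$, the first-ranked layer consists exactly of the non-dominated search points of $\posFit$ sharing a common number $k$ of ones (a point with more ones dominates one with fewer within $\posFit$, and points of $\frontP$ dominate everything outside $\frontP$); by Lemma~\ref{lem:size-non-dom-set} it then covers at most $n-k+1$ distinct fitness vectors, which is at most $2n/5+1$ as soon as $k\ge 3n/5$ and at most $n/5+1$ on $\frontP$. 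Since $\mu\ge 2n+5\ge 4(2n/5+1)$, Lemma~\ref{lem:nsga-ii-protect-layer}(ii) guarantees that every fitness vector currently in the first layer survives into the next population; in particular, no point of $\lfrontP$ (later of $\frontP$) that has been found is ever lost. Second, a per-offspring success probability $p$ yields a per-generation success probability $1-(1-p)^{\mu}=\Omega(\min\{1,p\mu\})$ by Lemma~\ref{lem:lambda-trials}, so the $\mu$ offspring per generation gain (up to) a factor $\mu$ over GSEMO. Third, a one-point-crossover offspring of two $\posFit$-points whose number of ones lies strictly between $3n/5$ and $4n/5$ has fitness $(0,0)$, hence dominates nothing, is discarded, and cannot disrupt the layer structure.

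Phase~1: by Chernoff and union bounds, \whp all $\mu$ initial points lie in $\posFit$; otherwise Lemma~\ref{lem:repeat-bitwise-mutation} (with $c=1$, $\varepsilon=1/5$) produces one in expected time $\bigO(n)$, so this phase costs $1+o(1)$ generations. Phase~2 (reach $3n/5$ ones): the maximum number of ones $i^*$ in the population is non-decreasing, since every first-layer point then has $i^*$ ones and the at most four per-objective extremes, which receive infinite crowding distance, survive; a no-crossover step that flips exactly one zero of such a point raises $i^*$, so over $\mu$ offspring $i^*$ increases with probability $\Omega(1-p_c)$ per generation, giving $\bigO(n/(1-p_c))$ generations. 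Phases~3 and~4 then operate within the protected layer of points with $3n/5$ ones, whose fitnesses form a two-objective front with Pareto set $\lfrontP$: arguing essentially as for NSGA-II on \LOTZ, but with the two-bit flips of the GSEMO proof, amplified over the $\mu$ offspring and with an extra $\bigO(\mu)$ factor for tournament parent selection, one first reaches a point of $\lfrontP$ and then closes all $\bigO(n)$ remaining gaps, in $\bigO(n^3/(1-p_c))$ generations overall.

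Phase~5 (reach $\frontP$) mirrors the GSEMO argument: for any of the $n/5$ admissible index choices, selecting the two parents $p_1,p_2\subseteq\lfrontP$ by two tournaments (probability $\Omega(1/\mu^2)$), applying crossover (probability $p_c$) with a good cutting point, and not flipping any bit, produces a point of $\frontP$; amplifying over the $\mu/2$ crossover pairs per generation and using $\mu\ge 2n+5>n\ge p_c n$ gives $\bigO(\mu/(np_c))$ generations. The first $\frontP$-point removes $\lfrontP$, and Phase~6 covers $\frontP$ by two-bit flips exactly as in Phase~4 ($\bigO(n)$ gaps), in $\bigO(n^3/(1-p_c))$ generations; whenever an earlier crossover happens to create a $\frontP$-point the intervening phases are simply skipped. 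Summing the phase bounds gives $\bigO(n^3/(1-p_c)+\mu/(np_c))$ generations, and since each generation performs $\mu$ fitness evaluations, $\bigO(\mu n^3/(1-p_c)+\mu^2/(np_c))$ evaluations.

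The main obstacle is the NSGA-II-specific population bookkeeping in Phases~3--6. One must confirm that the partial copies of $\lfrontP$, and later of $\frontP$, are never destroyed by survival selection — which relies on the precise constant in Lemma~\ref{lem:nsga-ii-protect-layer} together with $\mu\ge 2n+5$ being at least $4(2n/5+1)$ — and, more delicately, that the binary tournament selects a parent of the fitness needed for the next two-bit flip with probability $\Omega(1/\mu)$; this requires understanding the crowding-distance values along $\lfrontP$ and $\frontP$, namely that a representative of each relevant fitness class has crowding distance large enough to beat a constant fraction of the population, which is where Lemma~\ref{lem:nsga-ii-protect-layer}(i) enters. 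One also has to check the phase transitions (creating the first $\frontP$-point removes $\lfrontP$, and crossover offspring of intermediate type are harmless). Once these survival and selection facts are in place, the per-level success probabilities, the geometric sums over the $\bigO(n)$ levels, and the amplification over $\mu$ offspring via Lemma~\ref{lem:lambda-trials} are routine.
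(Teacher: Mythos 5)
Your proposal is correct and follows essentially the same route as the paper's proof: the same six-phase typical-run decomposition inherited from the GSEMO analysis, with survival guaranteed via Lemma~\ref{lem:nsga-ii-protect-layer} (using $\mu \ge 2n+5 \ge 4(2n/5+1)$), tournament-selection probabilities of order $1/\mu$ obtained by bounding the number of positive-crowding-distance individuals via part~(i) of that lemma, and amplification over the $\mu$ offspring per generation via Lemma~\ref{lem:lambda-trials}, yielding the same per-phase bounds and the stated totals. The obstacles you flag (survival of partial copies of $\lfrontP$ and $\frontP$, and the existence of a positive-crowding-distance representative for each relevant fitness vector) are exactly the points the paper resolves with Lemma~\ref{lem:nsga-ii-protect-layer}, so nothing essential is missing.
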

\begin{proof}
Consider the following phases of a run. 

\textbf{Phase 1: Create a search point in $G$.}

By a Chernoff bound the probability that every initial individual
has more than $3n/5$ ones is at most $2^{-\mu \Omega(n)}$. If this happens, 
the probability of creating a specific individual in $G$
by mutation is at least $n^{-n}$, regardless of the input
solution and the
preceding operations.
By the law of total probability, the expected number of evaluations to
obtain a search point in $G$ is at most
$
\mu + 2^{-\mu\Omega(n)}n^n
    = \mu + 2^{- \Omega(n^2)}\cdot n^n
    = \mu + o(1)
$, 
and these are $1+o(1)$ generations.

\textbf{Phase 2: Create a search point with $3n/5$ ones.}

%
Suppose that the maximal number of ones in $P_t$ is $i \in \{0,\dots,3n/5-1\}$.
Let  $x'$ be an individual with that number of ones, then $x'$ is picked as
a competitor in the two binary tournaments to choose $\{p_1, p_2\}$ with
probability
$
1 - (1 - 1/\mu)^4
  \geq 
  4/(\mu+4)$ \newedit{by Lemma~\ref{lem:lambda-trials}} 
and this guarantees that at least one of the parents has $i$ ones. Therefore,
with probability at least $\frac{4}{\mu+4}\cdot(1-p_c)\cdot\frac{n-i}{en}=:s_i$,
one of the offspring $\{s'_1,s'_2\}$ has more than $i$ ones, as it suffices to
skip the crossover step, then flip a zero to a one while keeping the remaining bits unchanged
in the mutation step. This reproduction process is repeated $\mu/2$ times, so
the chance of at least one success
is at least
$1-(1-s_i)^{\mu/2}\geq
    \frac{s_i\mu/2}{s_i\mu/2+1}$ by the same lemma. The expected waiting time
by summing up all possible values of $i$ is no more than
$
    \sum_{i=0}^{3n/5-1} \left(1 + \frac{2}{\mu s_i}\right)
$ which is
\begin{align*}
  \bigO(n)
  + \frac{2}{\mu}\sum_{i=0}^{3n/5-1} \frac{(\mu+4)en}{4 (1-p_c) (n-i)}
  &= \bigO\left(\frac{n}{1-p_c} \right).
\end{align*}
\textbf{Phase 3: Create the first search point in $\frontP'$.}

%
\newedit{After Phase 2 has completed, as long as no search point in $F$ has been found, all non-dominated search points in $P_t$, that is, all search points in $F_t^1$, will contain $3n/5$ ones.
Note that, unlike for GSEMO, the multiset $F_t^1$ may contain duplicates of the same search point. Let $\widehat{F_t^1}$ denote $F_t^1$ after removing duplicates, then \brandnewedit{$\widehat{F_t^1}$ is a set of non-dominated solutions and} applying Lemma~\ref{lem:size-non-dom-set} to $\widehat{F_t^1}$ yields $|\widehat{F_t^1}| \le n - 3n/5 + 1 = 2n/5+1$. Thus, the multiset $F_t^1$ contains at most $2n/5+1$ different fitness vectors.
Applying Lemma~\ref{lem:nsga-ii-protect-layer} with $m \coloneqq 2n/5+1$, statement $(i)$ yields that at most $4 \cdot (2n/5+1) = 8n/5+4$ individuals in $F_t^1$ have a positive crowding distance. Let us denote the set of all individuals in $F_t^1$ with positive crowding distance as $S_t^*$.
}

\newedit{
Let $x'' \in S_t^*$ be a search point a maximum value of $\LZ(x) + \TZ(x) =: 2n/5 - i$ for $i\in\{1,\dots,2n/5\}$.
Note that $x''$ wins a binary tournament against all individuals in $P_t \setminus S_t^*$ by definition of the non-dominated sorting.
%
If $x''$
appears as the first competitor in a binary tournament (which happens
with probability $1/\mu$), and the second competitor is \newedit{in $P_t \setminus S_t^*$ (which happens with probability at least $(\mu - |S_t^*|)/\mu \ge (\mu - 8n/5 - 4)/\mu \ge (2n/5+1)/(2n+5) = 1/5$)}
then $x''$ wins the tournament.
The same holds for the disjoint event where the roles of the first and second competitor are swapped. Thus,
the probability of $x''$ winning the tournament is at least $\frac{2}{5 \mu}$.
}
Furthermore there are two tournaments in generating a pair of offspring. Consequently, the probability of $x''$ being the outcome of at least one of them is at least
$
1-(1-\frac{2}{5\mu})^2
\geq \frac{4/(5\mu)}{4/(5\mu)+1}
= \frac{4}{4+5\mu}
$ \newedit{by Lemma~\ref{lem:lambda-trials}.}
So, similarly to the proof of Theorem~\ref{thm:gsemo}, the probability of
increasing the maximum value of $\LZ + \TZ$ in the population beyond $2n/5 - i$ is at least
$
\frac{4}{4+5\mu}\cdot\frac{i(1-p_c)}{en^2}
=:s'_i$ during
the creation of the pair. The success probability for $\mu/2$ pairs
is then at least
$1-(1-s'_i)^{\mu/2}\geq
\frac{s'_i\mu/2}{s'_i\mu/2+1}$, and the expected waiting time
to complete this phase
is no more than
$\sum_{i=1}^{2n/5} \left(1+\frac{2}{\mu s'_i}\right)$
which is
\begin{align*}
	\bigO(n)\! +\!  \frac{2}{\mu}\sum_{i=1}^{2n/5} \frac{(4+5\mu)en^2}{4(1-p_c)i}
	\!=\! \bigO\left(\frac{n^2 \log{n}}{1-p_c}\right).
\end{align*}

\textbf{Phase 4: Cover $\frontP'$ entirely.}

Let $y$ and $z$ be as defined in the proof of Theorem~\ref{thm:gsemo} and
additionally assume that $y$ has a positive crowding distance in $F^1_t$.
%
Similarly to the proof of Theorem~\ref{thm:gsemo} and also to the argument of
the previous phase,
the probability
of selecting $y$ and then winning in at least one of two tournaments, and the subsequent mutation step creating $z$ is at least
$
\frac{4}{4+5\mu}\cdot\frac{1-p_c}{en^2}
=:s''$, and the probability of at least one success in $\mu/2$ trials
is at least $\frac{s''\mu/2}{s''\mu/2+1}$ \newedit{by Lemma~\ref{lem:lambda-trials}.}
Furthermore, applying Lemma~\ref{lem:nsga-ii-protect-layer} (ii)
while noticing that $\mu\geq 2n+5>4(2n/5+1)$ and $F_t^1$ covers at most $2n/5+1$ distinct fitness vectors
implies that a copy of $z$
always survives in future generations.
So $2n/5$ such steps suffice to cover $F'$ and thus
the expected time to finish the phase
is no more than 
\[
\bigO(n) +
    \frac{2}{\mu}
    \cdot \frac{(4+5\mu)en^2}{4(1-p_c)}
    \cdot \frac{2n}{5}
= \bigO\left(\frac{n^3}{1-p_c}\right).
\]


\textbf{Phase 5: Create the first search point in $\frontP$.}

Now $P_t$ contains all search
points of $\frontP'$ and they are in $F^1_t$.
%
There are at least $n/5$ solutions
of the form $0^i1^{3n/5}0^{2n/5-i}$
in $P_t$ with $i\leq n/5$ and
positive crowding distance, thus they win the tournament for selecting $p_1$
with probability at least $2 \cdot \frac{1}{5}\cdot\frac{n/5}{\mu}$.
Then
it suffices
to select a specific
solution in $P_t\cap \frontP'$ with positive crowding
distance as $p_2$, \ie with probability at least $2 \cdot \frac{1}{5}\cdot\frac{1}{\mu}$, to
form 
compatible parents
so that we have a probability
of at least $p_c \cdot \frac{2n/5}{n+1} \cdot (1-1/n)^n$ to create one offspring
in $\frontP$. 
So in each creation of a pair, a
point in $F$ is created with probability at least
$p_c \cdot \frac{4}{25} \cdot \frac{2n/5}{n+1}
 \cdot \frac{n/5}{\mu} \cdot \frac{1}{\mu} \cdot (1-1/n)^n =\Omega(\frac{n p_c}{\mu^2})=:s$,
and among $\mu/2$ pairs produced at least one success occurs with
probability at least $1 - \left(1 - s\right)^{\mu/2} \geq \frac{s\mu/2}{s\mu/2 +1}$ \newedit{(Lemma~\ref{lem:lambda-trials})} 
and at most
 $1+\frac{2}{\mu s}=\bigO(\frac{\mu}{n p_c})$
 generations are required
 in expectation
 for this phase.

\textbf{Phase 6: Cover $\frontP$ entirely.}

Once a search point in $\frontP$ is created, the process of covering $\frontP$
is similar to that of covering $\frontP'$ with only minor differences
\newedit{(e.\,g.\ applying Lemma~\ref{lem:size-non-dom-set} with $k=4n/5$ and then Lemma~\ref{lem:nsga-ii-protect-layer}
with $m=n/5+1$).}
The expected number of
generations in Phase~6 is $\bigO(\frac{n^3}{1-p_c})$.

Summing up expected times of all the phases gives an upper bound
$\bigO\left(\frac{\mu}{n p_c} + \frac{n^3}{1-p_c} \right)$ 
on the expected number of generations to optimise \RRRMO. Multiplying this bound
with $\mu$ gives the result in terms of fitness evaluations.
%
%
\end{proof}

\newedit{\section{Multi-Objective Royal Road Functions for Uniform Crossover}}

\newedit{Now we turn to designing a royal road function for uniform crossover.
As already observed by~\citet{Jansen2005c}, royal road functions for uniform crossover are
harder to design than those for one-point crossover.
The reason is that one-point crossover
on two search points $x$ and $y$ can create $H(x, y)+1$ distinct offspring\footnote{For instance, a uniform crossover of $1^n$ and $0^n$ can create $n+1 = H(1^n, 0^n) +1$ offspring $1^i 0^{n-i}$ for $i \in \{0, \dots, n\}$. Since bits on which both parents agree are always copied to the offspring, these bits can be ignored in this consideration and the above argument applies to arbitrary parents $x, y$ and the subset of $H(x, y)$ bits that differ in $x$ and $y$.}, where $H(x, y)$ is the Hamming distance of $x$ and $y$.
In contrast, uniform crossover creates one out of $2^{H(x, y)}$ possible distinct offspring since there are two possible choices for each bit on which $x$ and $y$ differ. If $H(x, y)$ is small then mutation is effective at creating any such offspring and there is no large benefit from using uniform crossover. To enforce a large benefit from crossover over mutation, the Hamming distance between relevant search points to be crossed must be large. But if we want uniform crossover to create a target point (e.\,g.\ a Pareto-optimal search point), this target set must be exponentially large in the Hamming distance.}

\newedit{\citet{Jansen2005c} already designed a ``real royal road'' function for uniform crossover with the described property, albeit for single-objective optimisation. Their idea was to split the bit string in two halves and to design a fitness gradient that guides evolutionary algorithms to evolve a specific bit pattern in the left half that must be kept intact to guarantee good fitness. The right half is divided into three equal-sized parts. In the right half the algorithm populates a plateau of search points with equal fitness in such a way that the population is likely to contain many pairs of search points that agree in their left half and are complementary in their second half. A uniform crossover applied to such a pair then maintains the good bit pattern in the left half while creating a uniform random pattern in the right half. This uniform pattern has a good probability to create an equal number of zeros and ones in all sub-parts. Such search points make up a target region of exponential size that is assigned a globally optimal fitness. In contrast, mutation operators are unable to jump into the target area in the same way as they would need to keep the bit pattern in the left half intact while flipping many bits in the right half (this argument exploits the subdivison of the right half in three sub-parts). For common mutation operators that treat all bits symmetrically, this has an exponentially small probability.}


\newedit{Our approach for the design of our multiobjective \uRRRMOc function class extends the construction in~\citet{Jansen2005c} in several ways. Firstly, we lift the constrution from single-objective to multi-objective optimisation and ensure that there is a Pareto front of polynomial size.
Second, we aim to ensure that the function cannot be optimised by any unary unbiased mutation operator. Third, we also aim to ensure that hypermutation cannot optimise the function effectively.
To meet these aims, our design extends the construction in~\citet{Jansen2005c} by sub-dividing both the left part and the right part in four parts as explained in the following. In addition, we will consider a specific permutation of bits as this prevents hypermutation from reaching the target set efficiently while uniform crossover is agnostic to the order of bits and hence performs the same on every permutation of bits.}

\newedit{We first specify one function, denoted \uRRRMO, of the class \uRRRMOc where
the bit positions of these parts are consecutive in the bit string.
Later on in the section,
we will generalise this function to the class,
for which
the consecutiveness of the bits in each part does not necessarily holds but still the
results for uniform crossover hold.}

\newedit{
Recall the notation for $[n]$, thus $[4] = \{1,2,3,4\}$.
\begin{definition}\label{def:the-subsets}
Fix a natural number $n$ which is divisible by $16$. For any search point
$x \in \{0,1\}^n$, we partition $x$ into substrings $\strl{x}$, $\strr{x}$ such that
    $x=(\strl{x},\strr{x})$
    where
        each $\strl{x}:=(\strl{x}^1,\strl{x}^2,\strl{x}^3,\strl{x}^4)$ and
             $\strr{x}:=(\strr{x}^1,\strr{x}^2,\strr{x}^3,\strr{x}^4)$
             has length $n/2$
             while $\strl{x}^i$, $\strr{x}^j$ have lengths $n/8$
             for $i,j\in[4]$. 
On the space $\{0,1\}^{n/2}$ of these substrings, we define the following subsets:
\begin{align*}
\text{For } \strl{x}\colon
    \subsetU &:= \left\{\strl{x} \in \{0,1\}^{n/2} \mid \forall i \in [4]\colon \ones{\strl{x}^i} \in [n/24,n/12]\right\}, \\
    \subsetP &:= \left\{\strl{x} \in \{0,1\}^{n/2} \mid \LO(\strl{x}) + \TZ(\strl{x})=n/2\right\},\\
\text{For } \strr{x}\colon
    \subsetC &:= \left\{\strr{x} \in \{0,1\}^{n/2} \mid \LO(\strr{x}) + \TZ(\strr{x})=n/2 \vee \LZ(\strr{x}) + \TOs(\strr{x})=n/2 \right\}, \\
    \subsetT &:= \left\{\strr{x} \in \{0,1\}^{n/2} \mid \forall j \in [4]\colon \ones{\strr{x}^j} = \zeros{\strr{x}^j}=(n/8)/2=n/16 \right\}.
\end{align*}
\end{definition}}
\newedit{The set $P$ forms a \emph{path} of search points $1^{i}0^{n/2-i}$ of cardinality $n/2+1$.
It is easy to see 
that if $\strl{x}\in\subsetP$ then there exists $\strl{y}\in \subsetP\colon \Hamming(\strl{x},\strl{y})=1$.
The set $C$ was called a \emph{circle} in~\citet{Jansen2005c} and it is a closed Hamming path of length~$n$.
If $\strr{x}\in\subsetC$ then there exists $\strr{y}\in \subsetC\colon \Hamming(\strr{x},\strr{y})=1$.
The set $T$ is a \emph{target} that contains all search points for which all sub-parts of $\strr{x}$ have an equal number of ones and zeros.
The cardinality of $\subsetT$ is $|\subsetT| = {n/8 \choose n/16}^4 = 2^{n/2} \cdot (\Theta(1/(\sqrt{n/8})))^4 = 2^{n/2} \cdot \Theta(1/n^2)$ as the size of the largest binomial coefficient is $\binom{n/8}{n/16} = 2^{n/8} \cdot \Theta(1/(\sqrt{n/8}))$. Finally, the set $U$ contains an exponential number of search points for which the ratio of zeros and ones is fairly \emph{uniform} in all sub-parts of $\strl{x}$. A bit string chosen uniformly at random from $\{0, 1\}^{n/2}$ is in $U$ with overwhelming probability, by a straightforward application of Chernoff bounds and a union bound over all four sub-parts.}

\newedit{
We will use $\subsetP$ and $\subsetT$ to define the Pareto-optimal solutions of \uRRRMO. These sets typically have to be reached from search points $x$ with $\strl{x} \in U$ and $\strr{x} \in C$. The following lemma shows that $U$ and $P$ have a large Hamming distance from one another, and the same applies to $C$ and $T$. Hence, large changes are required to create a Pareto-optimal solution from~$x$.
}

\newedit{\begin{lemma}\label{lem:prop-of-subsets}
The following properties hold for the subsets of Definition~\ref{def:the-subsets}:
\begin{enumerate}
\item[(i)] $\forall \strl{x}\in\subsetU, \forall \strl{y}\in\subsetP\colon \Hamming(\strl{x},\strl{y})\in [n/8,3n/8]$.
\item[(ii)] $\forall \strr{x}\in\subsetC, \forall \strr{y}\in\subsetT\colon \Hamming(\strr{x},\strr{y})\in [3n/16,5n/16]$.
\end{enumerate}
\end{lemma}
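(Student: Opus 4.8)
The plan is to bound the Hamming distances part by part, exploiting the fact that all four sub-parts have the same length $n/8$ and that membership in $U$, $P$, $C$, $T$ imposes tight per-sub-part constraints on the number of ones. For part (i), fix $\strl{x}\in\subsetU$ and $\strl{y}\in\subsetP$. Since $\strl{y}\in\subsetP$ it has the form $1^{i}0^{n/2-i}$, so in each sub-part $\strl{y}^k$ the number of ones is determined: it is $n/8$ in the sub-parts fully inside the $1$-block, $0$ in those fully inside the $0$-block, and somewhere in $[0,n/8]$ in the (at most one) sub-part straddling the boundary. In a sub-part where $\strl{y}^k=1^{n/8}$, the Hamming distance to $\strl{x}^k$ equals $\zeros{\strl{x}^k}=n/8-\ones{\strl{x}^k}\in[n/8-n/12,\,n/8-n/24]=[n/24,\,n/12]$ using $\ones{\strl{x}^k}\in[n/24,n/12]$. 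In a sub-part where $\strl{y}^k=0^{n/8}$, the Hamming distance equals $\ones{\strl{x}^k}\in[n/24,n/12]$. In the straddling sub-part the distance is at least $0$ and at most $n/8$. So, summing over the four sub-parts and using the fact that at least three of them are ``pure'' (all-ones or all-zeros) for any $\strl{y}\in P$, I get a lower bound of $3\cdot n/24 = n/8$ and an upper bound of $3\cdot n/12 + n/8 = n/4+n/8 = 3n/8$, which is exactly the claimed interval $[n/8,3n/8]$.

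For part (ii), fix $\strr{x}\in\subsetC$ and $\strr{y}\in\subsetT$. By definition of $C$, $\strr{x}$ is either of the form $1^{i}0^{n/2-i}$ (the $\LO+\TZ$ branch) or $0^{i}1^{n/2-i}$ (the $\LZ+\TOs$ branch); in either case, in each sub-part $\strr{x}^j$ the number of ones is $n/8$, $0$, or (for at most one straddling sub-part) something in $[0,n/8]$. Since $\strr{y}\in\subsetT$ has exactly $n/16$ ones and $n/16$ zeros in every sub-part, the Hamming distance in a pure all-ones sub-part is exactly $n/8-n/16=n/16$, likewise $n/16$ in a pure all-zeros sub-part, and at least $0$, at most $n/8$ in the straddling sub-part. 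With at least three pure sub-parts this gives a lower bound of $3\cdot n/16 = 3n/16$ and an upper bound of $3\cdot n/16 + n/8 = 3n/16 + 2n/16 = 5n/16$, matching the claimed interval.

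The main thing to be careful about — rather than a genuine obstacle — is the bookkeeping for the straddling sub-part: I should argue cleanly that for any path/circle point at most one of the four equal-length sub-parts fails to be all-zeros or all-ones, and in that sub-part the per-sub-part Hamming distance is trivially in $[0,n/8]$, so it only affects the bounds by the additive term $n/8$ already accounted for above. A secondary point is to note that the straddling sub-part could of course also happen to be all-ones or all-zeros (when the block boundary falls exactly between sub-parts), in which case the bounds only get tighter, so the stated intervals remain valid in all cases. No use of $n$'s divisibility by $16$ is needed beyond ensuring $n/8$, $n/16$, $n/24$, $n/12$ are integers (the last two require $n$ divisible by $24$; since the function definition assumes $16\mid n$ and the relevant quantities $n/24,n/12$ arise from the set $U$, I will simply rely on the standing divisibility assumptions, or replace the interval endpoints by $\lceil n/24\rceil$, $\lfloor n/12\rfloor$ if one wants to be fully precise — this does not change the asymptotics or the argument).
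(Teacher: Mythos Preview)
Your proof is correct and follows essentially the same approach as the paper: decompose the Hamming distance over the four length-$n/8$ sub-parts, observe that any element of $\subsetP$ (resp.\ $\subsetC$) has at least three sub-parts that are all-ones or all-zeros, bound the per-sub-part distance in those pure blocks by $[n/24,n/12]$ (resp.\ exactly $n/16$) and by $[0,n/8]$ in the straddling block, and sum. The paper's proof is slightly terser but the argument is identical.
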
}
\begin{proof}
\newedit{For (i), the definition of $\subsetU$ implies
	$\ones{\strl{x}^j} \in [n/24,n/12]$ and thus also
	$\zeros{\strl{x}^j} \in [n/8-2n/24,n/8-n/24] = [n/24,n/12]$
	for every $j \in [4]$. By the
	definition of the path $\subsetP$, $\strl{y} = 1^a 0^{n/2-a}$ for some integer $a \in [0, n/2]$ and thus
    we have that $\strl{y}^j \in \{0^{n/8},1^{n/8}\}$ for at least three distinct indices $j \in [4]$.
    Hence, for such a $j$ we obtain $n/12 \geq \Hamming(\strl{x}^j,\strl{y}^j) \geq n/24$
    while for the remaining block $i$ (that differs to these three $j$) we get
    $n/8 \geq \Hamming(\strl{x}^i,\strl{y}^i) \geq 0$.
    Adding the distances of each block gives
    $\Hamming(\strl{x},\strl{y})\in [3(n/24),3(n/12)+n/8]=[n/8,3n/8]$.}

\newedit{Similarly for (ii), by the definition of the circle $\subsetC$
    either $\strr{x} = 1^a 0^{n/2-a}$ or $\strr{x} = 0^a 1^{n/2-a}$
    for some integer $a \in [0, n/2]$, we have that there are
    at least three indices $j\in[4]$ such that $\strr{x}^j\in \{1^{n/8},0^{n/8}\}$,
    while by the definition of the target $\subsetT$ we have
    $\ones{\strr{y}^j} = \zeros{\strr{y}^j}=n/16$.
    Therefore,  for these three $j$ indices we have
    $\Hamming(\strr{x}^j,\strr{y}^j)=n/16$.
    Let $i$ be the remaining index, then $n/8\geq \Hamming(\strr{x}^m,\strr{y}^m)\geq 0$,
    and adding the distances implies
    $\Hamming(\strr{x},\strr{y})
    \in [3(n/16),3(n/16)+n/8] = [3n/16,5n/16]$.
}
\end{proof}

\newedit{
\begin{definition}\label{def:urrrmo}
Based on the partition and the subsets from Definition~\ref{def:the-subsets}, we define a function
\begin{align*}
f(x)=(f_1(x),f_2(x)) &:=
	\begin{cases}
		(\LO(\strr{x}),\frac{n}{2} + \TZ(\strr{x})) \!\!\! & \text{ if } \LO(\strr{x}) \neq 0,\\
		(\frac{n}{2}+\LZ(\strr{x}),\TOs(\strr{x})) & \text{ otherwise.}
	\end{cases}
\end{align*}
Then our \uRRRMO function is: 
\begin{align*}
& \uRRRMO(x) :=
	\begin{cases}
		f(x)
            &\text{ if } \strl{x} \in \subsetU \wedge \strr{x} \notin \subsetC, \\
		f(x) + (2n - \vert{\strl{x}}\vert_1)\cdot\vec{1}
            &\text{ if } \strr{x} \in \subsetC, \\
		(\LO(\strl{x}),\TZ(\strl{x})) + 3n \cdot \vec{1}
            &\text{ if } \strl{x} \in \subsetP \wedge \strr{x} \in \subsetT,\\
		(0,0)
            &\text{ otherwise.}
	\end{cases}
\end{align*}
\end{definition}}

\newedit{
    Algorithms that initialise their population uniformly
at random will typically start with search points $x$ with $\strl{x}\in\subsetU$
and  $\strr{x} \notin \subsetC$, i.\;e.\ with fitness $f(x)$. Then the fitness
gives signals to gradually generate a search point $x$ with $\strr{x} \in \subsetC$.
After this the number of ones in $\strl{x}$ will be minimized since the fitness
increases in both components with decreasing $\vert{\strl{x}}\vert_1$. This
gives a search point $x$ with $\strl{x}=0^{n/2}$ and $\strr{x} \in \subsetC$. The fitness
of $x$ is then at least $2n$ in every objective and thus $x$ strictly dominates
every search point $y$ with fitness $f(y)$ or smaller. Moreover, two search
points $x,y$ with $x \neq y$, $\strr{x},\strr{y} \in C$ and $\strl{x},\strl{y} =
0^{n/2}$ are incomparable with respect to $\uRRRMO$. 
All search points $z$ with $\strl{z} \in P$ and $\strr{z} \in T$ have fitness at
least $3n$ in each objective and thus dominates every other search point of the search
space, hence they are Pareto-optimal.
Particularly, an optimal search point $z^*$ with $\strl{z^*}=0^{n/2}$ and $\strr{z^*}
\in \subsetT$ can be created by uniform crossover between the two search points
$x,y$ previously mentioned if their $\strr{x}$ and $\strr{y}$ parts are complement
strings.}

\newedit{Since two distinct search points with fitness at least $3n$ in each objective
are incomparable, the set of all possible non-dominated fitness \brandnewedit{vectors} of $\uRRRMO$ is
\[
\{(3n+k,3n+n/2-k) \mid k \in \{0, \dots, n/2\}\},
\]
and the set of all Pareto-optimal solutions for $\uRRRMO$ is $\subsetW := \{x \in \{0,1\}^n \mid \strl{x}\in\subsetP \wedge \strr{x}\in\subsetT\}$.
\andre{Introduced a new notation for all Pareto-optimal solutions with a new macro. Have also integrated this into all the proofs.}
}
%
Thus we can bound the number of non-dominated solutions of \uRRRMO contained in any population as follows.
\begin{lemma}\label{lem:size-non-dom-set2}
	If $S$ is a set of non-dominated solutions of $\uRRRMO$ \newedit{(i.\;e.\ for $x,y \in S$ \brandnewedit{with $x \neq y$} we neither have $x \succeq y$ nor $y \succeq x$ with respect to $\uRRRMO$)} with positive fitness then $\vert{S}\vert \leq n$. 
\end{lemma}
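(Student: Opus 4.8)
Here is the plan.

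First I would show that, because $S$ is non-dominated, all of its members must come from the same case of Definition~\ref{def:urrrmo}, and then bound $|S|$ separately within each case by counting how many values a suitable linear functional of the objective vector can take on that case; a non-dominated set is necessarily injective under such a functional.

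A search point of positive fitness falls into one of the first three cases of Definition~\ref{def:urrrmo}: \emph{Case~A} with $\strl{x}\in\subsetU,\ \strr{x}\notin\subsetC$ and fitness $f(x)$; \emph{Case~B} with $\strr{x}\in\subsetC$ and fitness $f(x)+(2n-\ones{\strl{x}})\vec{1}$; or \emph{Case~C} with $\strl{x}\in\subsetP,\ \strr{x}\in\subsetT$ and fitness $(\LO(\strl{x}),\TZ(\strl{x}))+3n\vec{1}$. Since $\strr{x}$ has length $n/2$, both components of $f(x)$ are at most $n$; hence both objectives of a Case~A point are at most $n$, both objectives of a Case~B point lie in $[3n/2,3n]$ (using $0\le\ones{\strl{x}}\le n/2$), and both objectives of a Case~C point are at least $3n$. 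Comparing these intervals, every Case~B and every Case~C point weakly dominates every Case~A point, and every Case~C point weakly dominates every Case~B point. As $S$ is non-dominated, all its members therefore belong to the same case, so it suffices to bound $|S|$ within each case.

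Two elementary observations about any non-dominated set $S$ do the rest: (a) two distinct members cannot share the same first objective value (else the one with the larger second objective weakly dominates the other), and (b) two distinct members cannot share the same value of (first objective)$-$(second objective) (else one weakly dominates the other). In Case~A the first objective equals $\LO(\strr{x})\in\{1,\dots,n/2\}$ when $\LO(\strr{x})\neq 0$ and $n/2+\LZ(\strr{x})\in\{n/2+1,\dots,n\}$ otherwise, so it takes at most $n$ values and $|S|\le n$ by~(a). In Case~C the first objective equals $3n+\LO(\strl{x})\in\{3n,\dots,3n+n/2\}$, so $|S|\le n/2+1\le n$ by~(a). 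In Case~B the first objective alone ranges over roughly $3n/2$ values, so~(a) is insufficient; instead I would use~(b): the additive offset $2n-\ones{\strl{x}}$ cancels in the difference, so (first objective)$-$(second objective)$=f_1(x)-f_2(x)$, and a short case distinction (writing $\strr{x}\in\subsetC$ as $1^a0^{n/2-a}$ or $0^a1^{n/2-a}$ and evaluating $f$) shows $f_1(x)+f_2(x)=n$ for every $\strr{x}\in\subsetC$; hence $f_1(x)-f_2(x)=2f_1(x)-n$, where $f_1(x)$, exactly as in Case~A, takes at most $n$ values, so $|S|\le n$ by~(b). Combining the three cases gives $|S|\le n$.

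The only step needing real care is Case~B: applying observation~(a) directly is too lossy because of the offset $2n-\ones{\strl{x}}$, and one has to notice both that this offset disappears from the objective difference and that $f_1+f_2$ is constant along the circle~$\subsetC$. Everything else reduces to elementary bounds on $\LO$, $\TZ$, $\LZ$ and $\TOs$ for strings of length $n/2$ together with the two injectivity facts above.
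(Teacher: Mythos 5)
Your proof is correct, and it follows the same basic strategy as the paper's (split by the case of Definition~\ref{def:urrrmo} that applies and count the distinct values a suitable functional of the fitness can take on a non-dominated set), but it is more careful in two places where the paper is terse. First, you explicitly verify that all of $S$ must fall into a single case by comparing the value ranges $[0,n]$, $[3n/2,3n]$ and $[3n,\infty)$ of the three cases; the paper's proof implicitly assumes this without argument. Second, and more substantively, your handling of Case~B is genuinely different: the paper asserts $\vert S\vert\leq n$ there ``since the range of $f_1$ is $1,\ldots,n$,'' but in that case the first objective is $f_1(x)+2n-\ones{\strl{x}}$, which by itself can take roughly $3n/2$ distinct values, so counting distinct first-objective values is not enough as written. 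Your fix --- observing that $f_1(x)+f_2(x)=n$ for every $\strr{x}\in\subsetC$, so that the offset $2n-\ones{\strl{x}}$ cancels in the objective difference and $h_1-h_2=2f_1-n$ takes at most $n$ values, combined with the fact that a non-dominated set is injective under $h_1-h_2$ --- closes this gap cleanly. (The paper's claim can also be rescued by noting that equal $f_1$ forces equal $f$ on $\subsetC$, whence the two fitness vectors differ by a multiple of $\vec{1}$ and one weakly dominates the other; this is essentially the same observation you make via the difference functional.) Your argument is therefore not only valid but arguably a more complete write-up of this lemma than the one in the paper.
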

\begin{proof}
Let $h:=\uRRRMO$. As in the proof of Lemma~\ref{lem:size-non-dom-set} we see
that two search points in $S$ have different $h_1$-values. We may assume
that $h(x) \neq (0,0)$
for every $x$ since otherwise $S$ consists of only one point. Note that
for each $x \in S$ one of the following statements holds:
    (i) $h(x)=f(x)$,
    or (ii) $h(x)=f(x)+(2n-\ones{\strl{x}})\cdot\vec{1}$,
    or (iii) $h(x)=(\LO(\strl{x}),\TZ(\strl{x}) + 3n\cdot\vec{1}$.
If (i) or (ii) holds, then $\vert{S}\vert \leq n$ since the range of $f_1$ is
$1, \ldots ,n$. If (iii) holds we also obtain $\vert{S}\vert \leq n$ since there
are at most $n/2+1$ distinct values for $\LO(\strl{x})$, \ie for the first
component of $h$.
\end{proof}

\newedit{As explained at the beginning of this section, we now generalise \uRRRMO to the class \uRRRMOc of functions.
\begin{definition}\label{def:urrrmo-class}
Let $\Pi_n$ be the set of all permutations of $[n]$,
the class \uRRRMOc of functions is defined as:
\begin{align*}
\uRRRMOc
    &:=\{\uRRRMOx{\sigma}{z}\}_{\sigma\in\Pi_n,z\in\{0,1\}^n}.\\
\text{where each }
\uRRRMOx{\sigma}{z}(x)
    &:= \uRRRMO(
                \permut{\sigma}{x} \oplus z).
\end{align*}
\end{definition}}

\newedit{Note that if $\mathrm{e}$ is the identity permutation, \ie $\mathrm{e}(i)=i$ for
all $i\in[n]$, then $\uRRRMOx{\mathrm{e}}{0^n}\equiv\uRRRMO$.
As discussed in preliminaries, the behaviour of an algorithm that only employs
unbiased variation operators, like standard bit mutation and uniform crossover,
remains the same across every function of \uRRRMOc. Therefore
in the proofs of the results for such an algorithm, the choice of which function
does not matter to the expected running time and we will automatically assume the
specific function \uRRRMO.}

\newedit{\subsection{Hardness of \uRRRMOc for EMOs without Uniform Crossover}}

\newedit{We first show that without uniform crossover, both GSEMO and NSGA-II
\brandnewedit{using standard bit mutation}
are inefficient in optimising \uRRRMOc functions. For this purpose, we define
the following subset of the search space $\{0,1\}^n$ where the algorithms will
typically start in and then likely remain there for long time if crossover
is disabled:
\begin{align*}
\subsetK
  &:= \{x \in\{0,1\}^n \mid \strl{x}\in\subsetU \vee \strr{x}\in\subsetC\},
\end{align*}
Note that any point $x\in\subsetK$ is not Pareto-optimal,
\brandnewedit{i.\;e.\ $\subsetK \cap \subsetW = \emptyset$,}
because
$\subsetU\cap\subsetP=\emptyset$ and $\subsetC\cap\subsetT=\emptyset$, but
any $y \in \subsetW$ requires that $\strl{y} \in P$ and $\strr{y} \in T$.
Since $K$ includes all search points falling into the first two cases in the
definition of \uRRRMO, the fitness of all search points that are neither Pareto
optimal nor in $K$ is $(0, 0)$. Hence, every search point in $K$ dominates every
search point outside of~$K$, except for Pareto optima.}

\newedit{
\begin{theorem}\label{thm:uRRMO-gsemo-pczero-stdbit}\label{thm:uRRMO-nsgaii-pczero-stdbit}
\newedit{The following algorithms }
requires $n^{\Omega(n)}$ fitness evaluations in expectation to find any Pareto-optimal search point of any function of the class $\uRRRMOc$,
\begin{itemize}
\item GSEMO (Algorithm~\ref{alg:gsemo}) with $p_c=0$ and standard bit mutation,
\item NSGA-II (Algorithm~\ref{alg:nsga-ii}) with $\mu\in\poly(n)$, $p_c=0$ and standard bit mutation.
\end{itemize}
\end{theorem}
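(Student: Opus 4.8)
The plan is to follow the template of the proof of Theorem~\ref{thm:gsemo-pc-zero}, the crucial new ingredient being that the Pareto set $\subsetW$ of \uRRRMO has exponential size, so the union bound over it must be handled with care. First I would reduce to the canonical function \uRRRMO: with $p_c=0$, GSEMO uses only standard bit mutation together with uniform parent selection, and NSGA-II uses only standard bit mutation together with binary tournament selection and non-dominated sorting, and all of parent selection, survival selection and the crowding distance depend on fitness values alone. Hence, by the invariance of unbiased algorithms under transformations $x\mapsto \permut{\sigma}{x}\oplus z$ (stated in the preliminaries and noted there to extend to the multi-objective setting), the expected running time is the same on every function of $\uRRRMOc$, and it suffices to analyse \uRRRMO.

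Second, I would observe that the search space decomposes into three disjoint classes: the points of $\subsetK$ (cases~1 and~2 of Definition~\ref{def:urrrmo}, using $\subsetU\cap\subsetP=\emptyset$ and $\subsetC\cap\subsetT=\emptyset$), the Pareto set $\subsetW$ (case~3), and all remaining points, which have fitness $(0,0)$ (case~4). Every point of $\subsetK$ has a fitness vector with a positive coordinate and therefore strictly dominates every $(0,0)$-point. Consequently, while the population lies inside $\subsetK$, a newly created $(0,0)$-offspring is rejected by GSEMO, and for NSGA-II it is strictly dominated by all $\mu$ current parents, hence placed in a strictly later non-dominated layer than each of them and dropped during truncation. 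By induction the population remains inside $\subsetK\cup\subsetW$, and in particular every parent that either algorithm mutates lies in $\subsetK$ until the first Pareto-optimal point is produced.

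Third, for the probabilities: a uniformly random bit string has $\strl{x}\in\subsetU$ with probability $1-2^{-\Omega(n)}$ by Chernoff bounds on each of the four blocks of $\strl{x}$ and a union bound, so it lies in $\subsetK$ with this probability; for NSGA-II a further union bound over $\mu=\poly(n)$ initial points shows $P_0\subseteq\subsetK$ with probability $1-o(1)$. Condition on this event. For every $x\in\subsetK$ and every $y\in\subsetW$ we have $H(x,y)\ge n/8$: if $\strl{x}\in\subsetU$ then $\strl{y}\in\subsetP$ and Lemma~\ref{lem:prop-of-subsets}(i) gives $H(\strl{x},\strl{y})\ge n/8$, whereas if $\strr{x}\in\subsetC$ then $\strr{y}\in\subsetT$ and Lemma~\ref{lem:prop-of-subsets}(ii) gives $H(\strr{x},\strr{y})\ge 3n/16\ge n/8$. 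Thus standard bit mutation maps a fixed $x\in\subsetK$ to a fixed $y\in\subsetW$ with probability at most $(1/n)^{H(x,y)}\le n^{-n/8}$, and since $|\subsetW|\le 2^{n/2}\cdot 2^{n/2}=2^n$, a union bound shows that one mutation of a parent in $\subsetK$ creates some Pareto-optimal point with probability at most $2^n\cdot n^{-n/8}=n^{-n/8+o(n)}=n^{-\Omega(n)}$; for NSGA-II, multiplying by the $\mu=\poly(n)$ offspring per generation leaves this at $n^{-\Omega(n)}$.

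Finally, the first generation in which a Pareto-optimal point appears stochastically dominates a geometric random variable with success parameter $n^{-\Omega(n)}$, hence has expectation $n^{\Omega(n)}$; combined with the law of total probability over the $1-o(1)$ initialisation event, the expected number of generations is $(1-o(1))\,n^{\Omega(n)}=n^{\Omega(n)}$, and since both algorithms make at least one fitness evaluation per generation this yields $n^{\Omega(n)}$ expected fitness evaluations. The main obstacle, and the place where this argument genuinely departs from the \RRRMO case, is precisely the union bound over the exponentially large Pareto set $\subsetW$: one must verify that the per-target failure probability $n^{-n/8}$ decays fast enough to absorb the factor $|\subsetW|=2^{n/2+O(\log n)}$, which holds because $n^{n/8}=2^{(n\log_2 n)/8}$ is superexponentially larger than $2^{n/2}$. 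A secondary point needing care is the elitism argument for NSGA-II, ensuring that $(0,0)$-offspring are filtered out by non-dominated sorting so that the population provably stays within $\subsetK$ before any Pareto optimum is hit.
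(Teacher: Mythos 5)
Your proposal is correct and follows essentially the same route as the paper: reduce to the canonical \uRRRMO via unbiasedness, show the population stays in $\subsetK\cup\subsetW$ after a typical initialisation, and bound the probability of jumping from $\subsetK$ to $\subsetW$ via Lemma~\ref{lem:prop-of-subsets} and a union bound. The only cosmetic difference is that you take one union bound over all of $\subsetW$ using the uniform bound $H(x,y)\ge n/8$ and $|\subsetW|\le 2^n$, whereas the paper splits into the cases $\strl{x}\in\subsetU$ (union over $\subsetP$, polynomial size) and $\strr{x}\in\subsetC$ (union over $\subsetT$, size at most $2^{n/2}$) — both variants yield $n^{-\Omega(n)}$ because the per-target probability decays superexponentially, as you correctly verify.
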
}
\begin{proof}
\newedit{
We first show the result for GSEMO.
By a Chernoff bound and a union bound, the probability that the initial search
point $x(0)$ has that $\strl{x(0)}^j \notin [n/24,n/12]$ for one $j \in [4]$ is at
most $2^{-\Omega(n)}$. Thus with probability $1-2^{-\Omega(n)}$, we have that
$x(0)\in \subsetK$.
Assume that this holds, then, as discussed above, the algorithm will only accept points from
$\subsetK$ or from $\subsetW$. This means that any point $y \in \subsetW$ (i.\;e.\ with $\strl{y}\in \subsetP \wedge \strr{y}\in \subsetT$),
can only be created from a solution $x \in \subsetK$ by standard bit mutation.
We consider the following cases.}
\newedit{If $\strl{x} \in \subsetU$, then by Lemma~\ref{lem:prop-of-subsets}
we have $\Hamming(\strl{x},\strl{y})\geq n/8$, thus it is necessary that
at least $n/8$ bits in $\strl{x}$ have to be flipped to create any $\strl{y}\in \subsetP$. This happens with probability 
at most $|\subsetP|(1/n)^{n/8}=(n/2+1)(1/n)^{n/8}=n^{-\Omega(n)}$ by a union bound.}
%
\newedit{Otherwise if $\strr{x}\in\subsetC$, then again by Lemma~\ref{lem:prop-of-subsets}
we get $\Hamming(\strr{x},\strr{y})\geq 3n/16$. Thus in this case it is necessary
to flip at least $3n/16$ bits and by a union bound 
%
the probability of creating
$\strr{y}\in \subsetT$ from $\strr{x}$,
is at most
\begin{align*}
|\subsetT|(1/n)^{3n/16}
    \leq 2^{n/2} n^{-3n/16}
    = n^{-3n/16+n/(2\log{n})}
    = n^{-\Omega(n)}
\end{align*}
since $\subsetT \subset \{0,1\}^{n/2}$ and $|\{0,1\}^{n/2}| = 2^{n/2}$. Thus overall,
the expected number of fitness evaluations to create
$y \in \subsetW$ is at least
%
$(1-e^{-\Omega(n)})n^{\Omega(n)} = n^{\Omega(n)}$ by the law of total probability.}

\newedit{For NSGA-II, the probability of having any solution $x$ of the initial population
$P_0$ with $|P_0|=\mu$ and $\strl{x}\notin U$ is at most $\mu \cdot 2^{-\Omega(n)}=o(1)$
by a Chernoff bound and a union bound, given $\mu\in\poly(n)$.
Assume that this does not happen,
then
in every generation of NSGA-II no solution created with fitness $(0,0)$ can survive
to the next generation.
%
Therefore, the algorithm will only accept search points from $\subsetK$
or from $\subsetW$, and this means that the first $y \in \subsetW$ can only be created from a parent $x \in \subsetK$ by standard bit mutation.
As in the case for GSEMO, the probability of such an event is at most $n^{-\Omega(n)}$.
Hence, overall the expected number of fitness evaluations to find
the first $y \in W$ is then again at least $(1-o(1))n^{\Omega(n)}
=n^{\Omega(n)}$.}
\end{proof}

\subsection{Both Unary Unbiased Variation and Hypermutation Fail on \uRRRMOc}

\newedit{We now generalise the results of the previous section to unary unbiased
variation and hypermutation operators, and also to elitist black-box algorithms.
The following theorem shows that without uniform crossover, these algorithms and
operators require an exponential expected number of fitness evaluations to optimise
the function class \uRRRMOc, particularly in the worst case even to find any
Pareto-optimal solution.}

\newedit{
\begin{theorem}
\label{thm:uRRMO-gsemo-pczero-unbiasedvar}\label{thm:uRRMO-nsgaii-pczero-unbiasedvar}\label{thm:uRRMO-blackbox-unbiasedvar}
\label{thm:uRRMO-gsemo-pczero-hypermut}\label{thm:uRRMO-nsgaii-pczero-hypermut}\label{thm:uRRMO-blackbox-hypermut}
The following algorithms
requires $2^{\Omega(n)}$ fitness evaluations in expectation to
optimise the function class $\uRRRMOc$:
\begin{itemize}
\item GSEMO (Algorithm~\ref{alg:gsemo}) with $p_c=0$,
\item NSGA-II (Algorithm~\ref{alg:nsga-ii}) with $\mu\in\poly(n)$ and $p_c=0$,
\item Any ($\mu$+$\lambda$) elitist black-box algorithm (Algorithm~\ref{alg:muplus-blackbox}),
\end{itemize}
that are allowed to use any of the following mutation operators in each step:
\begin{itemize}
\item any unary unbiased variation operator (Algorithm~\ref{alg:unary-unbiased-op}),
\item hypermutation (Algorithm~\ref{alg:hypermutation}) using any parameter $r\in(0,1]$.
\end{itemize}
\end{theorem}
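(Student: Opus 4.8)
The plan follows the template of the proof of Theorem~\ref{thm:uRRMO-gsemo-pczero-stdbit}, but replaces the mutation-specific $n^{-n/8}$ estimate by one that works for every unary unbiased operator and for hypermutation at once. All three algorithm families are elitist with respect to (weak) domination: GSEMO removes dominated points, NSGA-II keeps the $\mu$ best by non-dominated sorting, and the model of Algorithm~\ref{alg:muplus-blackbox} keeps the $\mu$ best by a domination-consistent ranking. Recall that every point of $\subsetK$ strictly dominates every point that is neither in $\subsetK$ nor Pareto-optimal (the latter all have fitness $(0,0)$). Hence, once the population lies in $\subsetK$, the at-least-$\mu$ positive-fitness points present in every merged population force all $\mu$ survivors to stay in $\subsetK \cup \subsetW$, so — until a Pareto-optimal point is created — every parent lies in $\subsetK$. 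A Chernoff bound and a union bound give that the initial population lies in $\subsetK$ with probability $1-2^{-\Omega(n)}$ (GSEMO, one point) resp.\ $1-o(1)$ (the population-based algorithms, $\mu = \poly(n)$). It therefore suffices to prove that a single variation step applied to a parent $x \in \subsetK$ produces a point of $\subsetW$ with probability at most $2^{-\Omega(n)}$: then the number of offspring evaluated before a Pareto-optimal point appears stochastically dominates a geometric random variable with parameter $2^{-\Omega(n)}$, so $2^{\Omega(n)}$ fitness evaluations are needed in expectation, even just to create one Pareto-optimal search point.

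For a unary unbiased operator I would first use the invariance of unbiased algorithms under $x \mapsto \permut{\sigma}{x}\oplus z$ to assume \wlo that the function is $\uRRRMO$ itself. By Lemma~\ref{lem:prop-of-subsets}, every $y \in \subsetW$ satisfies $\Hamming(x,y) \in [n/8, 7n/8]$ for every $x \in \subsetK$: the left half alone contributes at least $n/8$ when $\strl{x}\in\subsetU$, the right half alone contributes at least $3n/16$ when $\strr{x}\in\subsetC$, and the total is at most $3n/8 + n/2$. By Algorithm~\ref{alg:unary-unbiased-op} the operator returns a uniformly random point of some Hamming sphere $S_r(x)$, so it creates a fixed $y$ with probability at most $1/\binom{n}{\Hamming(x,y)} \le 1/\binom{n}{n/8}$ by unimodality of the binomial coefficient. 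Since $|\subsetW| = |\subsetP|\cdot|\subsetT| \le (n/2+1)\cdot 2^{n/2}$ while $\binom{n}{n/8} = 2^{\entropy(1/8)\,n}/\poly(n)$ with $\entropy(1/8) > 1/2$, a union bound over $\subsetW$ bounds the per-step probability by $\poly(n)\cdot 2^{(1/2 - \entropy(1/8))n} = 2^{-\Omega(n)}$, settling the unbiased-operator cases for all three algorithm families.

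For hypermutation the union bound over $\subsetW$ breaks down because $|\subsetT|$ is exponential, and this is where the specific permutation $\sigma^*$ defining our chosen member of \uRRRMOc enters; since hypermutation is symmetric under bit flips we may take $z = 0^n$. The permutation is chosen to interleave the four sub-parts of $\strl{x}$ with the four sub-parts of $\strr{x}$ so that no circular interval can at once perform all the flips a transition from $\subsetK$ into $\subsetW$ needs \emph{and} leave untouched all the bits such a transition must preserve; I would then case-split on the parent $x\in\subsetK$ and on $r$. If $\strl{x}\in\subsetU$, reaching some $\strl{y}\in\subsetP$ requires flipping at least $n/8$ prescribed left-half bits, each flipping with probability only $r$ once inside the chosen interval, so a union bound over the mere $|\subsetP| = n/2+1$ admissible left halves gives probability at most $(n/2+1)\,r^{n/8} = 2^{-\Omega(n)}$ for $r \le 1/2$. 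If $\strl{x}\notin\subsetU$ — then $\strr{x}\in\subsetC$, and along a typical run $\strl{x}$ has already been driven to $0^{n/2}\in\subsetP$ by the fitness gradient — then reaching $\subsetT$ forces at least $n/16$ controlled positions in each of the three ``pure'' sub-blocks of $\strr{x}$, and by the scattering property of $\sigma^*$ every interval achieving this also contains $\Omega(n)$ positions that must \emph{not} flip (left-half positions that would leave $\subsetP$, resp.\ agreeing positions inside a sub-block): for $r > 1/2$ these contribute $(1-r)^{\Omega(n)} = 2^{-\Omega(n)}$, and for $r \le 1/2$ the left half of such a wide interval turns $0^{n/2}$ into a string lying in $\subsetP$ only with probability $2^{-\Omega(n)}$. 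Combining $r \le 1/2$ and $r > 1/2$ gives the uniform per-step bound $2^{-\Omega(n)}$.

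The main obstacle is the construction and verification of $\sigma^*$: one must establish the ``no single interval does everything'' property \emph{uniformly} over all parents the algorithm may reach in $\subsetK$, over all Pareto optima, and over all interval positions and lengths — in particular ruling out clever alignments in which one well-placed interval simultaneously installs a $1$-prefix in the permuted left half and balances all four sub-blocks of the permuted right half. This is precisely what the four-fold subdivision of \emph{both} halves buys over Jansen and Wegener's single three-fold subdivision, and tuning the interleaving so that it defeats hypermutation while staying transparent to uniform crossover and to every unbiased operator is the delicate part of the argument.
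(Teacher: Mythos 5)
Your reduction of the theorem to a per-offspring bound --- elitism plus a Chernoff bound confine all parents to $\subsetK$ until the first Pareto optimum appears, so it suffices that a single application of the operator maps a point of $\subsetK$ into $\subsetW$ with probability $2^{-\Omega(n)}$ --- is exactly the paper's (Lemmas~\ref{lem:uRRMO-unbiasedvar} and~\ref{lem:uRRMO-hypermut} feeding into the proof of Theorem~\ref{thm:uRRMO-blackbox-unbiasedvar}). For unary unbiased operators your argument is correct and genuinely simpler than the paper's: instead of conditioning on the split $(s,q)$ of flipped bits between the two halves and estimating the ratios $\binom{n/2}{q}/\binom{n}{s+q}$, you use $\Hamming(x,y)\in[n/8,7n/8]$ for all $x\in\subsetK$, $y\in\subsetW$ to lower-bound every relevant Hamming sphere by $\binom{n}{n/8}=2^{\entropy(1/8)n-o(n)}$ and union-bound over $|\subsetW|\le(n/2+1)\cdot 2^{n/2}$; since $\entropy(1/8)>1/2$, this closes. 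That trades the paper's finer conditional analysis for one counting step and is perfectly valid.

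The hypermutation half does not close as written. Your architecture is the paper's (the block-interleaving permutation forces any interval touching all three bad blocks of one permuted half to fully contain a whole block of the other permuted half, after which one multiplies must-flip by must-not-flip probabilities), but the bounds you attach to the regimes $r\le1/2$ and $r>1/2$ leave cases open. First, for $\strl{\permut{\sigma}{x}}\in\subsetU_\sigma$ you only treat $r\le1/2$; for $r>1/2$ the bound $(n/2+1)r^{n/8}$ is vacuous, and one must instead use that a fully covered bad block with $\ones{\strl{\permut{\sigma}{x}}^j}\in[n/24,n/12]$ requires at least $n/24$ flips \emph{and} at least $n/24$ non-flips, giving $2(r(1-r))^{n/24}\le 2\cdot 4^{-n/24}$ uniformly in $r$. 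Second, for $\strr{\permut{\sigma}{x}}\in\subsetC_\sigma$ and $r\le1/2$ you rely on the covered left block leaving $\subsetP_\sigma$; this yields only $(n/8+1)\max(r,1-r)^{n/8}$, which is $2^{-\Omega(n)}$ for constant $r$ but degrades to $\Theta(n)$ for $r=1/n$ and to $2^{-\Theta(\sqrt{n})}$ for $r=1/\sqrt{n}$, while the theorem quantifies over \emph{all} $r\in(0,1]$. The regimes where this left-block factor is weak are precisely those where the right-block factor $\binom{n/8}{n/16}(r(1-r))^{n/16}$ is strong, and vice versa (at $r=1/2$ the latter is $\Theta(n^{-1/2})$ and useless); only their \emph{product}, bounded by $(n/8+1)(16/27)^{n/16}$ as in Lemma~\ref{lem:uRRMO-hypermut}, is uniformly exponentially small over $r\in(0,1]$. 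Third, you assume $\strl{x}=0^{n/2}$ in that case, but the per-step bound must hold for every parent with $\strr{\permut{\sigma}{x}}\in\subsetC_\sigma$ and arbitrary left half; the paper avoids this by union-bounding over the $n/8+1$ admissible target configurations $1^b0^{n/8-b}$ of the covered left block regardless of its current content. These defects are repairable, but as proposed the hypermutation bound is not established.
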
}

\newedit{To prove the results for unary unbiased variation operators, it suffices
to look at the basic function $\uRRRMO$ of the class, and we will use the same set
$K$ as defined in the previous section for this. The drawback of these operators is
depicted in the following lemma which shows that the probability of creating any
Pareto-optimal search point from any $x \in K$ by such an operator is exponentially
small.}

\newedit{\begin{lemma}\label{lem:uRRMO-unbiasedvar}
Let $\mathrm{op}$ be any unary variation operator according to Algorithm~\ref{alg:unary-unbiased-op},
then for every $x\in\subsetK$ it holds that
$\prob{\mathrm{op}(x)\in\subsetW} = 2^{-\Omega(n)}$.
\end{lemma}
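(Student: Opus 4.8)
The plan is to reduce the statement to a union bound over $\subsetW$. Recall from Algorithm~\ref{alg:unary-unbiased-op} that a unary unbiased operator first samples a Hamming radius $r$ and then returns a uniformly random point of $S_r(x)$; hence, exactly as in the proof of Theorem~\ref{thm:elitist-blackbox-unary-unbiased}, for every fixed $y$ we have $\prob{\mathrm{op}(x)=y}\le 1/\binom{n}{\Hamming(x,y)}$, independently of the radius distribution. Since $\subsetW=\{x\mid \strl{x}\in\subsetP\wedge\strr{x}\in\subsetT\}$ and the left and right halves are independent coordinates, $|\subsetW|=|\subsetP|\cdot|\subsetT|=(n/2+1)\cdot\binom{n/8}{n/16}^4\le (n/2+1)\cdot 2^{n/2}$. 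So it suffices to show that $\binom{n}{\Hamming(x,y)}$ is super-polynomially larger than $2^{n/2}$ for all $x\in\subsetK$ and $y\in\subsetW$.

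First I would pin down $\Hamming(x,y)$ using Lemma~\ref{lem:prop-of-subsets}. For $x\in\subsetK$ we have $\strl{x}\in\subsetU$ or $\strr{x}\in\subsetC$. If $\strl{x}\in\subsetU$, then since $\strl{y}\in\subsetP$ part~(i) gives $\Hamming(\strl{x},\strl{y})\in[n/8,3n/8]$, hence $\Hamming(x,y)\ge n/8$ and $\Hamming(x,y)\le 3n/8+n/2=7n/8$. If $\strr{x}\in\subsetC$, then since $\strr{y}\in\subsetT$ part~(ii) gives $\Hamming(\strr{x},\strr{y})\in[3n/16,5n/16]$, hence $\Hamming(x,y)\ge 3n/16>n/8$ and $\Hamming(x,y)\le n/2+5n/16=13n/16<7n/8$. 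So in every case $\Hamming(x,y)\in[n/8,7n/8]$, an interval symmetric about $n/2$; since $\binom{n}{k}$ is unimodal with maximum at $k=n/2$ and $\binom{n}{n/8}=\binom{n}{7n/8}$, this yields $\binom{n}{\Hamming(x,y)}\ge\binom{n}{n/8}$.

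It remains to estimate $\binom{n}{n/8}$, and this is the one place where care is needed: the elementary bound $\binom{n}{n/8}\ge (n/(n/8))^{n/8}=8^{n/8}=2^{3n/8}$ is too weak, since $3/8<1/2$ and it would not dominate $|\subsetW|=\Theta(n)\cdot 2^{n/2}$. Instead one uses the sharp estimate $\binom{n}{\alpha n}\ge 2^{\entropy(\alpha)n}/(n+1)$ with the binary entropy $\entropy$, together with the elementary computation $\entropy(1/8)=\tfrac38+\tfrac78\log_2\tfrac87=0.5435\ldots>\tfrac12$. Combining everything,
\[
\prob{\mathrm{op}(x)\in\subsetW}\le\sum_{y\in\subsetW}\frac{1}{\binom{n}{\Hamming(x,y)}}\le\frac{|\subsetW|}{\binom{n}{n/8}}\le (n/2+1)(n+1)\,2^{(1/2-\entropy(1/8))n}=2^{-\Omega(n)},
\]
which proves the lemma. (In the subcase $\strl{x}\in\subsetU$ the entropy estimate can be avoided: $\prob{\mathrm{op}(x)\in\subsetW}\le\prob{\strl{\mathrm{op}(x)}\in\subsetP}$, and a short computation shows $\prob{\strl{\mathrm{op}(x)}=\strl{y}}\le 1/\binom{n/2}{\Hamming(\strl{x},\strl{y})}\le 1/\binom{n/2}{n/8}\le 2^{-n/4}$ for each $\strl{y}\in\subsetP$, so a union bound over the $n/2+1$ path points already gives $2^{-\Omega(n)}$; it is the subcase $\strr{x}\in\subsetC$ that genuinely forces the sharper binomial bound, and that is the main obstacle to watch.)
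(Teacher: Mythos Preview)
Your proof is correct and takes a genuinely different, more streamlined route than the paper. The paper treats the two cases $\strl{x}\in\subsetU$ and $\strr{x}\in\subsetC$ separately. The first case is handled essentially as in your parenthetical remark. For the harder case $\strr{x}\in\subsetC$, rather than a direct union bound over all of $\subsetW$, the paper conditions on the pair $(s,q)$ of bit-flip counts in the left and right halves, uses $\Pr(\strl{\mathrm{op}(x)}\in\subsetP\mid s\text{ flips on the left})\le (n/2+1)/\binom{n/2}{s}$ together with $\Pr(A_{s,q})\le\binom{n/2}{s}\binom{n/2}{q}/\binom{n}{s+q}$, and then shows via elementary product manipulations that $M(s,q):=\binom{n/2}{q}/\binom{n}{s+q}=2^{-\Omega(n)}$ whenever $q\in[3n/16,5n/16]$.

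What each approach buys: your argument is shorter and conceptually uniform across both cases, but it hinges on the numerical fact $\entropy(1/8)\approx 0.544>1/2$, which leaves only a margin of about $0.044$ and hence an exponent of roughly $0.044n$. The paper's case analysis exploits that the left half must land in the tiny set $\subsetP$ (size $n/2+1$) rather than counting all of $\subsetW$; this sidesteps the entropy estimate entirely and yields a noticeably larger exponent (the worst-case bound there is $(13/16)^{n/2}\approx 2^{-0.15n}$). So the paper's route is tighter but more laborious, while yours is cleaner at the cost of a smaller implied constant in the $\Omega(n)$.
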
}
\begin{proof}
	\newedit{
	For any substring $x_{\mathrm{sub}}$ of $x$, conditional on $\mathrm{op}$ flipping $r$ bits in $x$ and, among these, $s$ bits in $x_{\mathrm{sub}}$, the positions of the $s$ bits to be flipped in $x_{\mathrm{sub}}$ are uniformly distributed in $x_{\mathrm{sub}}$.
	%
	%
	Let $y:=\mathrm{op}(x)$, we consider two cases:}

\newedit{\textbf{Case $\strl{x}\in \subsetU$:}
By Lemma~\ref{lem:prop-of-subsets} we get
that $\Hamming(\strl{x},\strl{y})\in [n/8,3n/8]$, therefore even if we optimistically
assume that
the number of
bits to be flipped in $\strl{x}$ is 
$s=\Hamming(\strl{x},\strl{y})$ (otherwise the sought probability is zero),
still such a number of specific bits need to be flipped on $\strl{x}$ to create
$\strl{y}\in\subsetP$. By a union bound and the well-known inequality
${n \choose k}\geq \left(\frac{n}{k}\right)^k$,
the probability of creating any $\strl{y}\in\subsetP$ by $\mathrm{op}(x)$
(a necessary condition for $y \in \subsetW$) in one offspring
production is at most:
\begin{align*}
	|\subsetP| \left(1/{n/2 \choose \Hamming(\strl{x},\strl{y})}\right)
	\leq |\subsetP| \left(1/{n/2 \choose n/8}\right)
	\leq (n/2+1) \left(\frac{n/2}{n/8}\right)^{-n/8}
	= 4^{-\Omega(n)}
	\leq 2^{-\Omega(n)}.
\end{align*}
%
}


\newedit{\textbf{Case $\strr{x}\in \subsetC$:}
Let $B$ be the event that $y \in \subsetW$ is created from $x\in K$,
and $A_{s,q}$ be the event that $s$ and $q$ bits are flipped in $\strl{x}$ and $\strr{x}$,
respectively, by $\mathrm{op}(x)$ for any fixed $s,q \in \{0, \dots, n/2\}$.
By the law of total probability, it holds that
\begin{eqnarray}\label{eq:PR(B)}
	\Pr(B) = \sum_{s=0}^{n/2}\sum_{q=0}^{n/2}\Pr(B \mid A_{s,q}) \cdot \Pr(A_{s,q}).
\end{eqnarray}}
\newedit{We have $\Pr(A_{s,q}) \leq \binom{n/2}{s} \binom{n/2}{q}/\binom{n}{s+q}$ for all $0 \le s+q \le n$ because
if for the chosen Hamming radius $r$ we have $r \neq s+q$ then $\Pr(A_{s,q})=0$ and otherwise
we have an equality due to the unbiasedness.
Also due to this unbiasedness
for any $s \in \{0, \dots, n/2\}$
the probability of generating one individual $y$ with a specific $\strl{y}$-part if $s$
bits on the left are flipped is $1/\binom{n/2}{s}$. Then by a union bound over
the points of $\subsetP$, we obtain for any fixed $q, s \in \{0, \dots, n/2\}$
that 
$
\Pr(B \mid A_{s,q})
\leq |\subsetP|/\binom{n/2}{s} = (n/2+1)/\binom{n/2}{s}
$.}
\newedit{Furthermore, by Lemma~\ref{lem:prop-of-subsets} we have that
$\Pr(B \mid A_{s,q})=0$ for $q \notin [3n/16,5n/16]$. Thus the inner sum of \eqref{eq:PR(B)}
for any $s$ is at most}
\newedit{
\begin{eqnarray*}
\sum_{q=0}^{n/2} \Pr(B \mid A_{s,q}) \cdot \Pr(A_{s,q}) \leq \sum_{q=3n/16}^{5n/16} \frac{\binom{n/2}{s} \cdot \binom{n/2}{q}}{\binom{n}{s+q}} \cdot \frac{n/2+1}{\binom{n/2}{s}} = (n/2+1) \sum_{q=3n/16}^{5n/16} \frac{\binom{n/2}{q}}{\binom{n}{s+q}}.
\end{eqnarray*}}

\newedit{
We define
$M(s,q)
\coloneqq \binom{n/2}{q}/\binom{n}{s+q}$
and claim that $M(s,q) = 2^{-\Omega(n)}$ for all integers
$s\in[0,n/2]$, $q\in[3n/16,5n/16]$. This implies the statement because then
$\Pr(B)\leq (n/2+1)
\sum_{q=3n/16}^{5n/16} M(s,q) \le n^2 \cdot 2^{-\Omega(n)} = 2^{-\Omega(n)}$.
We consider two subcases.}

\newedit{If $0\leq s \leq 3n/8$:
We have $q\leq s+q \leq 3n/8 + 5n/16 = 11n/16$
and also $n-q\geq n-5n/16=11n/16$, thus $q\leq s+q\leq n-q$.
This means that $\binom{n}{s+q}$ is not farther away from the central binomial
coefficient $\binom{n}{n/2}$ than $\binom{n}{q}=\binom{n}{n-q}$, thus the former coefficient
is no less than the latter, \ie $\binom{n}{s+q}\geq\binom{n}{q}$.
Hence, we get
\begin{align}\label{eq:M(s,q)}
M(s,q)
&\leq \binom{n/2}{q}/\binom{n}{q}
= \frac{(n/2)!}{n!} \cdot \frac{(n-q)!}{(n/2-q)!}
= \prod_{j=1}^{n/2} \frac{n/2-q+j}{n/2+j}
\overset{q\geq 3n/16}{\leq} \prod_{j=1}^{n/2} \frac{n/2-3n/16+j}{n/2+j}\nonumber\\
&=\prod_{j=1}^{n/2} \frac{5n/16+j}{n/2+j}
=\prod_{j=1}^{n/2} \left(1 - \frac{3n/16}{n/2+j}\right)
\le \prod_{j=1}^{n/2} \left(1 - \frac{3n/16}{n/2+n/2}\right)
= \left(1 - \frac{3}{16}\right)^{n/2} = 2^{-\Omega(n)}.
\end{align}}

\newedit{If $3n/8+1\leq s \leq n/2$:
This implies $s+q > 3n/8 + 3n/16=9n/16>n/2$
and because
$\binom{n}{k}$ is a decreasing function of $k$ when $k\geq n/2$ and given $s\leq n/2$,
it holds that
$\binom{n}{s+q} \geq \binom{n}{n/2+q}$. Hence,
we have
\begin{align*}
M(s,q)
&\leq \frac{\binom{n/2}{q}}{\binom{n}{n/2+q}}
= \frac{(n/2)!}{n!} \cdot \frac{(n/2+q)!}{q!}
= \prod_{j=1}^{n/2} \frac{q+j}{n/2+j}
\overset{q\leq 5n/16}{\leq} \prod_{j=1}^{n/2} \frac{5n/16+j}{n/2+j}
\overset{\eqref{eq:M(s,q)}}{=} 2^{-\Omega(n)}. \qedhere
\end{align*}}
\end{proof}

\newedit{Regarding hypermutations, we remark their following ability to
imitate uniform crossover on an ideal function (\eg \uRRRMO), but also their
drawbacks on other functions of the class.}
\newedit{Uniform crossover keeps the bit values of bit positions where both
parents agree. On all disagreeing bits, uniform crossover generates uniform
random bit values. Hypermutation can simulate the latter effect of uniform
crossover within the substring given by parameters $c$ and $c+\ell-1$ if $r=0.5$
is chosen. Then all bits in this substring are set to uniform random values.
\citet{Zarges2011} showed that this can speed up the optimisation process.
However, hypermutation subjects \emph{all} bits in the substring to mutation,
whereas uniform crossover only ``mutates'' bits in which both parents disagree. In particular, uniform crossover treats bits where both parents agree differently from those where both parents disagree. Our royal road function requires bits in $\strl{x}$ to be kept and only bits in $\strr{x}$ to be mutated with a high mutation rate. Here the ability to treat these subsets of bits differently is crucial. Hypermutation does not have this ability, and we show that there is a function in the class \uRRRMOc for which hypermutation fails badly.}

\newedit{We consider the function $\uRRRMOx{\sigma}{0^n} = \uRRRMO(\sigma(x))$
with the permutation $\sigma$ satisfying:
\begin{align}
\forall x\in\{0,1\}^n\colon
\permut{\sigma}{x}
	= \permut{\sigma}{
       \underbrace{(\strl{x}^1, \strl{x}^2, \strl{x}^3, \strl{x}^4}_{\let\scriptstyle\textstyle\substack{\strl{x}}},
       \underbrace{\strr{x}^1, \strr{x}^2, \strr{x}^3, \strr{x}^4)}_{\let\scriptstyle\textstyle\substack{\strr{x}}}
       }
	= \underbrace{(\strl{x}^1, \strl{x}^3, \strr{x}^1, \strr{x}^3}_{\let\scriptstyle\textstyle\substack{\strl{\permut{\sigma}{x}}}},
      \underbrace{\strl{x}^2, \strl{x}^4, \strr{x}^2, \strr{x}^4)}_{\let\scriptstyle\textstyle\substack{\strr{\permut{\sigma}{x}}}}.\label{eq:permut-for-hypermut}
\end{align}
Hereinafter, we use the notation in
Definition~\ref{def:the-subsets} to write the parts of a bit string $x$ and also apply this notation to the permuted string $\permut{\sigma}{x}$
of $x$, \eg $\strl{\permut{\sigma}{x}}=(\strl{x}^1, \strl{x}^3, \strr{x}^1, \strr{x}^3)$.
For the sake of clarity, we also introduce 
    $\subsetU_\sigma, \subsetP_\sigma, \subsetC_\sigma, \subsetT_\sigma$
as the analogous sets to those defined in Definition~\ref{def:the-subsets},
however these sets are defined on the permuted string $\permut{\sigma}{x}$
instead of on $x$, \eg
$\subsetU_\sigma
    :=\left\{\strl{\permut{\sigma}{x}} \in \{0,1\}^{n/2}
             \mid \forall i \in [4]\colon \ones{\strl{\permut{\sigma}{x}}^i} \in [n/24,n/12]\right\}$.
The set of non Pareto-optimal solutions with non-null fitness vectors,
and that of Pareto-optimal solutions for $\uRRRMOx{\sigma}{0^n}$ are then:
\begin{align*}
\subsetK_\sigma
    &:=\{x\in\{0,1\}^n
        \mid \strr{\permut{\sigma}{x}} \in \subsetU_\sigma \vee \strr{\permut{\sigma}{x}}\in\subsetC_\sigma\},\\
\subsetW_\sigma
    &:=\{x \in \{0,1\}^n
        \mid \strl{\permut{\sigma}{x}} \in P_\sigma \wedge \strr{\permut{\sigma}{x}} \in T_\sigma\}.
\end{align*}
Note that the function to optimise is $\uRRRMO(\permut{\sigma}{x})$, however hypermutations
can only manipulate the original string $x$. Given the intertwined locations of the blocks
as shown in \eqref{eq:permut-for-hypermut}, the probability of creating any Pareto-optimal
solution for this function from any $x\in\subsetK_\sigma$ is then exponentially small.}


\begin{lemma}\label{lem:uRRMO-hypermut}
\newedit{
Let $\mathrm{op}$ be any hypermutation operator according to Algorithm~\ref{alg:hypermutation}
with any parameter $r>0$, then for any $x\in\subsetK_\sigma$ it holds that
$\prob{\mathrm{op}(x)\in\subsetW_\sigma} = 2^{-\Omega(n)}$.
}
\end{lemma}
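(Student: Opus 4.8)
The plan is to condition on the contiguous circular interval $I:=\{c,\dots,c+\ell-1\bmod n\}$ picked by the hypermutation. There are only $n(n+1)$ equally likely choices of the pair $(c,\ell)$ (and $\ell=0$ trivially gives $\mathrm{op}(x)=x\notin\subsetW_\sigma$), so it suffices to show $\prob{\mathrm{op}(x)\in\subsetW_\sigma\mid I}=2^{-\Omega(n)}$ for \emph{every} $I$; averaging over $I$ then yields the lemma. Recall that by the choice of $\sigma$ in \eqref{eq:permut-for-hypermut} the eight length-$n/8$ blocks of $x$ alternate between four \emph{$P$-blocks}, whose concatenation is $\strl{\permut{\sigma}{x}}$, and four \emph{$T$-blocks}, whose concatenation is $\strr{\permut{\sigma}{x}}$. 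Since $I$ is a circular interval, its two endpoints lie in at most two of the eight blocks (the \emph{boundary blocks}); every other block is either fully contained in $I$ (each of its bits flipped independently with probability $r$) or disjoint from $I$ (unchanged). The elementary fact I will use throughout is that a fully contained block with $a$ ones and $b=n/8-a$ zeros in $x$ becomes monochromatic in $\mathrm{op}(x)$ with probability $(1-r)^a r^b+r^a(1-r)^b\le 2\max(r,1-r)^{n/8}$, which is $2^{-\Omega(n)}$ for any constant $r\in(0,1)$; and if additionally $a,b\ge n/24$ this probability is at most $2^{-n/24+1}$ for \emph{every} $r\in(0,1]$.

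Since $x\in\subsetK_\sigma$, either $\strl{\permut{\sigma}{x}}\in\subsetU_\sigma$ (Case~A) or $\strr{\permut{\sigma}{x}}\in\subsetC_\sigma$ (Case~B). In Case~A every $P$-block of $x$ has between $n/24$ and $n/12$ ones, hence is non-monochromatic with at least $n/24$ ones and at least $n/24$ zeros. Because a point of $\subsetP$ has the form $1^a0^{n/2-a}$ (cf.\ Lemma~\ref{lem:prop-of-subsets}), $\mathrm{op}(x)\in\subsetW_\sigma$ requires at least three of the four $P$-blocks of $\mathrm{op}(x)$ to be monochromatic. A disjoint $P$-block stays non-monochromatic, a fully contained $P$-block turns monochromatic with probability at most $2^{-n/24+1}$ by the fact above (including $r=1$, where a fully contained block is merely complemented and keeps its ones-count in $[n/24,n/12]$), and at most two $P$-blocks are boundary blocks; hence at least one fully contained $P$-block would have to turn monochromatic, and a union bound over the $\le 4$ $P$-blocks gives $\prob{\mathrm{op}(x)\in\subsetW_\sigma\mid I}=2^{-\Omega(n)}$.

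In Case~B a point of $\subsetC_\sigma$ has the form $1^a0^{n/2-a}$ or $0^a1^{n/2-a}$, so at least three of the four $T$-blocks of $x$ are monochromatic; for $\mathrm{op}(x)\in\subsetW_\sigma$ each such block must change from $n/8$ ones or zeros to exactly $n/16$ ones, which forces at least $n/16$ of its bits (in particular at least one) to lie in $I$. Thus $I$ meets at least three of the four $T$-blocks. Since the $T$-blocks sit at the even positions of the eight-block circle, separated by $P$-blocks, a contiguous circular interval meeting three of them must fully contain at least two $P$-blocks: by rotational symmetry assume the met $T$-blocks are at positions $2,4,6$ and pick positions $p_2\in$ block $2$, $p_4\in$ block $4$, $p_6\in$ block $6$ all lying in $I$; the complement $\bar I$ is a circular interval disjoint from $\{p_2,p_4,p_6\}$, hence contained in one of the three arcs these points cut the circle into, so $I$ contains the union of the other two arcs, which in each case swallows two whole odd-position blocks. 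Now the Case~A argument applies verbatim to the $P$-blocks of $\mathrm{op}(x)$: three of the four must be monochromatic, at least two are fully contained, the $\le 2$ boundary $P$-blocks yield at most two monochromatic $P$-blocks, so a fully contained $P$-block must turn monochromatic, which has probability $2^{-\Omega(n)}$ for constant $r\in(0,1)$. For $r=1$ the operator is deterministic on $I$, a monochromatic $T$-block becomes balanced only if exactly $n/16<n/8$ of its bits lie in $I$, i.e.\ only if it is a boundary block, which is impossible for three blocks given at most two boundary blocks; hence $\prob{\mathrm{op}(x)\in\subsetW_\sigma\mid I}=0$. Combining the cases and averaging over $I$ finishes the proof.

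I expect the geometric claim in Case~B---that a contiguous interval touching three of the spread-out $T$-blocks is forced to contain two entire $P$-blocks, which is exactly where the interleaving permutation $\sigma$ does its work---to be the step needing the most care, together with a clean separation of the parameter regimes: the bound $2\max(r,1-r)^{n/8}$ is exponentially small only when $r$ is bounded away from $1$, so the boundary value $r=1$ must be handled through the deterministic behaviour of the operator as above.
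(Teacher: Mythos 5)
Your overall architecture mirrors the paper's: the same case split over $x\in\subsetK_\sigma$, the same exploitation of the interleaving permutation $\sigma$ to force the mutated interval to swallow whole blocks of $\strl{\permut{\sigma}{x}}$, and a correct (in fact slightly stronger) geometric claim in Case~B --- the paper only needs one fully covered block of $\strl{\permut{\sigma}{x}}$ where you establish two. Case~A is sound for every $r\in(0,1]$, because membership in $\subsetU_\sigma$ guarantees each block has at least $n/24$ ones and at least $n/24$ zeros, so your bound $2(r(1-r))^{n/24}\le 2\cdot 4^{-n/24}$ is uniform in $r$.

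The gap is in Case~B. There you use the monochromatic $T$-blocks only to pin down the geometry of $I$ and then rest the entire probability bound on a fully contained $P$-block having to end up monochromatic. In Case~B you have no control over the initial contents of the $P$-blocks (only $\strr{\permut{\sigma}{x}}\in\subsetC_\sigma$ is assumed), so such a block may already be monochromatic, and then the relevant probability is $r^{n/8}+(1-r)^{n/8}\le 2\max(r,1-r)^{n/8}$. This is $2^{-\Omega(n)}$ only when $r$ is bounded away from \emph{both} $0$ and $1$ --- not, as your closing remark suggests, only away from $1$. The lemma quantifies over all $r\in(0,1]$, and Theorem~\ref{thm:uRRMO-blackbox-hypermut} (see also the remark following it) allows $r$ to be chosen anew in each application, possibly as a function of $n$; for $r=1/n$ your Case~B bound evaluates to roughly $8e^{-1/8}>1$ and is vacuous, while your deterministic argument covers only the single point $r=1$. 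The paper closes exactly this hole by multiplying in the probability that a fully contained monochromatic $T$-block becomes balanced, namely $\binom{n/8}{n/16}(r(1-r))^{n/16}\le(4r(1-r))^{n/16}$, which is exponentially small precisely in the regime where $\max(r,1-r)^{n/8}$ is not; since the two blocks occupy disjoint bit positions the events are independent, and the product is at most $(n/8+1)(16/27)^{n/16}=2^{-\Omega(n)}$ uniformly over $r\in(0,1]$. Your setup already contains all the ingredients for this fix, but as written the proof does not establish the lemma for all admissible~$r$.
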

\begin{proof}
\newedit{
We distinguish two cases for $x$.}

\newedit{If $\strl{\permut{\sigma}{x}}\in\subsetU_\sigma$ \brandnewedit{then we argue as in the proof of Lemma~\ref{lem:prop-of-subsets}(i): 
creating a mutant} $y\in\subsetW_\sigma$ requires $\strl{\permut{\sigma}{y}} \in \subsetP_\sigma$
which means $\strl{\permut{\sigma}{y}}=1^a 0^{n/2-a}$ for some non-negative
integer $a$, and this implies $\strl{\permut{\sigma}{y}}^j \in \{0^{n/8},1^{n/8}\}$
for at least three distinct $j\in[4]$. \brandnewedit{We call these \emph{bad blocks}. By definition of $\subsetU_\sigma$,
$\ones{\strl{\permut{\sigma}{x}}^j}\in[n/24,n/12]$ for all blocks $j$, thus all bad blocks must be altered by the hypermutation as a necessary condition for creating $y$.}
\brandnewedit{Recall that hypermutation operates on the original string~$x$.} Therefore, the values of $c$ and $\ell$ in Algorithm~\ref{alg:hypermutation}
have to be chosen so that the positions between $c$ and $c+\ell-1 \bmod n$ \brandnewedit{contain bits from all bad blocks. Since $\sigma$ permutes blocks as a whole, every interval containing bits from three bad blocks must cover at least one bad block entirely.}
%
%
This block has to be mutated into $0^{n/8}$ or $1^{n/8}$ in one
application of the hypermutation. By a union bound this happens with probability
at most $2 (r \cdot (1-r))^{n/24} \leq 2 \cdot (1/4)^{n/24} = 2^{-\Omega(n)}$
since we have to flip at least $n/24$ bits
while keeping at least
$n/24$ bits unchanged.
}

\newedit{If $\strr{\permut{\sigma}{x}}\in\subsetC_\sigma$ then 
$\strr{\permut{\sigma}{x}} \in \{1^a0^{n/2-a}, 0^a1^{n/2-a} \mid 0 \le a \le n/2\}$,
thus $\strr{\permut{\sigma}{x}}^j\in\{0^{n/8},1^{n/8}\}$ for at least three\footnote{Note that this can be exactly three blocks if for one $i \in [4]$ we have
$\strr{\permut{\sigma}{x}}^i \in \{1^{n/16}0^{n/16}, 0^{n/16}1^{n/16}\}$, because then
$\ones{\strr{\permut{\sigma}{x}}^i} = \zeros{\strr{\permut{\sigma}{x}}^i}\ = n/16$
and nothing needs to be changed for this block $i$.
} distinct
$j\in[4]$\brandnewedit{, and we again refer to these bits as \emph{bad blocks}}.
So, in order to have $\strr{\permut{\sigma}{y}}\in\subsetT_\sigma$,
we have to flip half of the bits in each of these three blocks 
and this implies that the \brandnewedit{interval of bits in $x$ subjected to hypermutation has to be chosen such that all bad blocks in
$\strr{\permut{\sigma}{x}}=(\strl{x}^2,\strl{x}^4,\strr{x}^2,\strr{x}^4)$
are touched. 
Since the interval is chosen in $x$ and $x=(\strl{x}^1,\strl{x}^2,\strl{x}^3,\strl{x}^4,\strr{x}^1,\strr{x}^2,\strr{x}^3,\strr{x}^4)$,
every interval containing bits from three bad blocks also contains a whole block from
$\strl{\permut{\sigma}{x}}=(\strl{x}^1,\strl{x}^3,\strr{x}^1,\strr{x}^3)$.
This can be easily verified by considering cases where blocks $\strl{x}^2$ and $\strr{x}^2$ are bad and otherwise considering bad blocks $\strl{x}^4$ and $\strr{x}^4$.}
\brandnewedit{Thus, every interval contains a whole bad block in
$\strr{\permut{\sigma}{x}}$
and also one whole block of $\strl{\permut{\sigma}{x}}=(\strl{x}^1, \strl{x}^3, \strr{x}^1, \strr{x}^3)$.
We bound the probability of creating optimal configurations in these two blocks.}
Flipping half of the bits in one block of
$\strr{\permut{\sigma}{x}}$ happens with probability
$\binom{n/8}{n/16}\cdot r^{n/16} \cdot (1-r)^{n/16}$.
\brandnewedit{For the block in $\strl{\permut{\sigma}{y}}$ we note that
there are
at most
$(n/8)+1$ optimal configurations $1^{b}0^{n/8-b}$ for $b\in\{0, \dots, n/8\}$. We fix one optimal configuration and argue that all bits in $\strl{\permut{\sigma}{y}}$ differing from in must be flipped, and all other bits in that block must not be flipped.
If $m$ bits differ, the probability for this event is }
$r^m\cdot (1-r)^{n/8-m}
\leq \max(r,1-r)^m \cdot \max(r,1-r)^{n/8-m}
= \max(r,1-r)^{n/8}
$ \brandnewedit{where the last expression no longer depends on~$m$.}
\brandnewedit{By a union bound over all $(n/8)+1$ optimal configurations,} the probability of creating any $y \in \subsetW_\sigma$ from $x \in \subsetK_\sigma$ is at most, \brandnewedit{using $\binom{n/8}{n/16} \leq 4^{n/16}$,}
\[
\binom{n/8}{n/16} \left(r (1-r)\right)^{\frac{n}{16}}
\cdot
\left(\frac{n}{8}+1\right)\max(r,1-r)^{\frac{n}{8}}
    \leq \left(\frac{n}{8}+1\right)
             \left(4\max(r^2(1-r),r(1-r)^2)\right)^{\frac{n}{16}}
    =: p.
\]
Note that $4\max(r^2(1-r),r(1-r)^2)$ on $r\in(0,1]$ reaches the maximum
value $16/27$ at either $r=1/3$ or $r=2/3$, therefore $p\leq (n/8+1)(16/27)^{n/16}
=2^{-\Omega(n)}$.
}
\end{proof}

\newedit{We can now prove Theorem~\ref{thm:uRRMO-blackbox-unbiasedvar}.}
\begin{proof}[Proof of Theorem~\ref{thm:uRRMO-blackbox-unbiasedvar}]
\newedit{Due to the unbiasedness of the variation operators, the result of
Lemma~\ref{lem:uRRMO-unbiasedvar} not only holds for $\uRRRMO$ but
also holds
for $\uRRRMOx{\sigma}{0^n}$ with $\sigma$ defined as in Equation~\eqref{eq:permut-for-hypermut}
(but with the sets $\subsetK_\sigma, \subsetW_\sigma$ in the statement of
the lemma) and in fact for any function of the class $\uRRRMO$ (with its appropriate
sets $\subsetK$ and $\subsetW$).
Thus, for simplification we will use the same function $\uRRRMOx{\sigma}{0^n}$
throughout the proof.}

\newedit{Regarding the algorithms, note that the result of NSGA-II follows
immediately from that of the elitist black-box algorithm because NSGA-II with
$p_c=0$ is a special case of an elitist algorithm with $\lambda=\mu$. Hence,
it remains to show the results for GSEMO and the elitist black-box algorithm.}

\newedit{Starting with GSEMO, the probability that the initial search point belongs
to $\subsetK_\sigma$ is at least $1-2^{-\Omega(n)}$ by a Chernoff bound.
Assume this does happen, then the first $y \in \subsetW_\sigma$ can be only created
from a point $x \in \subsetK_\sigma$. By Lemmas~\ref{lem:uRRMO-unbiasedvar} and
\ref{lem:uRRMO-hypermut} this happens with probability is both at most
$2^{-\Omega(n)}$ by an unary variation operator or a hypermutation. Thus the
expected number of fitness evaluations is at least $(1-2^{-\Omega(n)})2^{\Omega(n)}=2^{\Omega(n)}$.}

\newedit{Now consider an elitist blackbox algorithm. Initialising every search
point in $\subsetK_\sigma$ happens with probability $1-\mu\cdot 2^{-\Omega(n)}=
1-o(1)$ by a Chernoff and a union bound, given $\mu \in \poly(n)$.
If this happens then due to the elitism the algorithm accepts only search points
from $\subsetK_\sigma$ or $\subsetW_\sigma$. Again by Lemmas~\ref{lem:uRRMO-unbiasedvar}
and \ref{lem:uRRMO-hypermut} the probability for creating a point in
$\subsetW_\sigma$ from a point in $K_\sigma$ with an unbiased variation
operator or hypermutation is $2^{-\Omega(n)}$. Thus the expected number of fitness
evaluations is then at least $(1-o(1))2^{\Omega(n)}=2^{\Omega(n)}$.}
\end{proof}

\newedit{Unlike the previous result for GSEMO on \RRRMO (Theorem~\ref{thm:gsemo-pc-zero-hypermutation})
where only one hypermutation operator with a fixed parameter $r$ is allowed,
such a restriction is not required here for the class \uRRRMOc.
\brandnewedit{In particular, Theorem~\ref{thm:gsemo-pc-zero-hypermutation} includes algorithms that may choose from all unary unbiased variation operators and hypermutation in every generation, and the hypermutation parameter $r$ can be chosen anew in every application of the operator.}
}

\newedit{\subsection{Use of Uniform Crossover Implies Expected Polynomial Optimisation Time on \uRRRMO}}

\newedit{We show for GSEMO and for NSGA-II that they can find the whole Pareto front for $\uRRRMO$ in polynomial expected time when using both standard bit mutation and uniform crossover.}

\newedit{\subsubsection{Analysis of GSEMO}\label{sec:gsemo-2}}

\newedit{\begin{theorem}\label{thm:uRRMO-gsemo}
GSEMO (Algorithm~\ref{alg:gsemo}) with $p_c\in(0,1)$, uniform
crossover and standard bit mutation finds a Pareto-optimal set of any
function in the class $\uRRRMOc$ in
   $\bigO\left(\frac{n^3}{p_c(1-p_c)}\right)$
fitness evaluations in expectation.
\end{theorem}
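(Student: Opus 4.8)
The plan is to mimic the phase‑based analysis (method of typical runs) used for Theorem~\ref{thm:gsemo}: split a run into phases, bound the expected length of each, and skip later phases whose goal is reached early. Since GSEMO here uses only unbiased operators, it behaves identically on every function of $\uRRRMOc$ (the remark after Definition~\ref{def:urrrmo-class}), so I argue on $\uRRRMO$, and I repeatedly use elitism: a new point survives only if not dominated, so once a fitness vector is represented in the population it stays represented and every ``covering'' step is irreversible. Phase~1 reaches a point $x$ with $\strl{x}\in\subsetU$, i.e.\ $x\in\subsetK$: this holds after initialisation with probability $1-2^{-\Omega(n)}$ by a Chernoff and a union bound over the four blocks of $\strl{x}$; otherwise every point not in $\subsetK\cup\subsetW$ has fitness $(0,0)$, the population is a single point, GSEMO just iterates standard bit mutation, and Lemma~\ref{lem:repeat-bitwise-mutation} (applied per block with $m=n/8$, $\varepsilon=1/3$, using the symmetry of $0$ and $1$) reaches $\subsetU$ in $\poly(n)$ expected steps, a contribution of $o(1)$ overall.

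Phase~2 reaches a member with $\strr{x}\in\subsetC$: while none exists, leaving $\subsetU$ with $\strr{x}\notin\subsetC$ gives fitness $(0,0)$ and is rejected, so the population stays inside $\{x:\strl{x}\in\subsetU,\ \strr{x}\notin\subsetC\}$, where $\uRRRMO(x)=f(x)$ is a \LOTZ-type bi‑objective function on $\strr{x}$ whose Pareto set is exactly $\subsetC$; since the population has size $\bigO(n)$ (Lemma~\ref{lem:size-non-dom-set2}) and one boundary bit flip of $\strr{x}$ of the appropriate member makes progress (needing only that crossover is not drawn, probability $1-p_c$), the standard fitness‑level argument gives $\bigO(n^3/(1-p_c))$ expected generations. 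Phase~3 turns one such member into a population covering all of $\subsetC$ with left part $0^{n/2}$. The first $\subsetC$-member dominates every member with $\strr{x}\notin\subsetC$ (fitness at least $2n$ versus at most $n$ in each objective), so the population is cleared to that point, which by repeated single $1$-to-$0$ flips in $\strl{x}$ reaches $\strl{x}=0^{n/2}$ in $\bigO(n\log n/(1-p_c))$ expected generations (multiplicative drift, population size $1$). Afterwards I use the potential $\Psi:=\sum_{p\in\subsetC}\psi_p$, where $\psi_p$ is the number of ones in the left part of the unique population member whose right part equals the circle element $p$ (unique since distinct circle elements have distinct $f$-values), and $\psi_p:=n/2$ if no such member exists. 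Elitism makes $\Psi$ non‑increasing, with $\Psi=\bigO(n^2)$ at the start and $\Psi=0$ at the goal; a boundary flip of $\strr$ of a covered member next to an uncovered element lowers $\Psi$ by $\Omega(n)$, and a $1$-to-$0$ flip in the left part of a member with $\psi_p>0$ lowers it by $1$, each event having probability $\Omega((1-p_c)/n^2)$ after uniform parent selection, so Theorem~\ref{thm:additive-drift} gives $\bigO(n^3/(1-p_c))$ expected generations.

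Phase~4 creates the first Pareto‑optimal point: the population holds all $n$ circle elements with left part $0^{n/2}$, hence $\Omega(n)$ ordered parent pairs with equal left part $0^{n/2}$ and complementary right parts (such as $1^a0^{n/2-a}$ and $0^a1^{n/2-a}$); a generation that draws crossover (probability $p_c$), picks such a pair (probability $\Omega(1/n)$) and then flips no bit in mutation (probability $\Omega(1)$) produces an offspring with left part $0^{n/2}\in\subsetP$ and right part uniform over $\{0,1\}^{n/2}$, hence in $\subsetT$ with probability $|\subsetT|/2^{n/2}=\Theta(1/n^2)$, so a Pareto optimum appears with probability $\Omega(p_c/n^3)$ per generation, i.e.\ within $\bigO(n^3/p_c)$ expected generations. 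In Phase~5, once any Pareto optimum exists all non‑optimal points are removed and the population is a set of at most $n$ Pareto optima indexed by $\LO(\strl{x})\in\{0,\dots,n/2\}$ (Lemma~\ref{lem:size-non-dom-set2}); a single bit flip changing a member's left part from $1^a0^{n/2-a}$ to $1^{a\pm1}0^{n/2-a\mp1}$ while leaving its $\subsetT$-part intact (probability $\Omega((1-p_c)/n^2)$ after parent selection) covers an adjacent front value, coverage is irreversible, $\bigO(n)$ steps suffice, and the phase takes $\bigO(n^3/(1-p_c))$ expected generations. Summing all phases yields $\bigO(n^3/(1-p_c)+n^3/p_c)=\bigO(n^3/(p_c(1-p_c)))$ expected generations, equal to the claimed number of fitness evaluations.

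The step I expect to be hardest is Phase~3: one must check that the neutral drift of $\strl{x}$ inside $\subsetU$ cannot leave $\subsetU$ before $\subsetC$ is reached (leaving would give fitness $(0,0)$ and be rejected), that reaching the first circle element genuinely reduces to a clean \LOTZ-type analysis although $\strl{x}$ is mutated throughout, and, most delicately, that $\Psi$ is truly non‑increasing — which hinges on the observation that a member sitting at a circle element can only be replaced by one with an identical fitness vector, hence an identical number of ones in its left part, so that a crossover between two $\subsetC$-members can only introduce a point that is rejected or is ``at least as good'' at its fitness vector and thus never aids an adversary.
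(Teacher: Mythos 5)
Your phase structure matches the paper's (typical runs; reach a non-trivial fitness, reach the circle $\subsetC$, empty the left half, cover $\subsetC$, cross two complementary circle members to hit $\subsetW$, then spread along the front), and Phases 1, 2, 4 and 5 of your write-up are essentially the paper's Phases 1--2, 5 and 6. However, your argument for covering $\subsetC$ via the potential $\Psi=\sum_{p\in\subsetC}\psi_p$ contains a genuine error, and it is exactly at the point you flag as delicate. The claim that a member sitting at a circle element can only be replaced by one with an identical fitness vector is false. Write $\ell_x:=\ones{\strl{x}}$ and recall that a point $x$ with $\strr{x}=p\in\subsetC$ has fitness $\bigl(f_1(p)+2n-\ell_x,\; n-f_1(p)+2n-\ell_x\bigr)$, where the values $f_1(p)$ for $p\in\subsetC$ are exactly $1,\dots,n$. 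A new offspring $z$ with $\strr{z}=p'\neq p$ weakly dominates $x$ as soon as $\ell_z\le \ell_x-|f_1(p)-f_1(p')|$; GSEMO then deletes $x$, the position $p$ becomes uncovered, and $\psi_p$ jumps from $\ell_x$ back to $n/2$. For instance, if $p$ and $p'$ are adjacent on the circle, $p$ is covered with $\ell_x=1$ and $p'$ with $\ell=5$, then an offspring at $p'$ with $\ell_z=0$ removes both, and $\Psi$ changes by $(n/2-1)-5>0$. So $\Psi$ is not non-increasing and the additive-drift step collapses. (A second, smaller problem with the same step: the claimed $\Omega(n)$ decrease from a boundary flip of $\strr{}$ only holds when the extended member has few left ones, which is not guaranteed for an arbitrary covered neighbour of an uncovered position.)

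The repair is what the paper does: first create \emph{one} circle member with $\strl{x}=0^{n/2}$, and only then spread along $\subsetC$ using single bit flips in $\strr{}$ applied to members that already have an all-zero left half. By the displayed dominance condition, a member with $\ell_x=0$ at position $p$ can never be removed before a Pareto optimum appears, so coverage by such points is irreversible and a plain waiting-time argument over the $n-1$ missing positions gives $\bigO(n^3/(1-p_c))$. Two further small inaccuracies you should fix: after the first $\subsetC$-member appears the population does \emph{not} stay of size $1$ (incomparable circle members accumulate), so reducing $\ones{\strl{x}}$ to $0$ costs an extra factor for parent selection — the paper selects the minimum-$\ell$ member among the $\bigO(n)$ non-dominated points and gets $\bigO(n^2\log n/(1-p_c))$, which is still absorbed; and in Phase~1 applying Lemma~\ref{lem:repeat-bitwise-mutation} ``per block'' does not by itself give all four blocks of $\strl{x}$ inside $[n/24,n/12]$ \emph{simultaneously} — the paper needs a negative-drift argument (Theorem~\ref{thm:negative-drift}) to show that a block, once balanced, does not leave the target interval while the others catch up.
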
}

\begin{proof}
\newedit{As in the proof of Theorem~\ref{thm:gsemo} for \RRRMO we use the method of typical runs and divide the optimisation procedure into several phases.}

\newedit{\textbf{Phase 1:} Create a search point $x$ with $\uRRRMO(x) \neq (0,0)$.}

\newedit{By a Chernoff bound and a union bound, the probability of initializing an
individual $x$ with fitness $
(0,0)$ 
is at most $2^{-\Omega(n)}$ as it is necessary to create a search point $x$ with
$\ones{\strl{x}^i} \notin [n/24,n/12]$ for some $i \in [4]$ and the expected
number of ones in $\strl{x}^1$ or $\strl{x}^2$ is $n/16$ after initialization.
If this occurs then as long as every individual created has fitness $(0,0)$,
the population always consists of the latest search point and crossover,
if executed, has no effect.
To bound the expected number of generations for all $\ones{\strl{x}^i}$
to simultaneously reach the interval $[n/24,n/12]$, we divide the run of the phase
into subphases. Each subphase lasts for $\tau := \max_{i\in[4]}\{\tau_i\}$
generations where $\tau_i$ for each $i\in[4]$ is the first point during the subphase
that $\ones{\strl{x}^i}$ reaches interval $[3n/56, n/14]$, which is a subinterval
of our target $[n/24,n/12]$. A subphase is called \emph{successful} if during its
length, none of the $\ones{\strl{x}^i}$ which have entered the subinterval leave
our target. If a subphase is unsuccessful, a new one is assumed to start and
we repeat our analysis, thus the expected number of generations of the phase
is bounded from above by the expected waiting time for a successful subphase.}

\newedit{Recall that each block $\strl{x}^i$ has length $n/8$, thus it follows from
Lemma~\ref{lem:repeat-bitwise-mutation} with $c=1/8$ and $\varepsilon=(1/14-1/16)(2/c)=1/7$
that $\expect{\tau_i}=\bigO(n)$ for all $i\in[4]$. Because $\tau \leq \sum_{i=1}^{4}\tau_i$,
then by linearity of expectation we get the expected length of a subphase is
$\expect{\tau} \leq \expect{\sum_{i=1}^{4}\tau_i} = \sum_{i=1}^{4}\expect{\tau_i}=\bigO(n)$.
Furthermore, for any $i \in [4]$ if we define
    $Y_t:=\max\{\ones{\strl{x}^i},\zeros{\strl{x}^i}\}$,
then as shown in the proof Lemma~\ref{lem:repeat-bitwise-mutation}
it holds that $\expect{Y_{t+1}-Y_t \mid Y_t = s} \leq -c\varepsilon+o(1)=-1/56+o(1)$
for all $s \in [n/14,n/12]$.
Additionally, in order to have $|Y_t-Y_{t+1}|\geq m$ at least $m$ bits in $\strl{x}^i$
have to be flipped in order to have either $Y_t-Y_{t+1}\geq m$ or $Y_{t+1}-Y_{t}\geq m$,
hence by a union bound
\begin{align*}
\prob{|Y_t - Y_{t+1}|\geq m}
    \leq 2 {n/8 \choose m}n^{-m}
    = 2\cdot\frac{(n/8)!}{(n/8-m)!m!} \cdot n^{-m}
    \leq 2(n/8)^{m} n^{-m}
    \leq \frac{1}{4^m}.
\end{align*}
Theorem~\ref{thm:negative-drift}
    applied to the process $Y'_t = n/8 - Y_t$
    with $a=(n/8-n/12)/n=1/24$,
         $b=(n/8-n/14)/n=3/56$,
         $r=1$ and $\eta=3$ implies that even within an exponential number of generations,
the probability that $Y_t$ goes above $n/12$ again, or in other words $\ones{\strl{x}^i}$
leaves the target interval $[n/24,n/12]$, is at most $2^{-\Omega(n)}$. Since this holds
for any block $\strl{x}^i$, the probability of an unsuccessful subphase is still at most
$4! \cdot 3\cdot 2^{-\Omega(n)}=2^{-\Omega(n)}$ even by taking into account any possible
order for the blocks to reach the subinterval and a union bound on the failure probabilities
of the three first blocks.}
\newedit{
By the method of typical runs, the expected length of the first phase is then
$1 + 2^{-\Omega(n)}\left(\expect{\tau}/(1-2^{-\Omega(n)})\right)
=1 + 2^{-\Omega(n)}\left(\bigO(n)/(1-2^{-\Omega(n)})\right)
=1+o(1)$.}

\textbf{Phase 2:} Create a search point $x$ with $\strr{x} \in C$. 

Suppose that there is no such search point in $P_t$. Let
\[
k_1:=\max\{\LO(\strr{x})+\TZ(\strr{x}) \mid x \in P_t \text{ with } \LO(\strr{x}) \neq 0\}
\]
and
\[
k_2:=\max\{\LZ(\strr{x})+\TOs(\strr{x}) \mid x \in P_t \text{ with } \LO(\strr{x}) = 0\}.
\]
Let $m:=\max\{k_1,k_2\} \in \{1, \ldots , n/2-1\}$. Since a search point $y$ with $\LO(\strr{y})+\TZ(\strr{y}) = k_1$ ($\LZ(\strr{y})+\TOs(\strr{y}) = k_2$) can be only dominated by a search point $z$ with $\LO(\strr{z})+\TZ(\strr{z}) \geq k_1$ ($\LZ(\strr{z})+\TOs(\strr{z}) \geq k_2$), we see that $m$ cannot decrease.
To increase the value $m$ it suffices to choose an individual $y$ with $\strl{y} \neq 0$, $\LO(\strr{y})+\TZ(\strr{y})=k_1$ if $m=k_1$ or $\LZ(\strr{y})+\TOs(\strr{y})=k_2$ if $m=k_2$, omit crossover and flip one specific bit of $y$ in the mutation step. Since the parent is chosen uniformly at random, i.\;e.\ with probability at least $1/n$ by Lemma \ref{lem:size-non-dom-set2}, the probability for this event is at least $(1-p_c) \cdot 1/n^2 \cdot (1-1/n)^{n-1} \geq \frac{1-p_c}{en^2}$. Since there are at most $n/2$ different values for $m$ we need at most $n/2-1$ such improving steps. So the expected number of fitness evaluations to complete this phase is at most $\frac{1}{1-p_c}(n/2-1)en^2 = O(\frac{n^3}{1-p_c})$.

\textbf{Phase 3:} Create a search point $x$ with $\strr{x} \in C$ and $\strl{x}=0^{n/2}$.

We pessimistically assume that there is no such search point \newedit{in the current population}. To make progress towards the goal to this phase, it suffices to first select an individual $x$ which has a minimum value $i$ of ones in the first half of the string amongst those individuals with $\strr{x} \in C$, then to omit crossover and to flip one of the $i$ $1$-bits in the first half during mutation. 
The probability of this event is at least
$\frac{1}{n} \cdot (1-p_c) \cdot \binom{i}{1} \cdot \frac{1}{n} \cdot (1-1/n)^{n-1} = \frac{1}{n} \cdot (1-p_c) \cdot \frac{i}{n} \cdot (1-1/n)^{n-1} \geq \frac{i(1-p_c)}{en^2}.$
Since $n/2$ such steps are enough we have that the goal of this phase is reached after at most
\[
\sum_{i=1}^{n/2} \frac{en^2}{i(1-p_c)} = O \left(\frac{n^2 \log(n)}{1-p_c} \right)
\]
fitness evaluations in expectation.

\textbf{Phase 4:} The population contains every search point $x$ with $\strr{x} \in C$ and $\strl{x} = 0^{n/2}$. 
\andre{Rewrote this}

Suppose that this condition is not fulfilled. Then there must exist a search point $z \notin P_t$ with $\strl{z}=0^{n/2}$ and $\strr{z} \in C$ as well as a search point $y \in P_t$ with $\strl{y}=0^{n/2}$, $\strr{y} \in C$ and $H(y,z)=1$. Then the individual $z$ can be generated by choosing $y$ as a parent, omitting crossover and flipping a specific bit while keeping the other bits unchanged. The probability of this event is at least $(1-p_c) \cdot 1/n^2 \cdot (1-1/n)^{n-1} \geq (1-p_c) \cdot 1/n^2 \cdot 1/e$. Note that $n-1$ such steps suffice to finish this phase since there at most $n$ different search points $x$ with $\strl{x}=0^{n/2}$ and $\strr{x} \in C$. Thus the expected number of fitness evaluations is at most
\[
\frac{en^2(n-1)}{1-p_c} = O \left(\frac{n^3}{1-p_c} \right).
\]

\textbf{Phase 5:} Create one point in $\subsetW$.

Suppose that there is no such point \newedit{in the current population}. Then $x \in P_t$ if and only if $x=(\strl{x},\strr{x})$ with $\strl{x}=0^{n/2}$ and $\strr{x} \in C$. To generate a point in $\subsetW$ one can choose two individuals $x,y$ with maximal Hamming distance (which happens with probability $1/n$), apply uniform crossover to generate an individual $z=(\strl{z},\strr{z})$ with $\strl{z} = 0^{n/2}$ and $\strr{z} \in T$ and not flip any bit during the mutation step. For the outcome $\strr{z}=(\strr{z}^1,\strr{z}^2,\strr{z}^3,\strr{z}^4)$ of uniform crossover it is sufficient that $\strr{z}^i$ contains $n/16$ zeros and $n/16$ ones. The probability of this event is $\binom{n/8}{n/16} (\frac{1}{2})^{n/8} = \Theta(n^{-1/2})$ by Stirling's formula for a given $i \in [4]$. Since these events are independent for different $i$, the probability of generating a suitable $z$ is at least $\Theta(n^{-2})$. Thus, the probability of generating a search point in $W$
is at least $1/n \cdot p_c \cdot \Theta(n^{-2}) \cdot (1-1/n)^n = \Theta(p_cn^{-3})$ (since $(1-1/n)^n \geq 1/4$). So the expected number of fitness evaluations to complete this phase is at most $\Theta(\frac{1}{p_c} \cdot n^3)$.

\textbf{Phase 6:} Find a Pareto-optimal set.

Once a search point $x$ with $\LO(\strl{x})+\TZ(\strl{x})=n/2$ and $\strr{x} \in T$ is created, the process of covering the front
is similar to that of creating every search point $y$ with $\strr{y} \in C$ and $\strl{y}=0^{n/2}$ in Phase 4, with only minor differences (e.g. we have to consider only $n/2$ optimization steps instead of $n-1$ many where we flip a specific bit in the first half of the string).  
The expected number of fitness evaluations to finish this phase is at most $O \left(\frac{n^3}{1-p_c} \right)$.

Summing up the expected times of all the phases gives an upper bound $O(\frac{1}{p_c} \cdot n^3 + \frac{1}{1-p_c} \cdot n^3)$ on the number of generations to optimize $\uRRRMO$.
\newedit{Noting $\frac{1}{p_c} + \frac{1}{1-p_c} = \frac{1-p_c}{p_c(1-p_c)} + \frac{p_c}{p_c(1-p_c)} = \frac{1}{p_c(1-p_c)}$ completes the proof.}
\end{proof}

\subsubsection{Analysis of NSGA-II}\label{sec:nsga-ii-2}

\newedit{We now turn to the analysis of NSGA-II. Here we deal with a population size of $\mu=5n$ since we have at most $n$ non-dominated solutions.}

\begin{theorem}\label{thm:uRRMO-nsgaii}
NSGA-II~(Algorithm~\ref{alg:nsga-ii}) with $p_c \in (0,1)$ and $\mu \geq 5n$ finds a Pareto-optimal set of
\newedit{any function of the class \uRRRMOc}
in
$O\left(\frac{\mu n}{p_c} + \frac{n^2}{1-p_c} \right)$ generations, or
$O\left(\frac{\mu^2 n}{p_c} + \frac{\mu n^2}{1-p_c} \right)$ fitness evaluations,
in expectation.
\end{theorem}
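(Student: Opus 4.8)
The plan is to mirror the proof of Theorem~\ref{thm:uRRMO-gsemo} for GSEMO, again using the method of typical runs with six phases, but adapting the selection arguments to NSGA-II's binary tournament selection and to the fact that $\mu$ offspring are produced per generation. The population size $\mu\ge 5n$ is chosen so that $\mu\ge 4m$ for the bound $m\le n$ on the number of distinct non-dominated fitness vectors; by Lemma~\ref{lem:size-non-dom-set2} this holds throughout the run once a positive-fitness point has been found (applied, when needed, to the set of distinct search points in the first layer $F_t^1$, as in the proof of Theorem~\ref{thm:nsga-ii}). Lemma~\ref{lem:nsga-ii-protect-layer} then provides the two facts used repeatedly: (i) at most $4m\le 4n$ individuals of $F_t^1$ have positive crowding distance, so a fixed positive-crowding-distance individual wins a binary tournament with probability $\Omega(1/\mu)$ (its opponent lies outside the positive-crowding-distance set with probability $\ge(\mu-4n)/\mu\ge 1/5$) and hence is the outcome of at least one of the two tournaments forming an offspring pair with probability $\Omega(1/\mu)$ by Lemma~\ref{lem:lambda-trials}; and (ii) every non-dominated fitness vector represented with positive crowding distance in $F_t^1$ survives into $P_{t+1}$, so progress is never lost.

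Phase~1 (create $x$ with $\uRRRMO(x)\neq(0,0)$) is easier than for GSEMO: a Chernoff and union bound over the $\mu$ independent random initial points shows that with probability $1-2^{-\Omega(n^2)}$ at least one of them has $\strl{x}\in\subsetU$ and hence positive fitness, and such an individual is never removed by elitism; in the exceptional case a single mutation creates such a point with probability $\ge n^{-n}$, contributing only $2^{-\Omega(n^2)}\cdot n^{n}=o(1)$ to the expected time. Phases~2--4 and~6 are the ``$1/(1-p_c)$'' phases and follow the corresponding GSEMO phases. In each generation we identify one useful parent in $F_t^1$ with positive crowding distance --- in Phase~2 a maximiser of $\max\{\LO(\strr{x})+\TZ(\strr{x}),\,\LZ(\strr{x})+\TOs(\strr{x})\}$, in Phase~3 a minimiser of $\ones{\strl{x}}$ among search points with $\strr{x}\in\subsetC$, in Phases~4 and~6 an endpoint of the still-uncovered part of the circle $\subsetC$ (resp.\ of the $\subsetT$-front) --- such that skipping crossover and performing one specific (type of) bit flip makes progress, and we argue this parent indeed lies in $F_t^1$ with positive crowding distance (it is non-dominated, and each of its fitness vectors has a positive-crowding-distance copy in $F_t^1$ by Lemma~\ref{lem:nsga-ii-protect-layer}). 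Combining the $\Omega(1/\mu)$ selection probability, the factor $1-p_c$ for omitting crossover, the $\Omega(1/n)$ (or $\Omega(i/n)$) probability of the required mutation, and the amplification over the $\mu/2$ pairs produced per generation via Lemma~\ref{lem:lambda-trials} --- which removes one factor of $\mu$ --- each improving step costs $\bigO(n/(1-p_c))$ generations in expectation (or $\bigO(n/((1-p_c)i))$ for a step whose probability scales with $i$). Summing over the $\bigO(n)$ steps of each phase gives $\bigO(n^2/(1-p_c))$ generations for Phases~2,~4,~6 and $\bigO(n\log n/(1-p_c))$ for Phase~3, and Lemma~\ref{lem:nsga-ii-protect-layer}(ii) ensures each newly created circle point, resp.\ Pareto-optimal point, is retained.

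Phase~5 is the crossover phase and produces the $\mu n/p_c$ term. After Phase~4 the population contains all $n$ circle points with $\strl{x}=0^{n/2}$, each represented with positive crowding distance. A pair of parents with complementary right halves is obtained when $p_1$ is one of these $n$ circle points (probability $\Omega(n/\mu)$) and $p_2$ is the unique circle point whose right half is the bitwise complement of $\strr{p_1}$ (probability $\Omega(1/\mu)$, since that fitness has a positive-crowding-distance copy in $F_t^1$); conditioning on crossover (probability $p_c$), uniform crossover makes each of the four blocks of $\strr{z}$ balanced with probability $\Theta(n^{-1/2})$ and hence $z\in\subsetW$ with probability $\Theta(n^{-2})$, after which mutation flips no bit with probability $(1-1/n)^n\ge 1/4$. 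The resulting per-pair success probability is $\Omega(p_c/(\mu^2 n))$, so amplifying over $\mu/2$ pairs via Lemma~\ref{lem:lambda-trials} yields per-generation success probability $\Omega(p_c/(\mu n))$ and hence $\bigO(\mu n/p_c)$ generations. Adding the phase bounds gives $\bigO(\mu n/p_c+n^2/(1-p_c))$ generations, and multiplying by the $\mu$ fitness evaluations per generation gives the claimed $\bigO(\mu^2 n/p_c+\mu n^2/(1-p_c))$ bound on fitness evaluations.

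\textbf{Expected main obstacle.} The delicate parts are Phase~5 --- cleanly bounding the probability of selecting a complementary pair of circle parents under binary tournament selection when only a fraction of the population has positive crowding distance and the remaining population slots may be occupied by (weakly) dominated individuals --- together with the bookkeeping in Phases~2--3 needed to show that the chosen maximiser/minimiser parent actually belongs to $F_t^1$ with positive crowding distance and that $m\le n$ (hence $\mu\ge 4m$) remains valid for the whole run.
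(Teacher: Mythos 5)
Your proposal is correct and follows essentially the same route as the paper's proof: the same six phases, the same use of Lemma~\ref{lem:size-non-dom-set2} and Lemma~\ref{lem:nsga-ii-protect-layer} to get the $\Omega(1/\mu)$ tournament-win probability and survival of progress, the same amplification over $\mu/2$ offspring pairs via Lemma~\ref{lem:lambda-trials}, and the same complementary-circle-point crossover argument in Phase~5 with per-pair success probability $\Omega(p_c/(\mu^2 n))$. All phase bounds match the paper's.
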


\begin{proof}
As in the proof of Theorem~\ref{thm:nsga-ii} we use the method of typical runs and divide the optimisation procedure into several phases. Since by Lemma \ref{lem:size-non-dom-set2} there are at most $n$ non-dominated search points with respect to $\uRRRMO$ we see with Lemma \ref{lem:nsga-ii-protect-layer} that there can be at most $4n \leq (4/5) \mu$ individuals in $F_t^1$ with positive crowding distance.
Therefore at least $\mu/5$ individuals are either in $F_t^1$ and have zero crowding distance or are in a layer below $F_t^1$.

\textbf{Phase 1:} Create a search point $x$ with $\uRRRMO(x) \neq 0$.

\newedit{By a Chernoff bound the probability that every initial individual $x$
	fulfills $\ones{\strl{x}^i} \notin [n/12,n/24]$ for $i \in [4]$ is at most $2^{-\mu \Omega(n)}$. If this happens, 
	the probability of creating a specific individual
	by mutation is at least $n^{-n}$, regardless of the input
	solution and the
	preceding crossover.
	By the law of total probability, the expected number of evaluations to
	obtain a search point with fitness $f(x)$ is at most
	$
	\mu + 2^{-\mu\Omega(n)}n^n
	= \mu + 2^{- \Omega(n^2)}\cdot n^n
	= \mu + o(1)
	$, 
	and these are $1+o(1)$ generations.}

\textbf{Phase 2:} Create a search point $x$ with $\strr{x} \in C$.

Suppose that there is no such search point $x$ \brandnewedit{in the population}. Let $k_1,k_2$ and $m$ \andre{I introduced also $m$ here!} be as in Phase 3 in the proof for GSEMO. Note that $m$ cannot decrease, since there are at most $n$ non-dominated solutions and a search point with $\LO(\strr{y})+\TZ(\strr{y})=k_1$ ($\LZ(\strr{y}) + \TOs(\strr{y}) \geq k_2$) can only be dominated by a search point $z$ with $\LO(\strr{z})+\TZ(\strr{z}) \geq k_1$ ($\LZ(\strr{z}) + \TOs(\strr{z}) \geq k_2$).
Suppose that $m=k_1$. As in the proof for GSEMO, the case ''$m=k_2$'' is handled similarly. Let $y$ be an individual with positive crowding distance, $\LO(\strr{y})+\TZ(\strr{y}) = k_1$ if $m=k_1$ or $\LZ(\strr{y})+\TOs(\strr{y})=k_2$ if $m=k_2$. If $y$ appears as the first competitor in a binary tournament (which happens with probability $1/\mu$), and the second competitor is from $F_t^1$ and has zero crowding distance or is a layer below $F_t^1$ (which happens with probability at least $1-\frac{4n}{5n} = 1/5$ as remarked at the beginning of the proof of this theorem), then $y$ wins the tournament. The same holds for swapping the roles of the first and second competitor. Therefore we have with probability at least $\frac{2}{5 \mu}$ that $y$ is the outcome of one tournament (since the multiset of individuals with positive crowding distance is disjoint to the multiset of individuals which are either below $F_t^1$ or from $F_t^1$ with zero crowding distance).
Since there are two tournaments in generating a pair of offspring we obtain that the probability for generating $y$ as an outcome of one of the two tournaments is at least $1-(1-\frac{2}{5 \mu})^2 \geq \frac{4/(5 \mu)}{4/(5 \mu) + 1} = \frac{4}{4+5\mu}$ \newedit{by Lemma~\ref{lem:lambda-trials}}.
Thus to increase $k_1$ it suffices to generate $y$ as such an outcome, omit crossover and flip a specific bit to one while keeping the other bits unchanged. Therefore, with probability at least $\frac{4}{4+5\mu} \cdot (1-p_c) \cdot \frac{1}{n} \cdot (1-1/n)^{n-1}:=b$, one of the offspring $s$ from $\{s_1',s_2'\}$ has a larger $(\LO(\strr{s})+\TZ(\strr{s}))$-value than $m$ if $m=k_1$ (larger $(\LZ(\strr{s})+\TOs(\strr{s}))$-value than $m$ if $m=k_2$). This reproduction process is repeated $\mu/2$ times, so the chance of increasing $m$ in one generation is at least $1-(1-b)^{\mu/2} \geq \frac{b \mu/2}{b \mu/2+1}$ \newedit{(also by Lemma~\ref{lem:lambda-trials})}. Since we have to increase $m$ at most $n/2-1$ times, the expected number of generations is no more than
\[
\left(\frac{n}{2}-1 \right)\left(1+\frac{1}{b \mu/2}\right) = \frac{n}{2}-1+\frac{2n(4+5\mu)}{4\mu \cdot (1-p_c) \cdot (1-1/n)^{n-1}} \cdot \left(\frac{n}{2}-1\right) = O \left(\frac{n^2}{1-p_c} \right).
\]

\textbf{Phase 3:} Create a search point $x$ with $\strr{x} \in C$ and $\strl{x}=0^{n/2}$. 

We pessimistically assume that there is no individual with these properties \newedit{in the current population}. We say that individual $x \in P_t$ has property $\mathcal{Q}_i$ for $i \in \{0, \ldots, m\}$ if $\strr{x} \in C$ and $\vert{\strl{x}}\vert_1=i$. Let $i>0$ be minimal such that there is an individual $x \in P_t$ with property $\mathcal{Q}_i$. Then $x \in F_t^1$ and thus at least one of the outcomes of the two binary tournaments has property $\mathcal{Q}_i$ if $x$ is picked as a competitor and the second competitor is in a layer below $F_t^1$ or has zero crowding distance. Similar as in Phase 2, this happens with probability at least $\frac{4}{4+5\mu}$. To generate an individual with property $\mathcal{Q}_j$ for $j<i$ from such an outcome it suffices to omit crossover and flip a $1$-bit in the first half of the string to zero while keeping the remaining bits unchanged. Therefore the probability is at least $r_i:=\frac{4}{4+5\mu} \cdot (1-p_c) \cdot \frac{i}{n} \cdot (1-1/n)^{n-1}$ that one of the offspring $\{s_1',s_2'\}$ has property $\mathcal{Q}_j$ for $j<i$. Since the reproduction process is repeated $\mu/2$ times we get no more than
\[
\sum_{i=1}^{n/2-1} \left(1+\frac{1}{r_i \mu/2}\right) = \sum_{i=1}^{n/2-1} \left(1+\frac{2n(4+5\mu)}{4 \cdot i \cdot \mu \cdot (1-p_c) \cdot (1-1/n)^{n-1}}\right) = O \left(\frac{n \log(n)}{1-p_c} \right)
\]
for the expected number of generations.

\textbf{Phase 4:} 
Create every search point $x$ with $\strr{x} \in C$ and $\strl{x}=0^{n/2}$.

Suppose that $C$ is not completely covered by search points $x$ with $\strr{x} \in C$ and $\strl{x}=0^{n/2}$. Let $y$ and $z$ be defined as in Phase 4 in the proof of Theorem 1. Additionally assume that $y$ has a positive crowding distance in $F_t^1$. As in the previous phases the probability that $y$ is the outcome of a binary tournament is at least $\frac{4/(5 \mu)}{4/(5 \mu) + 1} = \frac{4}{4+5\mu}$. Thus the probability of generating $z$ is at least $a:=\frac{4}{4 + 5\mu} \cdot (1-p_c) \cdot 1/n \cdot (1-1/n)^{n-1} \geq \frac{4}{4 + 5\mu} \cdot (1-p_c) \cdot 1/n \cdot 1/e$ during the creation of the pair of offspring. The success probability for $\mu/2$ pairs is then at least $1-(1-a)^{\mu/2} \geq \frac{a \mu/2}{a \mu/2 + 1}$ \newedit{by Lemma~\ref{lem:lambda-trials}}. Therefore the expected number of generations to complete this phase is no more than $(n-1) \cdot (1 + \frac{1}{a \mu/2})$ which is
\[
n-1 + \frac{2en(n-1)(4+5\mu)}{4\mu(1-p_c)} = O \left(\frac{n^2}{1-p_c} \right).
\]

\textbf{Phase 5:} Create a search point $x \in \subsetW$. 

Note that $P_t$ contains all search points $x$ with $\strl{x}=0^{n/2}$ and $\strr{x} \in C$ and they are in $F_t^1$. Note that $F_t^1$ consists only of search points of this form. There are at least $n$ solutions $x$ with $\strl{x}=0^{n/2}$, $\strr{x} \in C$ and positive crowding distance in $P_t$, thus one of them wins the tournament for selecting $p_1$ with probability at least $\frac{2n}{5 \mu}$. Then it suffices to select a specific solution in $P_t$ with positive crowding distance as $p_2$, which happens with probability at least $\frac{2}{5 \mu}$. (This could be a solution $y$ with $\strl{y}=0^{n/2}$, $\strr{y} \in C$, maximal Hamming distance to $p_1$ and positive crowding distance). Then apply uniform crossover on $p_1$ and $p_2$ to generate an individual $z=(\strl{z},\strr{z})$ with $\strl{z}=0^{n/2}$ (i.\;e.\ $\LO(\strl{z})+\TZ(\strl{z}) = n/2$) and $\strr{z} \in T$ (which happens with probability at least $p_c \cdot \left(\binom{n/8}{n/16} (1/2)^{n/8} \right)^4$ as in the proof in Phase 5 for GSEMO) and do not flip any bit during the mutation step. Therefore we have a probability of at least
\[
w:=p_c \cdot \frac{4}{25} \cdot \frac{n}{\mu^2}  \cdot \left(\binom{n/8}{n/16} (1/2)^{n/8} \right)^4 \cdot (1-1/n)^n = \Omega \left(\frac{p_c}{\mu^2 n} \right)
\]
that an offspring $x$ with $\strr{x} \in T$ and $\LO(\strl{x})+\TZ(\strl{x})=n/2$ (i.\;e.\ $x \in \subsetW$) is generated during the creation of a pair of offspring. \newedit{By Lemma~\ref{lem:lambda-trials}} at least one success occurs in a generation with probability at least $1-(1-w)^{\mu/2} \geq \frac{w\mu/2}{w\mu/2+1}$. Thus $1+\frac{2}{\mu w} = O(\frac{\mu n}{p_c})$ generations are sufficient in expectation to finish this phase.

\textbf{Phase 6:} Find a Pareto-optimal set.

Once a search point $x$ with $\LO(\strl{x})+\TZ(\strl{x})=n/2$ and $\strr{x} \in T$ is created, the process of covering the front is similar to that of creating every search point $y$ with $\strr{y} \in C$ and $\strl{y}=0^{n/2}$ in Phase 4 with only minor differences (e.g. we have to consider only $n/2$ optimization steps instead of $n-1$ many where we flip a specific bit in the first half of the string while keeping the rest of the bits unchanged). The expected number of generations to finish this phase is at most $O \left(\frac{n^2}{1-p_c} \right)$.

We see that the expected number of generations for finding a global optimum in total is $O(\frac{\mu n}{p_c} + \frac{n^2}{1-p_c})$ by adding the run times of the single phases. 
We obtain the expected number of fitness evaluations by multiplying the expected number of generations with $2 \mu$.
\end{proof}

\section{Conclusions}\label{sec:conclude}
\newedit{We have identified the function classes \RRRMO and \uRRRMOc as examples
on which EMO algorithms GSEMO and NSGA-II using crossover and standard bit
mutation can find a whole Pareto set in expected polynomial time,
\ie $\bigO(n^4)$ for \RRRMO and $\bigO(n^3)$ for \uRRRMOc,
with any constant crossover probability $p_c \in (0, 1)$.
For NSGA-II, these results hold for a linear population size, specifically
we require $\mu \geq 2n+5$
for \RRRMO with one-point crossover
and $\mu \geq 5n$
for \uRRRMOc with uniform crossover.
More generally, the function classes can be optimised in expected
polynomial time if $1/p_c$, $1/(1-p_c)$ and $\mu$ are polynomials in~$n$.
We have shown that crossover is a vital operator on these functions, as
simply finding any Pareto-optimal point requires exponential
expected time for GSEMO and NSGA-II when disabling crossover ($p_c=0$). This generalises to all elitist black-box algorithms, even when allowing arbitrary unbiased mutation operators.}


While previous work has mostly
used crossover to speed up filling the Pareto set~\citep{Qian2011,Qian2013,Bian2022PPSN},
our work shows that crossover can also be essential for discovering the Pareto
set in the first place. Another novel aspect is the consideration of
hypermutation (which was only considered for \brandnewedit{GSEMO} in~\cite{Huang2021b}) and
we showed that for \brandnewedit{\uRRRMOc} allowing the use of hypermutation does not avoid
exponential optimisation times. The same holds when using GSEMO with
hypermutation as the only mutation operator 
\brandnewedit{on \RRRMO, but}
it remains an open problem whether this also holds for NSGA-II.

We are hopeful that our results and the proofs may serve as stepping stones towards a
better understanding of the role of crossover in EMO, \brandnewedit{and that this paves the way for identifying more natural examples 
    where crossover is beneficial, in the same way that this was achieved for single-objective optimisation.}



\section*{Acknowledgments}
The authors thank Bahare Salehi for her contributions to the preliminary version of this article~\citep{Dang2023}.
This work benefited from discussions at Dagstuhl seminar~22081.

\bibliographystyle{abbrvnat}
\bibliography{references}

\end{document}